\newcommand{\rhog}{{$\rho$G}}
\newtheorem{definition}{Definition}
\newtheorem{proposition}{Proposition}
\begin{document}

\title{RHOG (\rhog): A Refinement-Operator Library\\ for Directed Labeled Graphs\thanks{This research was partially supported by NSF award IIS-1551338.}}


\author{Santiago Onta\~{n}\'{o}n \\ 
		Department of Computer Science, \\ Drexel University, Philadelphia, USA \\ \textsf{\em santi@cs.drexel.edu}}




\maketitle

\begin{abstract}
This document provides the foundations behind the functionality provided by the \rhog\ library\footnote{\url{https://github.com/santiontanon/RHOG}}, focusing on the basic operations the library provides: subsumption, refinement of directed labeled graphs, and distance/similarity assessment between directed labeled graphs. \rhog\ development was initially supported by the National Science Foundation, by the EAGER grant IIS-1551338.

\end{abstract}

\newpage

\tableofcontents


\newpage
\section{Preliminaries and Definitions}


\subsection{Notation Convention}

In the remainder of this document, we have used the following notation convention:

\begin{itemize}
\item We use capital letters to represent sets, e.g., $V$, and lower case letters to represent the elements of those sets, e.g.: $V = \{v_1, v_2, v_3\}$.
\item We use curly braces to represent sets, e.g.: $\{v_1, v_2, v_3\}$.
\item We use square brackets to represent ordered sequences, e.g.: $[v_1, v_2, v_3, v_1, v_2]$ (notice that, unlike a set, an ordered sequence might contain an element more than once).
\item We use the power notation $2^V$ to represent the set of all possible subsets of a given set $V$.
\item We use the regular expression notation $V^*$ to represent the set of all possible sequences made out of elements of a set $V$.
\end{itemize}

\subsection{Directed Labeled Graphs}

\begin{definition}[Directed Labeled Graph]
Given a finite set of labels $L$, a directed labeled graph $g$ is defined as a tuple $g = \langle V, E, l \rangle$, where:
\begin{itemize}
\item $V = \{v_1, ..., v_n\}$ is a finite set of vertices.
\item $E = \{(v_{i_1}, v_{j_1}), ..., (v_{i_m}, v_{j_m})\}$ is a finite set of edges.
\item $l : V \cup E \to L$, is a function that assigns a label from $L$ to each vertex and edge.
\end{itemize}
\end{definition}

\begin{definition}[Directed Path]
Given a DLG $g = \langle V, E, l \rangle$, we say that there is a directed path from a vertex $v_1 \in V$ to another vertex $v_2 \in V$ when there is a sequence of vertices $[w_1, ..., w_k]$, such that $w_1 = v_1$, $w_k = v_2$, and $\forall 1 \leq j < k : (w_j, w_{j+1}) \in E$.
\end{definition}

\begin{definition}[Undirected Path]
Given a DLG $g = \langle V, E, l \rangle$, we say that there is an undirected path from a vertex $v_1 \in V$ to another vertex $v_2 \in V$ when there is a sequence of vertices $[w_1, ..., w_k]$, such that $w_1 = v_1$, $w_k = v_2$, and $\forall 1 \leq j < k : (w_j, w_{j+1}) \in E \,\, \vee \,\, (w_{j+1}, w_{j}) \in E$.
\end{definition}

Notice that the existence of a directed path implies the existence of an undirected path (since every directed path is also an undirected path). Moreover, when we do not care whether a path is directed or undirected, we will just say ``a path exists'' between two given vertices.

\begin{definition}[Connected DLG]
A directed labeled graph (DLG) $g = \langle V, E, l \rangle$ is {\em connected} when given any two vertices $v_1, v_2 \in V$ there is a path from $v_1$ to $v_2$.
\end{definition}

\begin{definition}[Bridge]\label{def:bridge}
An edge $e \in E$ is a {\em bridge} of a connected DLG $g = \langle V, E, l \rangle$ if the graph resulting from removing $e$ from $g$, $g' = \langle V, E \setminus \{e\}, l \rangle$ is not connected. We will use $\mathit{bridges(g)}$ to denote the set of edges in a graph $g$ that are bridges.
\end{definition}

In the remainder of this document, unless otherwise noted, we will only be interested in connected DLGs. Moreover, we will consider two types of DLGs:
\begin{itemize}
\item {\em Flat-labeled DLGs (FDLG)}: when the set of labels $L$ is a plain set without any relation between the different labels.
\item {\em Order-labeled DLGs (ODLG)}: when the set of labels $L$ is a partially ordered set via a partial-order $\preceq$ such that for any three elements $a, b, c \in L$, we have that:
	\begin{itemize}
	\item $a \preceq a$,
	\item $a \preceq b \,\, \wedge \,\, b \preceq a \implies a = b$,
	\item $a \preceq b \,\, \wedge \,\, b \preceq c \implies a \preceq c$, and
	\item There is a special element $\top \in L$ such that $\forall a \in L : \top \preceq a$.
	\end{itemize}
	When $a \preceq b$ and $b \not\preceq a$, we write $a \prec b$.
\end{itemize}

Intuitively the partial order $\langle L, \preceq \rangle$ can be seen as a multiple-inheritance concept hierarchy with a single top label $\top$ that is more general than all the other labels.

\subsection{Relations Between Graphs}\label{sec:graph-relations}

Let us introduce a collection of relations between directed labeled graphs, which will be used in the remainder of this document. From more to less restrictive:

\begin{definition}[Subgraph/Supergraph]
Given two DLGs, $g_1 = \langle V_1, E_1, l \rangle$ and $g_2 = \langle V_2, E_2, l \rangle$, $g_1$ is said to be a {\em subgraph} of $g_2$ if $V_1 \subseteq V_2$ and $E_1 \subseteq E_2$. We write $g_1 \subseteq g_2$. We will call $g_2$ a {\em supergraph} of $g_1$.
\end{definition}

\begin{definition}[Subsumption]\label{def:subsumption}
Given two DLGs, $g_1 = \langle V_1, E_1, l_1 \rangle$ and $g_2 = \langle V_2, E_2, l_2 \rangle$, $g_1$ is said to {\em subsume} $g_2$ (we write $g_1 \sqsubseteq g_2$) if there is a mapping $m: V_1 \to V_2$ such that:
\begin{itemize}
\item $\forall (v, w) \in E_1: (m(v),m(w)) \in E_2$, 
\item $\forall v \in V_1: l_1(v) = l_2(m(v))$, and
\item $\forall (v, w) \in E_1:  l_1((v,w)) = l_2((m(v),m(w)))$. 
\end{itemize}
\end{definition}

\begin{definition}[Subsumption Relative to $\preceq$]\label{def:subsumption-po}
Given two order-labeled DLGs, $g_1 = \langle V_1, E_1, l_1 \rangle$ and $g_2 = \langle V_2, E_2, l_2 \rangle$, and $\preceq$, the partial order among the labels in $L$, $g_1$ is said to {\em subsume} $g_2$ relative to $\preceq$ (we write $g_1 \sqsubseteq_{\preceq} g_2$) if there is a mapping $m: V_1 \to V_2$ such that:
\begin{itemize}
\item $\forall (v, w) \in E_1: (m(v),m(w)) \in E_2$,
\item $\forall v \in V_1: l_1(v) \preceq l_2(m(v))$, and
\item $\forall (v, w) \in E_1:  l_1((v,w)) \preceq l_2((m(v),m(w)))$. 
\end{itemize}
\end{definition}

Notice that the mapping $m$ between vertices, induces a mapping between edges. So, when $e = (v, w) \in V_1$, we will write $m(e)$ to denote $(m(v), m(w))$.

\begin{definition}[Transitive Subsumption (trans-subsumption)]\label{def:trans-subsumption}
Given two DLGs, $g_1 = \langle V_1, E_1, l_1 \rangle$ and $g_2 = \langle V_2, E_2, l_2 \rangle$, $g_1$ is said to subsume transitively ({\em trans-subsume}) $g_2$ (we write $g_1 \overrightarrow{\sqsubseteq} g_2$) if there is a mapping $m: V_1 \to V_2$, and another mapping $m_e: E_1 \to E_2^*$ ($m_e$ maps edges in $g_1$ to sequences of edges in $g_2$) such that:
\begin{itemize}
\item $\forall v \in V_1: l_1(v) = l_2(m(v))$, and
\item $\forall e = (v_1,v_2) \in E_1: m_e(e) = [e_1 = (w_1, w_2), e_2 = (w_2, w_3), ..., e_k = (w_k, w_{k+1})]$ such that:
	\begin{itemize}
	\item $m(v_1) = w_1$, 
	\item $m(v_2) = w_{k+1}$,
	\item $\forall 1 \leq i \leq k: e_i \in E_2$, and
	\item $\forall 1 \leq i \leq k: l_1(e) = l_2(e_i)$
	\end{itemize}
\end{itemize}
\end{definition}

\begin{definition}[Trans-subsumption Relative to $\preceq$]\label{def:trans-subsumption-po}
Given two order-labeled DLGs, $g_1 = \langle V_1, E_1, l_1 \rangle$ and $g_2 = \langle V_2, E_2, l_2 \rangle$, and $\preceq$, the partial order among the labels in $L$, $g_1$ is said to {\em trans-subsume} $g_2$ relative to $\preceq$ (we write $g_1 \overrightarrow{\sqsubseteq}_{\preceq} g_2$) if there is a mapping $m: V_1 \to V_2$, and another mapping $m_e: E_1 \to V_2^*$ ($m_e$ maps edges in $g_1$ to sequences of vertexes in $g_2$) such that:
\begin{itemize}
\item $\forall v \in V_1: l_1(v) \preceq l_2(m(v))$, and
\item $\forall e = (v_1,v_2) \in E_1: m_e(e) = [e_1 = (w_1, w_2), e_2 = (w_2, w_3), ..., e_k = (w_k, w_{k+1})]$ such that:
	\begin{itemize}
	\item $m(v_1) = w_1$, 
	\item $m(v_2) = w_{k+1}$,
	\item $\forall 1 \leq i \leq k: e_i \in E_2$, and
	\item $\forall 1 \leq i \leq k: l_1(e) \preceq l_2(e_i)$
	\end{itemize}
\end{itemize}
\end{definition}

\begin{figure}[tb]
    \centering
    \includegraphics[width=0.75\columnwidth]{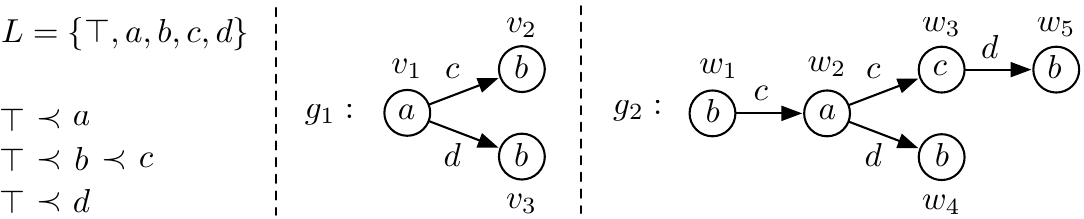}
    \caption{{\bf Subsumption} vs {\bf Subsumption relative to $\preceq$}: Two graphs, $g_1 = \langle V_1 = \{v_1, v_2, v_3\}, E_1 = \{(v_1, v_2), (v_2, v_3)\}, l_1\rangle$, and $g_2 = \langle V_2 = \{w_1, w_2, w_3, w_4, w_5\}, E_2 = \{(w_1, w_2), (w_2, w_3), (w_2, w_4), (w_3, w_5)\}, l_2\rangle$. The left hand side of the figure shows the set of labels $L$, and the partial-order $\preceq$. In this case, $g_1 \not \sqsubseteq g_2$, but $g_1 \sqsubseteq_\preceq g_2$ via the mapping $m(v_1) = w_2$, $m(v_2) = w_3$ and $m(v_3) = w_4$, since the label $b$ is more general than the label $c$, and thus $v_2$ can be mapped to $w_3$.}
    \label{fig:subsumption1}
\end{figure}

Intuitively, the {\em subgraph} relationship is satisfied when one graph contains a subset of the edges and vertices of another one. The {\em subsumption} relationship generalizes the subgraph relationship not requiring the vertices and edges to be the same exact ones, but just that there is a mapping through which a given graph can be turned into a subgraph of the other. {\em Subsumption relative to $\preceq$} is an even more general relation when the subgraph is not required to actually have the same labels as the vertices in the supergraph, but just have labels that are smaller according to $\preceq$ than in the supergraph. The intuition behind this is that if labels represent concepts such as {\em vehicle} and {\em car}, and the $\preceq$ relation captures concept generality ({\em vehicle} $\preceq$ {\em car}), then the subgraph can have concepts that are more general or equal to those of the supergraph (i.e., a vertex labeled as {\em vehicle} in the subgraph can be mapped to a vertex labeled as {\em car} in the supergraph, since {\em vehicle} is a more general concept than {\em car}). The difference between subsumption and subsumption relative to $\preceq$ is illustrated in Figure \ref{fig:subsumption1}.

\begin{figure}[tb]
    \centering
    \includegraphics[width=0.75\columnwidth]{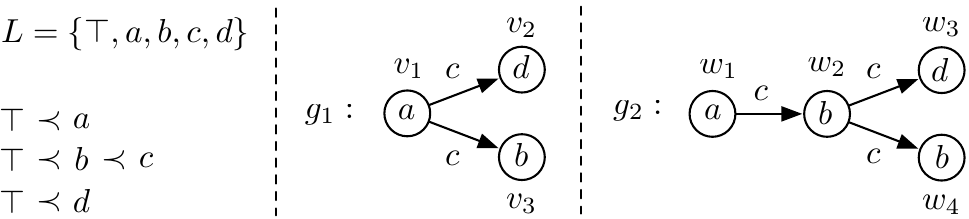}
    \caption{{\bf Subsumption} vs {\bf Trans-subsumption}: Two graphs, $g_1 = \langle V_1 = \{v_1, v_2, v_3\}, E_1 = \{(v_1, v_2), (v_2, v_3)\}, l_1\rangle$, and $g_2 = \langle V_2 = \{w_1, w_2, w_3, w_4\}, E_2 = \{(w_1, w_2), (w_2, w_3), (w_2, w_4)\}, l_2\rangle$. The left hand side of the figure shows the set of labels $L$, and the partial-order $\preceq$. In this case, $g_1 \not \sqsubseteq g_2$, but $g_1 \protect\overrightarrow{\sqsubseteq} g_2$ via the mapping $m(v_1) = w_1$, $m(v_2) = w_3$ and $m(v_3) = w_4$, the intuition here is that if label $c$ represents the concept ``descendant'', then $g_1$ is an individual $a$ with two descendants, one with label $b$, and one $d$, which is also the case in $g_2$, and thus $g_1$ trans-subsumes $g_2$.}
    \label{fig:subsumption2}
\end{figure}

{\em Trans-subsumption} generalizes the concept of subsumption by allowing an edge in the subgraph to span a chain of edges in the supergraph, as long as they all have the same label. This is useful when edges represent relations in the graph that are transitive (such as {\em descendant}). Figure \ref{fig:subsumption2} illustrates the difference between regular subsumption and trans-subsumption. Finally, {\em Trans-subsumption Relative to $\preceq$} is the most lax relation between graphs, combining trans-subsumption with a label partial order $\preceq$. In the remainder of this document, we will some times use the term ``regular subsumption'' to refer to subsumption as defined by Definitions \ref{def:subsumption} or \ref{def:subsumption-po}, as for distinguishing it from trans-subsumption.

Given a subsumption relation $\sqsubseteq$, we say that two graphs are equivalent, $g_1 \equiv g_2$ if $g_1 \sqsubseteq g_2$ and $g_1 \sqsubseteq g_2$. Moreover, if $g_1 \sqsubseteq g_2$ but $g_2 \not\sqsubseteq g_1$, then we write $g_1 \sqsubset g_2$.

\subsubsection{Object Identity}
An additional concept related to subsumption is that of {\em object identity} (OI) \cite{ferilli02aiiaa}, which is an additional constraint on the mapping $m$ employed for subsumption. The intuition behind object identity is that ``objects denoted with different symbols must be distinct''. 

When applied to subsumption ($\sqsubseteq$ or $\sqsubseteq_{\preceq}$) over graphs this translates to an additional constraint over the mapping $m: V_1 \to V_2$, namely that $v_1 \neq v_2 \implies m(v_1) \neq m(v_2)$.

When adding object identity to trans-subsumption ($\overrightarrow{\sqsubseteq}$ or $\overrightarrow{\sqsubseteq}_{\preceq}$), object identity translates to a slightly more elaborate constraint over the mappings $m$, and $m_e$. Specifically, we need to ensure that:
\begin{itemize}
\item $v_1 \neq v_2 \implies m(v_1) \neq m(v_2)$
\item $\forall v_1 \in V_1, (v_2, v_3) \in E_1$, if $v_1 \neq v_2$ and $v_1 \neq v_3$ then $\nexists e = (w_1, w_2) \in m_e((v_2,v_3))$ s.t. $m(v_1) = w_1 \vee m(v_1) = w_2$ (i.e., when an edge in $g_1$ is mapped to a path in $g_2$, no other vertex in $g_1$ can be mapped to any of the vertices in that path).
\end{itemize}

\begin{figure}[tb]
    \centering
    \includegraphics[width=0.55\columnwidth]{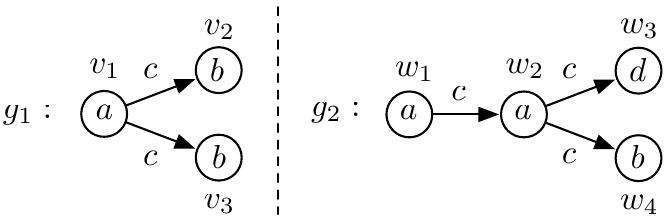}
    \caption{{\bf Object Identity}: If we enforce object identity $g_1 \not \sqsubseteq g_2$. However, if we do {\bf not} enforce object identity, then $g_1 \sqsubseteq g_2$ via the mapping $m(v_1) = w_2$, $m(v_2) = w_4$ and $m(v_3) = w_4$. Here, since no object identity is enforced, two different vertices of $g_1$ ($v_2$ and $v_3$) have been mapped to the same vertex in $g_2$ ($w_4$).}
    \label{fig:subsumption3}
\end{figure}

\subsection{Unification and Anti-unification}

The subsumption relation induces two basic operations over graphs: {\em unification} and {\em anti-unification}.

\begin{definition}[Unification]\label{def:unification}
Given two graphs $g_1$, and $g_2$, and a subsumption relation $\sqsubseteq$ (which can be any of the ones defined above), $g$ is a {\em unifier} of $g_1$ and $g_2$ (we write $g = g_1 \sqcup g_2$) if:
\begin{itemize}
\item $g_1 \sqsubseteq g$,
\item $g_2 \sqsubseteq g$,
\item and $\nexists g' \sqsubset g: g_1 \sqsubseteq g' \,\, \wedge \,\, g_2 \sqsubseteq g'$.
\end{itemize}
\end{definition}

In other words, a unifier of two graphs is a most general graph that is subsumed by two other graphs. The analogous operation is that of {\em anti-unification}.

\begin{definition}[Anti-unification]
Given two graphs $g_1$, and $g_2$, and a subsumption relation $\sqsubseteq$ (which can be any of the ones defined above), $g$ is an {\em anti-unifier} of $g_1$ and $g_2$ (we write $g = g_1 \sqcap g_2$) if:
\begin{itemize}
\item $g \sqsubseteq g_1$,
\item $g \sqsubseteq g_2$,
\item and $\nexists g' \sqsupset g: g' \sqsubseteq g_1 \,\, \wedge \,\, g' \sqsubseteq g_2$.
\end{itemize}
\end{definition}

In other words, an anti-unifier of two graphs is the most specific graph that subsumes both of them. 

Moreover, we will use the term {\em unifier} to denote a graph that satisfies Definition \ref{def:unification}, and the term {\em unification} to refer to the operation (i.e., the algorithm) that generates one more more unifiers given two input terms. We will use the terms {\em anti-unifier} and {\em anti-unification} analogously. Finally, if we see the subsumption order as a partial order, unification and anti-unification correspond to the traditional {\em join} and {\em meet} operations in partial orders.

\subsection{Distance and Similarity}

In the remainder of this document, we will use the following definition of distance and similarity function:

\begin{definition}[Distance function]
A {\em distance function} $d : G \times G \to \mathbb{R}$ on a set $G$ is a function satisfying the following conditions. Given any $g_1$, $g_2$, and $g_3 \in G$:
\begin{itemize}
\item $d(g_1, g_2) \geq 0$
\item $d(g_1, g_2) = 0 \iff g_1 = g_2$
\item $d(g_1, g_2) = d(g_2, g_1)$
\item $d(g_1, g_3) \leq d(g_1, g_2) + d(g_2, g_3)$ (triangle inequality)
\end{itemize}
\end{definition}

Although no commonly accepted formal definition of similarity exists, for the rest of this document, we will use the following definition.

\begin{definition}[Similarity function]
A {\em similarity function} $s : G \times G \to \mathbb{R}$ on a set $G$ is a function satisfying the following conditions. Given any $g_1$, and $g_2 \in G$:
\begin{itemize}
\item $0 \leq s(g_1, g_2) \leq 1$
\item $s(g_1, g_2) = 1 \iff g_1 = g_2$
\item $s(g_1, g_2) = s(g_2, g_1)$
\end{itemize}
\end{definition}

Although this formal definition suffices for our purposes, we should also have in mind that intuitively a similarity function is the inverse of a distance function.


\section{Refinement Operators for Labeled Graphs}

\begin{definition}[Downward Refinement Operator]
A {\em downward refinement operator} over a quasi-ordered set $(G,\sqsubseteq)$ is a function $\rho : G \to 2^G$ such that $\forall g' \in \rho(g) : g \sqsubseteq g'$.
\end{definition}

\begin{definition}[Upward Refinement Operator]
An {\em upward refinement operator} over a quasi-ordered set $(G,\sqsubseteq)$ is a function $\rho : G \to 2^G$ such that $\forall g' \in \rho(g) : g' \sqsubseteq g$.
\end{definition}

In the context of this document, a downward refinement operator generates elements of $G$ which are ``more specific'' (the complementary notion of upward refinement operator, corresponds to functions that generate elements of $G$ which are ``more general''). Moreover, refinement operators might satisfy certain properties of interest:

\begin{itemize}
\item A refinement operator $\rho$ is {\em locally finite} if $\forall g \in G : \rho(g)$ is finite.
\item A downward refinement operator $\rho$ is {\em complete} if $\forall g_1, g_2 \in G | g_1 \sqsubset g_2 : g_1 \in \rho^*(g_2)$.
\item An upward refinement operator $\gamma$ is {\em complete} if $\forall g_1, g_2 \in G | g_1 \sqsubset g_2 : g_2 \in \gamma^*(g_1)$.
\item A refinement operator $\rho$ is {\em proper} if $\forall g_1, g_2 \in G $ $ g_2 \in \rho(g_1) \Rightarrow g_1 \not \equiv g_2$.
\end{itemize}
\noindent where $\rho^*$ means the {\em transitive closure} of a refinement operator. Intuitively, {\em locally finiteness} means that the refinement operator is computable, {\em completeness} means we can generate, by refinement of $a$, any element of $G$ related to a given element $g_1$ by the order relation $\sqsubseteq$, and {\em properness} means that a refinement operator does not generate elements which are equivalent to the element being refined. When a refinement operator is locally finite, complete and proper, we say that it is {\em ideal}. 

Notice that all the subsumption relations presented above satisfy the {\em reflexive}\footnote{A graph trivially subsumes itself with the mapping $m(v) = v$.} and {\em transitive}\footnote{If a graph $g_1$ subsumes another graph $g_2$ through a mapping $m_1$, and $g_2$ subsumes another graph $g_3$ through a mapping $m_2$, it is trivial to check that $g_1$ subsumes $g_3$ via the mapping $m(v) = m_2(m_1(v))$.} properties. Therefore, the pair $(G, \sqsubseteq)$, where $G$ is the set of all DLGs given a set of labels $L$, and $\sqsubseteq$ is any of the subsumption relations defined above is a {\em quasi-ordered} set. Thus, this opens the door to defining refinement operators for DLGs. Intuitively, a downward refinement operator for DLGs will generate {\em refinements} of a given DLG by either adding vertices, edges, or by making some of the labels more specific, thus making the graph more specific.

\begin{table*}[tb]
    \centering
    \begin{tabular}{|c|c|c|c|c|c|c|} 
    \multicolumn{7}{c}{Operators for flat-labeled DLGs} \\ \hline
    {\em Operator} 	& {\em Direction} 	& {\em Order} 			& {\em Locally Finite} 	& {\em Complete} 	& {\em Proper}		& {\em Propositions} \\ \hline
    $\rho_f$		& downward			& $(G, \sqsubseteq)$	& \checkmark			& \checkmark		& only under OI		& Propositions \ref{prop:rho-f-finite-complete} and \ref{prop:rho-f-ideal}		\\
    $\rho_{tf}$		& downward			& $(G, \overrightarrow{\sqsubseteq})$	& \checkmark & \checkmark	& only under OI	& Propositions \ref{prop:rho-tf-finite-complete} and \ref{prop:rho-tf-ideal}			\\
    $\gamma_f$		& upward			& $(G, \sqsubseteq)$	& \checkmark			& \checkmark		& only under OI		& Propositions \ref{prop:rho-upward-f-finite-complete} and \ref{prop:rho-upward-f-ideal}		\\
    $\gamma_{tf}$	& upward			& $(G, \overrightarrow{\sqsubseteq})$	& \checkmark& \checkmark	& only under OI	& Propositions \ref{prop:rho-upward-tf-finite-complete} and \ref{prop:rho-upward-tf-ideal}		\\ \hline

    \multicolumn{7}{c}{Operators for order-labeled DLGs} \\ \hline
    {\em Operator} 	& {\em Direction} 	& {\em Order} 			& {\em Locally Finite} 	& {\em Complete} 	& {\em Proper}		& {\em Propositions} \\ \hline
    $\rho_{\preceq}$	& downward		& $(G, \sqsubseteq_{\preceq})$	& \checkmark		& \checkmark	& only under OI	& Propositions \ref{prop:rho-prec-finite-complete} and \ref{prop:rho-prec-ideal}		\\
    $\rho_{t\preceq}$	& downward		& $(G, \overrightarrow{\sqsubseteq}_{\preceq})$	& \checkmark	& \checkmark 	& only under OI	& Propositions \ref{prop:rho-tprec-finite-complete} and \ref{prop:rho-tprec-ideal}		\\
    $\gamma_{\preceq}$	& upward		& $(G, \sqsubseteq_{\preceq})$	& \checkmark		& \checkmark	& only under OI	& Propositions \ref{prop:gamma-prec-finite-complete} and \ref{prop:gamma-prec-ideal}		\\
    $\gamma_{t\preceq}$	& upward		& $(G, \overrightarrow{\sqsubseteq}_{\preceq})$	& \checkmark	& \checkmark	& only under OI	& Propositions \ref{prop:gamma-tprec-finite-complete} and \ref{prop:gamma-tprec-ideal}		\\ \hline
    \end{tabular}
    \caption{Summary of the theoretical properties of the refinement operators presented in this document, together with the propositions where their properties are proven.}
    \label{tbl:operators}
\end{table*}

In the following subsections, we will introduce a collection of refinement operators for connected DLGs, and discuss their theoretical properties. A summary of these operators is shown in Table \ref{tbl:operators}, where we show that under the object-identity constraint, all the refinement operators presented in this document are ideal. If we do not impose object-identity, then the operators are locally complete and complete, but not proper.

\subsection{Downward Refinement of Flat-labeled DLGs}\label{sec:operators-downward-flat}

We will define refinement operators as sets of {\em rewriting rules}. A rewriting rule is composed of three parts: the applicability conditions (shown between square brackets), the original graph (above the line), and the refined graph (below the line). Given a DLG $g = \langle V, E, l \rangle$, the following rewriting rules define two downward refinement operators $\rho_f$, and $\rho_{tf}$ for flat-labeled DLGs (Figure \ref{fig:rewrite-rules1} shows examples of the application of each of the rewrite rules):

\begin{description}

\item[(R0)] Top operator (adds one vertex to an empty graph):
\begin{eqnarray*}
\left[ 
\begin{array}{l}
v_*\not\in V, \\
V = \emptyset, \\
E = \emptyset, \\
a \in L \\
\end{array}
\right] & \,\, & 
\cfrac{ \langle V, E, l \rangle }
	  { \left\langle V \cup \{ v_* \}, E, l'(x) = \left\{ 
	  		\begin{array}{ll} 
			a & \text{if} \,\, x = v_* \\  
			l(x) & \text{otherwise}
			\end{array} 
			\right.
	    \right\rangle }
\end{eqnarray*}

\item[(R1)] Add vertex operator with outgoing edge:
\begin{eqnarray*}
\left[ 
\begin{array}{l}
v_*\not\in V, \\
v_1 \in V, \\
a \in L, \\
b \in L
\end{array}
\right] & \,\, & 
\cfrac{ \langle V, E, l \rangle }
	  { \left\langle V \cup \{ v_* \}, E \cup \{ (v_*, v_1) \}, l'(x) = \left\{ 
	  		\begin{array}{ll} 
			a & \text{if} \,\, x = v_* \\  
			b & \text{if} \,\, x = (v_*, v_1) \\  
			l(x) & \text{otherwise}
			\end{array} 
			\right.
	    \right\rangle }
\end{eqnarray*}

\item[(R2)] Add vertex operator with incoming edge:
\begin{eqnarray*}
\left[ 
\begin{array}{l}
v_*\not\in V, \\
v_1 \in V, \\
a \in L, \\
b \in L
\end{array}
\right] & \,\, & 
\cfrac{ \langle V, E, l \rangle }
	  { \left\langle V \cup \{ v_* \}, E \cup \{ (v_1, v_*) \}, l'(x) = \left\{ 
	  		\begin{array}{ll} 
			a & \text{if} \,\, x = v_* \\  
			b & \text{if} \,\, x = (v_1, v_*) \\  
			l(x) & \text{otherwise}
			\end{array} 
			\right.
	    \right\rangle }
\end{eqnarray*}

\item[(R3)] Add edge operator:
\begin{eqnarray*}
\left[ 
\begin{array}{l}
v_1 \in V, \\
v_2 \in V, \\
(v_1, v_2) \not\in E, \\
a \in L
\end{array}
\right] & \,\, & 
\cfrac{ \langle V, E, l \rangle }
	  { \left\langle V, E \cup \{ (v_1, v_2) \}, l'(x) = \left\{ 
	  		\begin{array}{ll} 
			a & \text{if} \,\, x = (v_1, v_2) \\  
			l(x) & \text{otherwise}
			\end{array} 
			\right.
	    \right\rangle }
\end{eqnarray*}

\item[(R4)] Split edge operator:
\begin{eqnarray*}
\left[ 
\begin{array}{l}
v_*\not\in V, \\
(v_1, v_2) \in E, \\
a \in L, \\
b = l((v_1, v_2))
\end{array}
\right] & \,\, & 
\cfrac{ \langle V, E, l \rangle }
	  { \left\langle V \cup \{ v_* \}, \left( E \cup \{ (v_1, v_*), (v_*, v_2) \} \right) \setminus \{(v_1, v_2)\}, l'(x) = \left\{ 
	  		\begin{array}{ll} 
			a & \text{if} \,\, x = v_* \\  
			b & \text{if} \,\, x = (v_1, v_*) \\  
			b & \text{if} \,\, x = (v_*, v_2) \\  
			l(x) & \text{otherwise}
			\end{array} 
			\right.
	    \right\rangle }
\end{eqnarray*}

\end{description}

\begin{figure}[tb]
    \centering
    \includegraphics[width=0.9\columnwidth]{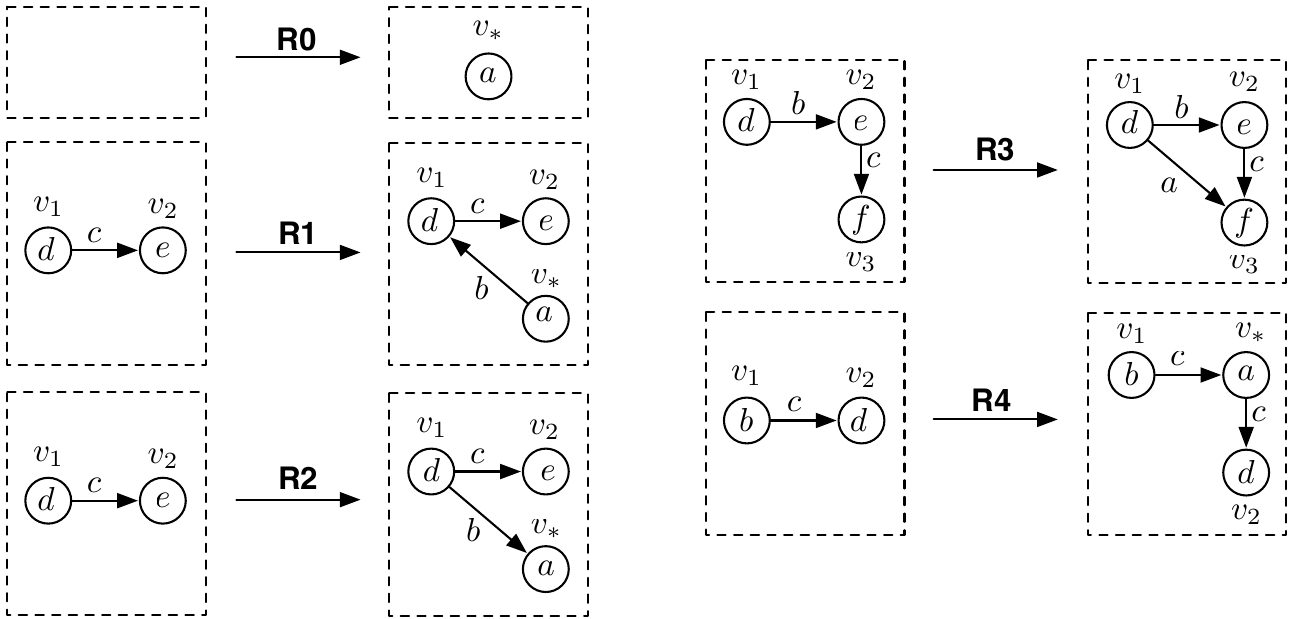}
    \caption{Example Application of the Rewrite Rules R0, R1, R2, R3 and R4.}
    \label{fig:rewrite-rules1}
\end{figure}

\begin{proposition}\label{prop:rho-f-finite-complete}
The downward refinement operator $\rho_f$ defined by the rewrite rules R0, R1, R2, and R3 above is locally finite, and complete for the quasi-ordered set $\langle G, \sqsubseteq \rangle$ (where $\sqsubseteq$ represents regular subsumption).
\end{proposition}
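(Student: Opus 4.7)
I would prove the two properties separately.

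For \emph{local finiteness}, a direct counting argument suffices. Given $g=\langle V,E,l\rangle$, rule R0 contributes nothing unless $V=\emptyset$, in which case it offers $|L|$ refinements; rules R1 and R2 each produce at most $|V|\cdot|L|^2$ refinements (choice of anchor vertex, new-vertex label, new-edge label); and R3 produces at most $|V|^2\cdot|L|$ refinements (pair of endpoints, edge label). Summing, $\rho_f(g)$ is finite because $V$ and $L$ are.

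For \emph{completeness}, suppose $g_1\sqsubset g_2$ witnessed by a subsumption mapping $m:V_1\to V_2$. The plan is to exhibit an explicit refinement chain from $g_1$ that terminates at a graph equivalent to $g_2$, in two phases. In the vertex phase I enumerate the vertices of $V_2\setminus m(V_1)$ in a BFS order starting from $m(V_1)$; such an order exists because $g_2$ is connected, and it guarantees that every newly processed $w$ is adjacent in $g_2$ to some already-handled $u$ via an edge $(u,w)$ or $(w,u)$. I add a fresh vertex $v_*$ to the current refinement using R1 (outgoing) or R2 (incoming) against any pre-image of $u$ under the current extension of $m$, copying labels from $g_2$, and extend $m$ by $m(v_*):=w$. (If $V_1=\emptyset$, the construction opens with R0 to seed a single vertex whose label matches some vertex of $V_2$.) After this phase, $m$ is surjective onto $V_2$. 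In the edge phase I iterate over the edges $(w,w')\in E_2$ that are not yet hit by $m$, pick pre-images $v\in m^{-1}(w)$ and $v'\in m^{-1}(w')$, and apply R3 with the label $l_2((w,w'))$. The applicability conditions of each rule hold by construction, so this is a legal $\rho_f$-chain.

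The main obstacle is arguing that the endpoint of the chain actually represents $g_2$. When $m$ is injective, the extended $m$ becomes a bijection witnessing an isomorphism between the constructed graph and $g_2$, and we recover $g_2$ up to vertex renaming. The delicate case is non-injective $m$: because R0--R3 cannot merge or delete vertices, the constructed graph retains the ``surplus'' pre-images carried over from $g_1$ and is not literally equal to $g_2$. I expect the technical heart of the proof to be verifying that this graph is nonetheless $\sqsubseteq$-equivalent to $g_2$---the surjection $m$ gives one direction, and choosing any section $V_2\to V_1$ gives the other, because by construction every pair of pre-images carries exactly the labeled edges demanded by $g_2$. This is consistent with Table~\ref{tbl:operators}, which records $\rho_f$ as proper only once OI is imposed, precisely the extra hypothesis that forces $m$ to be injective and removes the obstacle.
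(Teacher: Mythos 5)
Your local-finiteness count and your overall completeness plan coincide with the paper's: the paper likewise grows $g_1$ toward $g_2$ guided by the subsumption mapping, adding uncovered vertices with R1/R2 (seeding with R0 when $V_1=\emptyset$, and using connectivity of $g_2$ to find an uncovered vertex adjacent to a covered one), then uncovered edges with R3, and finishes by inverting the mapping to conclude equivalence with $g_2$. The problem is precisely the step you flag as the technical heart. Your justification of the non-injective case --- ``by construction every pair of pre-images carries exactly the labeled edges demanded by $g_2$'' --- is not true of the construction you describe: your edge phase only adds copies of $g_2$-edges that are \emph{not yet hit} by the extended mapping, and only between one chosen pre-image pair, so most pre-image pairs carry nothing. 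Concretely, let $g_1$ have two vertices $x_1,x_2$ labeled $a$ and one edge $(x_1,x_2)$ labeled $r$, and let $g_2$ be a single vertex $w$ labeled $a$ with a loop $(w,w)$ labeled $r$. Then $g_1\sqsubset g_2$ with $m(x_1)=m(x_2)=w$, your vertex phase does nothing ($m$ is already surjective), and your edge phase does nothing (the loop is already hit by $(x_1,x_2)$), so your chain ends at $g_1$ itself, which is not equivalent to $g_2$: no section $V_2\to V_1$ can send the loop to an edge of $g_1$, since $g_1$ has no loop. So the endpoint-equivalence claim fails as argued.

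The repair is small and keeps your architecture: before the edge phase, fix a section $\sigma$ of the extended mapping (one pre-image per vertex of $g_2$), and then for \emph{every} edge $(w,w')\in E_2$ such that $(\sigma(w),\sigma(w'))$ is not already present, apply R3 to add $(\sigma(w),\sigma(w'))$ with label $l_2((w,w'))$; in the example this adds the loop at $x_1$. Each added edge still maps to an actual edge of $g_2$ under the extended mapping, so one direction of subsumption is preserved, and $\sigma$ now witnesses the other, giving a graph equivalent to $g_2$ (and literally $g_2$ up to renaming when $m$ is injective, e.g.\ under OI, exactly as you say). For what it is worth, the paper's own proof is no more careful at this exact point: it stops as soon as the vertex and edge covers are full and asserts that inverting $m$ by picking arbitrary pre-images ``trivially'' yields $g_d \sqsubseteq g_t$, which fails on the same loop example. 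So your instinct about where the delicacy lies is correct; what is missing is the saturation step relative to a fixed section that actually discharges it.
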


\begin{proposition}\label{prop:rho-f-ideal}
The downward refinement operator $\rho_f$ defined by the rewrite rules R0, R1, R2, and R3 above is ideal (locally finite, complete, and proper) for the quasi-ordered set $\langle G, \sqsubseteq \rangle$ (where $\sqsubseteq$ represents regular subsumption), when we impose the Object Identity constraint.
\end{proposition}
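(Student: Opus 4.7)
The plan is to verify each of the three properties of an ideal operator --- local finiteness, completeness, and properness --- under the OI constraint. Local finiteness is immediate: the applicability conditions of R0--R3 are purely syntactic, depending only on the current graph and the finite label set $L$, without any reference to subsumption, so $\rho_f(g)$ remains finite when we impose OI.

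For properness, I would argue by a cardinality count. Each rewrite rule either strictly increases $|V|$ (rules R0, R1, R2) or strictly increases $|E|$ while keeping $|V|$ fixed (rule R3), since the freshness conditions $v_* \notin V$ and $(v_1, v_2) \notin E$ prevent duplication. Under OI, any subsumption $g \sqsubseteq g'$ forces the witnessing mapping $m : V \to V'$ to be injective, so $|V| \le |V'|$; applying this in both directions for $g \equiv g'$ gives $|V| = |V'|$, and then the induced edge mapping is also injective into $E'$, forcing $|E| = |E'|$. No single rule preserves both cardinalities, so a refined graph is never OI-equivalent to its parent.

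The heart of the argument is completeness under OI. Given $g_1 \sqsubset g_2$ under OI with injective witness $m : V_1 \to V_2$, I will construct an explicit refinement chain in two phases. First, since $g_2$ is connected, I will enumerate the vertices of $V_2 \setminus m(V_1)$ as $w_{n+1}, \ldots, w_p$ by a BFS/DFS from $m(V_1)$ in the undirected skeleton of $g_2$, so that each $w_j$ is adjacent to some vertex already included. For each $w_j$ I apply R1 or R2 (according to the orientation of the connecting edge in $g_2$), choosing the vertex and edge labels to match those of $g_2$. Second, once the vertex set is in bijection with $V_2$, I add any remaining edge of $E_2$ via R3 with the matching label. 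Each intermediate graph OI-subsumes its successor via the identity on already-present vertices, which is trivially injective, and the terminal graph is isomorphic to $g_2$.

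The main obstacle is this completeness construction --- in particular arranging the BFS/DFS enumeration so that connectedness is preserved at every step and each rule application is legal, and assembling the intermediate identity-based OI-witnesses into a single OI-chain from $g_1$ to $g_2$. The properness and local finiteness arguments, by contrast, are short syntactic observations.
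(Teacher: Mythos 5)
Your proposal is correct and follows essentially the same route as the paper: properness is argued exactly as there (OI makes the subsumption mapping injective, so mutually subsuming graphs must have equal vertex and edge counts, while each of R0--R3 strictly increases one of them), and your completeness chain is the same cover-growing construction the paper uses in its proof of Proposition \ref{prop:rho-f-finite-complete}, which the paper's proof of this proposition simply cites. The only detail to add is the seed case $V_1 = \emptyset$ (i.e., $g_1 = g_\top$), where the chain must begin with an application of R0 before your BFS/DFS from $m(V_1)$ can get started.
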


\begin{proposition}\label{prop:rho-tf-finite-complete}
The downward refinement operator $\rho_{tf}$ defined by the rewrite rules R0, R1, R2, R3, and R4 above is locally finite, and complete for the quasi-ordered set $\langle G, \overrightarrow{\sqsubseteq} \rangle$ (where $\overrightarrow{\sqsubseteq}$ represents trans-subsumption).
\end{proposition}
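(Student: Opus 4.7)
The plan is to split the proof into the two claims — local finiteness and completeness — and for completeness to reduce to Proposition \ref{prop:rho-f-finite-complete} by using R4 as a preprocessing step.

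For local finiteness, given any DLG $g = \langle V, E, l \rangle$ with finite $V$, $E$ and finite label set $L$, I would simply count: R0 contributes $|L|$ refinements (only when $V=\emptyset$), R1 and R2 each contribute at most $|V|\cdot|L|^2$ (choice of $v_1$, vertex label, edge label), R3 contributes at most $|V|^2\cdot|L|$, and R4 contributes at most $|E|\cdot|L|$. Summing yields a finite bound on $|\rho_{tf}(g)|$.

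For completeness, suppose $g_1 \overrightarrow{\sqsubset} g_2$ with witnessing mappings $m: V_1 \to V_2$ and $m_e: E_1 \to E_2^*$. My plan is to inflate $g_1$ into an intermediate graph $g_1'$ that regular-subsumes $g_2$, and then invoke Proposition \ref{prop:rho-f-finite-complete}. Concretely, for each edge $e = (v_1,v_2) \in E_1$ whose image path $m_e(e) = [(w_1,w_2),\ldots,(w_k,w_{k+1})]$ has length $k > 1$, I apply R4 exactly $k-1$ times, each time choosing the label of the fresh intermediate vertex to be $l_2(w_{i+1})$ for the corresponding internal vertex of the target path. Since R4 copies the original edge label $b = l_1(e)$ onto both new edges and the trans-subsumption definition guarantees every edge in $m_e(e)$ carries label $l_1(e)$, all edge labels line up; the vertex labels of $g_1'$ are chosen precisely to match. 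Extending $m$ to a map $m'$ that sends each freshly inserted intermediate vertex to its designated $w_{i+1}$, I would then verify directly that $m'$ witnesses $g_1' \sqsubseteq g_2$ under regular subsumption. Connectivity is preserved throughout because R4 replaces an edge with a path between the same endpoints.

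Once $g_1' \in \rho_{tf}^*(g_1)$ and $g_1' \sqsubseteq g_2$ are established, Proposition \ref{prop:rho-f-finite-complete} (applied up to equivalence, as is standard for refinement-operator completeness) gives $g_2 \in \rho_f^*(g_1')$; and since rules R0--R3 are shared by $\rho_{tf}$, we get $\rho_f^*(g_1') \subseteq \rho_{tf}^*(g_1')$, hence $g_2 \in \rho_{tf}^*(g_1)$. The main obstacle I expect is handling the non-injective cases cleanly: when $m$ is not injective, or when two distinct edges of $g_1$ have image paths in $g_2$ that share internal vertices, the extended map $m'$ sends structurally distinct intermediates in $g_1'$ to the same vertex of $g_2$. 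I would argue this is harmless by checking the subsumption conditions are purely local (per vertex and per edge) and remain satisfied under a non-injective $m'$. Edge cases — $g_1$ empty (handled by R0) and edges with $|m_e(e)| = 1$ (no R4 needed) — fit into the same scheme without modification.
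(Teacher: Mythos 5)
Your proof is correct, and while the local-finiteness part coincides with the paper's (a direct count of variable bindings per rule, with R4 bounded by $|E|\cdot|L|$), your completeness argument takes a genuinely different route. The paper proves completeness with a single integrated procedure: starting from $g_u$ it maintains, at every step $t$, a trans-subsumption witness $(m_t, m_{e,t})$ of $g_t \overrightarrow{\sqsubseteq} g_d$, applying R4 whenever some $m_{e,t}(e)$ is a path of more than one edge, R0/R1/R2 while the vertex cover is incomplete, and R3 while the edge cover is incomplete, with termination argued from the monotone growth of $|\mathcal{C}_v|$, $|\mathcal{C}_e|$ and the monotone shrinking of the mapped path lengths (lower-bounded by two). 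You instead front-load all R4 applications into a preprocessing phase that inflates $g_1$ into $g_1'$ whose fresh intermediate vertices are labeled to match the internal vertices of the image paths, verify that the extended (possibly non-injective) map witnesses $g_1' \sqsubseteq g_2$ under regular subsumption, and then reuse Proposition \ref{prop:rho-f-finite-complete} plus the inclusion $\rho_f(g) \subseteq \rho_{tf}(g)$ to finish. Your reduction is more modular: it avoids redoing the R0--R3 bookkeeping and termination analysis, at the cost of the (correctly handled) verification that the inflation preserves the subsumption conditions even when distinct image paths share internal vertices. The paper's interleaved construction is more self-contained and exhibits the refinement path together with an explicit witness at every intermediate graph, a template it then reuses almost verbatim for the order-labeled case ($\rho_{t\preceq}$). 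Your appeal to completeness ``up to equivalence'' is the same caveat implicitly present in the paper's own proofs (their procedures terminate at a graph equivalent to, not necessarily identical to, $g_d$), so it introduces no additional gap; reaching a graph equivalent to $g_2$ under $\sqsubseteq$ also suffices for $\overrightarrow{\sqsubseteq}$, since regular subsumption implies trans-subsumption.
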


\begin{proposition}\label{prop:rho-tf-ideal}
The downward refinement operator $\rho_{tf}$ defined by the rewrite rules R0, R1, R2, R3, and R4 above is ideal (locally finite, complete, and proper) for the quasi-ordered set $\langle G, \overrightarrow{\sqsubseteq} \rangle$ (where $\overrightarrow{\sqsubseteq}$ represents trans-subsumption), when we impose the Object Identity constraint.
\end{proposition}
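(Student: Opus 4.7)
The plan is to bootstrap from Proposition \ref{prop:rho-tf-finite-complete}: local finiteness and completeness transfer unchanged to the OI setting, so the only new obligation is properness. Concretely, I need to show that whenever $g_2$ is produced from $g_1$ by a single application of one of R0--R4, we have $g_2 \not\overrightarrow{\sqsubseteq} g_1$ under OI. Combined with $g_1 \overrightarrow{\sqsubseteq} g_2$ under OI (immediate via inclusion of vertex identities, using for R4 the length-2 path $[(v_1,v_*),(v_*,v_2)]$, which respects the OI clause because $v_* \notin V_1$), this precludes $g_1 \equiv g_2$.

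For R0, R1, R2, and R4 the argument will be a vertex-count observation. R0 goes from $|V|=0$ to $|V|=1$, while R1, R2, and R4 each introduce exactly one fresh vertex $v_* \notin V$. Under OI the vertex component $m$ of any candidate trans-subsumption witness must be injective, which is impossible when its domain is strictly larger than its codomain. Hence no such witness exists for these four rules.

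The main obstacle will be R3, where $|V_2| = |V_1|$ permits a bijective $m$. Suppose for contradiction that $g_2 \overrightarrow{\sqsubseteq} g_1$ under OI, witnessed by a bijection $m$ and an edge-to-path map $m_e$. I will first argue that every path $m_e(e)$ must have length exactly one: if some edge $e = (v_a, v_b) \in E_2$ were mapped to $[(w_1,w_2), \dots, (w_k, w_{k+1})]$ with $k \geq 2$, pick any intermediate endpoint $w_i \notin \{m(v_a), m(v_b)\}$; since $m$ is a bijection, $u := m^{-1}(w_i)$ exists and differs from $v_a$ and $v_b$, which directly contradicts the OI clause forbidding $m(u)$ from appearing as an endpoint in $m_e((v_a,v_b))$. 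So $m_e$ collapses to the induced edge map $(v,w) \mapsto (m(v), m(w))$, which is injective on $E_2 \to E_1$ because $m$ is. This forces $|E_2| \leq |E_1|$, contradicting R3's applicability condition $(v_1,v_2) \notin E$ which yields $|E_2| = |E_1| + 1$. Therefore properness holds for R3 as well, and $\rho_{tf}$ is ideal under OI.
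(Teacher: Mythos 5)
Your proof is correct and follows essentially the same route as the paper's: object identity forces the vertex mapping to be injective, which disposes of R0, R1, R2 and R4 by a vertex-count argument, and for R3 the OI clause on $m_e$ collapses every edge-to-path image to a single edge once $m$ is a bijection, giving the edge-count contradiction --- exactly the content the paper packages as ``under OI, mutually subsuming graphs must have equal $|V|$ and $|E|$, while every rule increases one of them.'' The only difference is organizational (a per-rule case analysis instead of the paper's single counting lemma), not substantive.
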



Proofs to all these propositions can be found in Appendix \ref{appendix:proofs}.

\subsection{Upward Refinement of Flat-labeled DLGs}\label{sec:operators-upward-flat}

The following rewriting rules define two upward refinement operators, $\gamma_f$ and $\gamma_{tf}$, for flat-labeled DLGs:

\begin{description}

\item[(UR0)] Remove Bridge (removes a non-bridge edge of the graph, see Definition \ref{def:bridge}):
\begin{eqnarray*}
\left[ 
\begin{array}{l}
e \in E, \\
e \not\in \mathit{bridges(\langle V, E, l \rangle)}  \\
\end{array}
\right] & \,\, & 
\cfrac{ \langle V, E, l \rangle }
	  { \left\langle V, E \setminus \{e\}, l \right\rangle }
\end{eqnarray*}

\item[(UR1)] Remove Leaf (removes vertex connected to the rest of the graph by at most a single edge):
\begin{eqnarray*}
\left[ 
\begin{array}{l}
v \in V, \\
E_v = \{ e = (v_1, v_2) \in E \,\, | \,\, v = v_1 \,\, \vee \,\, v = v_2 \}, \\
|E_v| \leq 1, \\
\end{array}
\right] & \,\, & 
\cfrac{ \langle V, E, l \rangle }
	  { \left\langle V \setminus \{v\}, E \setminus E_v, l \right\rangle }
\end{eqnarray*}

Note: notice that here, $E_v$ is basically the set of edges that involve $v$, and by enforcing $|E_v| \leq 1$, we are basically selecting only those vertices $v \in V$ that are either: a) connected to the rest of the graph by at most one single edge (when $|E_v| = 1$), or b) when the graph is just composed of a single vertex and no edges (when $|E_v| = 0$), in which case, this operator turns the graph into $g_\top$.

\item[(UR2)] Shorten Edge (inverse of the R4 operator defined above):
\begin{eqnarray*}
\left[ 
\begin{array}{l}
e_1 = (v_1, v_2) \in E, \\
e_2 = (v_2, v_3) \in E, \\
l(e_1) = l(e_2), \\
\nexists e = (w_1,w_2) \in E | e \neq e_1 \,\, \wedge \,\, w_2 = v_2, \\
\nexists e = (w_1,w_2) \in E | e \neq e_2 \,\, \wedge \,\, w_1 = v_2
\end{array}
\right] & \,\, & 
\cfrac{ \langle V, E, l \rangle }
	  { \left\langle V \setminus \{v_2\}, \left( E \cup \{(v_1,v_3)\} \right) \setminus \{e_1, e_2\}, 
	  l'(x) \right\rangle }
\end{eqnarray*}
\noindent where
\[ l'(x) = \left\{ 
	  		\begin{array}{ll} 
			l(e_1) & \text{if} \,\, x = (v_1,v_3) \\  
			l(x) & \text{otherwise}
			\end{array} 
			\right.
\]

\end{description}

\begin{proposition}\label{prop:rho-upward-f-finite-complete}
The upward refinement operator $\gamma_f$ defined by the rewrite rules UR0 and UR1 above is locally finite, and complete for the quasi-ordered set $\langle G, \sqsubseteq \rangle$ (where $\sqsubseteq$ represents regular subsumption).
\end{proposition}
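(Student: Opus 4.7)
The plan is to establish the two properties separately. Local finiteness is a routine counting argument: UR0 can fire on at most one non-bridge edge at a time, and UR1 on at most one vertex, so $|\gamma_f(g)| \leq |V|+|E|$, which is finite for every $g \in G$.

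For completeness, given $g_1 \sqsubset g_2$ with subsumption mapping $m : V_1 \to V_2$, I would work with the image subgraph $h = \langle m(V_1), m(E_1), l_2\rangle \subseteq g_2$. A direct check shows that $h$ is connected (the image of any path in $g_1$ is a path in $g_2$) and that $g_1 \equiv h$, so it suffices to produce a sequence $g_2 = h_0 \to h_1 \to \dots \to h_k = h$ with $h_{i+1} \in \gamma_f(h_i)$. I would build this sequence by induction on the \emph{excess} $|V_2 \setminus V(h)| + |E_2 \setminus E(h)|$, at each step applying UR0 or UR1 to a piece of structure lying strictly outside $h$.

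The inductive step rests on the following graph-theoretic lemma: \emph{if $h$ is a proper connected subgraph of a connected DLG $g$, then either (i) some edge $e \in E(g) \setminus E(h)$ is not a bridge of $g$, or (ii) some vertex $v \in V(g) \setminus V(h)$ has at most one incident edge in $g$.} I would prove this by contradiction. Suppose every extra edge is a bridge and every extra vertex has undirected degree at least $2$. No extra edge can have both endpoints in $V(h)$, since $h$ is connected and would provide a bypass. Because a bridge cannot lie on a cycle, the extra edges form a forest $F$ anchored to $h$; extra vertices, having all their edges in $F$ and degree at least $2$, are interior nodes of $F$, so every leaf of $F$ belongs to $V(h)$. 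Any tree of $F$ that contains an extra vertex must therefore have two distinct leaves $u, v \in V(h)$, and the $h$-path from $u$ to $v$ combined with the $F$-path from $u$ to $v$ forms an undirected cycle in $g$. Every $F$-edge on that cycle can then be removed without disconnecting $g$, contradicting bridgehood.

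Given the lemma, each step of the reduction applies UR0 or UR1 to an element outside $h$, without removing anything from $h$ or breaking connectedness, so the excess strictly decreases and the process terminates at $h \equiv g_1$. The main obstacle is the lemma itself, and in particular the forest argument that isolates a non-bridge edge using the connectivity of $h$; once it is secured, completeness follows by a routine induction, and local finiteness is immediate.
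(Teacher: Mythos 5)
Your local-finiteness count is fine, and your graph-theoretic lemma (a proper connected subgraph of a connected DLG always leaves either a non-bridge extra edge or an extra vertex of degree at most one) is essentially Proposition \ref{prop:delta-leaf-or-nonbridge} of the paper, established by a different but serviceable bridge/forest argument; likewise the induction that strips $g_2$ down to the image subgraph $h$ by UR0/UR1 while preserving connectedness is sound. The genuine gap is the claim that $g_1 \equiv h$. Without the object-identity constraint the subsumption mapping $m$ need not be injective, and then the image subgraph in general does \emph{not} subsume $g_1$. Concretely, let $g_1$ be the path $a \to b \to a'$ with $l(a)=l(a')=A$, $l(b)=B$ and both edges labeled $r$, and let $g_2$ be the two-cycle on vertices $w_1$ (label $A$) and $w_2$ (label $B$) with edges $(w_1,w_2)$ and $(w_2,w_1)$ labeled $r$. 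The map $a,a' \mapsto w_1$, $b \mapsto w_2$ witnesses $g_1 \sqsubset g_2$, the image subgraph $h$ is all of $g_2$, and $g_2 \not\sqsubseteq g_1$ (that would require a two-cycle in $g_1$). So $h \not\equiv g_1$, and stopping the generalization at $h$ does not establish completeness; the ``direct check'' only goes through when $m$ is injective, i.e.\ under object identity, whereas the proposition is stated for plain subsumption.

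This is precisely where your route diverges from the paper's: the paper never passes to the image subgraph and never asserts its equivalence to $g_u$. Instead it maintains the invariant $g_u \sqsubseteq g_t$ along the upward path and, once every vertex of $g_t$ is covered, adds a further phase that deletes edges $(m(v_1),m(v_2))$ of $g_t$ for which $(v_1,v_2) \notin E_u$, inverting the mapping only when no such edge remains. Your proposal has no counterpart to this phase --- surplus edges running between covered vertices are absorbed into the false equivalence claim --- so as written it proves completeness only under the object-identity constraint. Be aware, too, that this phase is genuinely delicate with non-injective mappings: in the example above the surplus edge $(w_1,w_2)$ associated with the non-adjacent pair $(a',b)$ is simultaneously the image of the edge $(a,b)$, so deleting it destroys the invariant. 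Any repair of your argument therefore has to confront non-injective subsumption mappings explicitly rather than dispose of them with a one-line equivalence check.
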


\begin{proposition}\label{prop:rho-upward-f-ideal}
The upward refinement operator $\gamma_f$ defined by the rewrite rules UR0 and UR1 above is ideal (locally finite, complete and proper) for the quasi-ordered set $\langle G, \sqsubseteq \rangle$ (where $\sqsubseteq$ represents regular subsumption), when we impose the Object Identity constraint.
\end{proposition}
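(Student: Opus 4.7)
The plan is to reuse Proposition \ref{prop:rho-upward-f-finite-complete}, which already gives local finiteness and completeness of $\gamma_f$, so the only new obligation is properness under Object Identity. Recall that properness requires: whenever $g_2 \in \gamma_f(g_1)$, we do not have $g_1 \equiv g_2$ under OI. Since we always have $g_2 \sqsubseteq g_1$ by the fact that $\gamma_f$ is an upward refinement operator, showing $g_1 \not\equiv g_2$ reduces to showing $g_1 \not\sqsubseteq g_2$ under OI.

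The key structural lemma I would prove first is a cardinality monotonicity statement: if $g_1 \sqsubseteq g_2$ under OI via an injective mapping $m : V_1 \to V_2$, then $|V_1| \leq |V_2|$ and $|E_1| \leq |E_2|$. The vertex inequality is immediate from injectivity of $m$. For the edge inequality, I would observe that injectivity of $m$ on vertices induces injectivity on the edge map $(v,w) \mapsto (m(v), m(w))$: two distinct edges differ in either their source or target, and injectivity of $m$ then forces the image edges to differ. Since this edge map sends $E_1$ injectively into $E_2$, we get $|E_1| \leq |E_2|$.

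Next I would do a simple case analysis on which rule produced $g_2$ from $g_1$. If $g_2$ is obtained by UR0, then $V_2 = V_1$ and $|E_2| = |E_1| - 1$. An OI-subsumption $g_1 \sqsubseteq g_2$ would force $|V_1| \leq |V_2|$ (satisfied) and $|E_1| \leq |E_2|$, but the latter contradicts $|E_2| = |E_1| - 1$. If $g_2$ is obtained by UR1, then $|V_2| = |V_1| - 1$, and already the vertex inequality $|V_1| \leq |V_2|$ fails. In either case, $g_1 \not\sqsubseteq g_2$ under OI, hence $g_1 \not\equiv g_2$, and $\gamma_f$ is proper.

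I do not anticipate a serious obstacle here: the main subtlety is just making sure the induced edge map is genuinely injective when $m$ is injective, so that both rules can be ruled out by a size argument. Combined with Proposition \ref{prop:rho-upward-f-finite-complete}, this establishes idealness of $\gamma_f$ over $\langle G, \sqsubseteq\rangle$ under the OI constraint.
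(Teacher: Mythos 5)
Your proposal is correct and follows essentially the same route as the paper: local finiteness and completeness are inherited from Proposition \ref{prop:rho-upward-f-finite-complete}, and properness is a cardinality argument exploiting that the Object Identity constraint makes the subsumption mapping injective, so UR0 and UR1 (which strictly decrease the number of edges or vertices) can never yield a graph equivalent to the original. Your version is in fact slightly more explicit than the paper's, since you spell out that injectivity of $m$ on vertices induces injectivity of the edge map $(v,w) \mapsto (m(v),m(w))$ — a step the paper's proof asserts implicitly when passing from $|V_u| = |V_d|$ to $|E_u| = |E_d|$ — and you refute only the direction $g_1 \sqsubseteq g_2$ rather than full equivalence, which suffices.
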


\begin{proposition}\label{prop:rho-upward-tf-finite-complete}
The upward refinement operator $\gamma_{tf}$ defined by the rewrite rules UR0, UR1 and UR2 above is locally finite, and complete for the quasi-ordered set $\langle G, \overrightarrow{\sqsubseteq} \rangle$ (where $\overrightarrow{\sqsubseteq}$ represents trans-subsumption).
\end{proposition}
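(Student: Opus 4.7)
Local finiteness is immediate: for any $g = \langle V, E, l \rangle$, each rewrite rule has at most polynomially many applicable parameterizations (UR0 is indexed by a non-bridge edge, UR1 by a vertex of degree at most one, and UR2 by an intermediate vertex together with its unique incoming/outgoing edge pair). Hence $\gamma_{tf}(g)$ is finite.

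For completeness, I plan to use strong induction on $|V_2|+|E_2|$. The key claim to prove is: if $g_1 \overrightarrow{\sqsubseteq} g_2$ and $g_1 \not\equiv g_2$, then there exists $g_2' \in \gamma_{tf}(g_2)$ with $g_1 \overrightarrow{\sqsubseteq} g_2'$. Since every application of UR0, UR1, or UR2 strictly decreases $|V|+|E|$, iterating this step produces a chain of upward refinements terminating at a graph equivalent to $g_1$.

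To establish the key claim, I would fix a witnessing mapping $(m, m_e)$ for $g_1 \overrightarrow{\sqsubseteq} g_2$ that minimizes the footprint $|V_H| + |E_H|$, where $V_H = m(V_1) \cup \{w : w\text{ lies on some }m_e(e)\}$ and $E_H = \bigcup_{e} \{\text{edges of }m_e(e)\}$. The subgraph $H = \langle V_H, E_H, l_2|_H\rangle$ is connected since $g_1$ is. If $g_2 \neq H$, the argument mirrors the $\gamma_f$ case of Proposition~\ref{prop:rho-upward-f-finite-complete}: an extra edge between two $V_H$ vertices is a non-bridge (as $H$ provides an alternative path) and UR0 applies; otherwise, a case analysis on the subgraph induced by $V_2 \setminus V_H$, combined with an extremal argument on its longest paths (a cycle in it immediately yields a non-bridge), produces either a vertex of degree at most one in $g_2$ (UR1 applies) or a cycle through $H$ containing a non-bridge extra edge (UR0 applies). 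Each such removal leaves $H$ intact and therefore preserves $g_1 \overrightarrow{\sqsubseteq} g_2'$. In the tight case $g_2 = H$, every vertex and edge of $g_2$ is used by the mapping; if every $m_e(e)$ has length one then $m$ induces a mutual trans-subsumption and $g_1 \equiv g_2$, contradicting the hypothesis, so some path has length greater than one, and I would apply UR2 at an intermediate vertex of such a path, with the trans-subsumption preserved by contracting the two path edges in the updated edge-mapping.

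The principal technical obstacle is this last step: showing that a footprint-minimal tight mapping must admit a UR2-ready intermediate vertex, i.e.\ an intermediate vertex $w$ on some path of length greater than one whose only incident edges in $g_2$ are the two consecutive path edges. I expect to argue this by a rerouting contradiction: any extra incidence at $w$ --- arising because $w$ also lies on another path $m_e(e')$, or because $w = m(v)$ for some $v \in V_1$ --- would permit redefining $(m, m_e)$ to strictly shrink $|V_H| + |E_H|$, contradicting minimality. Enumerating these rerouting sub-cases cleanly, without violating the connectivity of $g_1$ or the label-agreement constraints, is the subtlest part of the proof.
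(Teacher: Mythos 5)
Your local-finiteness count and the non-tight part of your completeness procedure are essentially the paper's argument: the paper likewise bounds the instantiations of each rule's applicability conditions, and outside the ``footprint'' (the paper's \emph{cover}) it invokes exactly the dichotomy you sketch, via Proposition~\ref{prop:delta-leaf-or-nonbridge} (a vertex of the delta is a leaf, so UR1 applies, or some delta edge is a non-bridge, so UR0 applies), plus the observation that an uncovered edge between two covered vertices is never a bridge. You have also correctly located the crux in the tight case, where every vertex and edge is used by the mapping and some $m_e(e)$ is a path of length greater than one.

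However, your plan for that case --- a footprint-minimal mapping plus a rerouting contradiction showing some intermediate vertex is ``UR2-ready'' --- fails, and the failure is not repairable by a cleverer choice of mapping. Take $g_1$ with vertices $x,y,z$ labelled $a,b,c$ and edges $(x,y),(x,z)$ both labelled $r$, and $g_2$ with vertices $w_1,w_2,w_3,w_4$ labelled $a,d,b,c$ and edges $(w_1,w_2),(w_2,w_3),(w_2,w_4)$ all labelled $r$. Then $g_1 \overrightarrow{\sqsubseteq} g_2$ via $x\mapsto w_1$, $y\mapsto w_3$, $z\mapsto w_4$, with $(x,y)\mapsto[(w_1,w_2),(w_2,w_3)]$ and $(x,z)\mapsto[(w_1,w_2),(w_2,w_4)]$, and $g_2$ does not trans-subsume $g_1$ (no $d$-labelled vertex exists in $g_1$). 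This witnessing mapping is the only one, hence footprint-minimal and tight, yet the sole intermediate vertex $w_2$ has a second outgoing edge because the two mapped paths are forced to share it, so UR2 is inapplicable; UR0 is inapplicable (every edge is a bridge), and the only members of $\gamma_{tf}(g_2)$ are the three UR1-deletions of the leaves $w_1,w_3,w_4$, none of which is trans-subsumed by $g_1$. So your key one-step claim is false, and there is nothing to ``reroute''; indeed no graph reachable from $g_2$ keeps all three labels $a,b,c$ while shedding the degree-three vertex $w_2$, so not even the weaker reachability statement holds for this pair. For what it is worth, the paper's own proof passes over the same point by asserting that, when $\Delta_e=\emptyset$, an intermediate vertex $w_i$ of a long path can only be incident to the two path edges --- an assertion this example contradicts, since the extra incident edge lies in the cover, not the delta. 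So your instinct about where the difficulty sits is right, but closing it would require either handling intermediate vertices shared by several mapped paths (which the current UR2, with its ``no other incident edges'' condition, cannot remove) or revisiting the statement itself, rather than the minimality/rerouting argument you propose.
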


\begin{proposition}\label{prop:rho-upward-tf-ideal}
The upward refinement operator $\gamma_{tf}$ defined by the rewrite rules UR0, UR1 and UR2 above is ideal (locally finite, complete and proper) for the quasi-ordered set $\langle G, \overrightarrow{\sqsubseteq} \rangle$ (where $\overrightarrow{\sqsubseteq}$ represents trans-subsumption), when we impose the Object Identity constraint.
\end{proposition}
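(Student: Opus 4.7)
The plan is to establish the three properties of ideality separately. Local finiteness and completeness are mild strengthenings of Proposition \ref{prop:rho-upward-tf-finite-complete}, while properness is the new content that the Object Identity constraint enables.

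Local finiteness is immediate, since the applicable instances of UR0, UR1, UR2 are indexed by edges and vertices of $g$, so $|\gamma_{tf}(g)|$ is bounded by a polynomial in $|E|+|V|$. For completeness with respect to $\overrightarrow{\sqsubseteq}$ under OI, I would mirror the strategy of Proposition \ref{prop:rho-upward-tf-finite-complete} but exploit the OI witness $(m, m_e)$ for $g_1 \overrightarrow{\sqsubset} g_2$ to partition $g_2$ cleanly into three parts: images of vertices of $g_1$ under $m$, intermediate vertices lying on some $m_e(e)$-path, and the remaining surplus vertices and edges that participate in neither. Surplus edges that are non-bridges of the relevant residual subgraph can be removed via UR0, after which surplus vertices become leaves and can be removed via UR1. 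Each remaining $m_e(e)$-path of length greater than one can then be collapsed to a single edge by repeated applications of UR2; the OI constraints guarantee that each such intermediate vertex has precisely the single incoming and single outgoing incident edge required by the preconditions of UR2, since no other vertex of $g_1$ can be mapped onto it and no other edge-path can reuse it.

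The main obstacle is properness, which requires that for every $g' \in \gamma_{tf}(g)$ we have $g \not\overrightarrow{\sqsubseteq} g'$ under OI. For UR1 and UR2 this is immediate by vertex counting: $|V(g')| < |V(g)|$, so no injection $m : V(g) \to V(g')$ can exist, and OI forces $m$ to be injective. The subtler case is UR0, where $g$ and $g'$ share the same vertex set but $g'$ has one fewer edge. Any hypothetical OI witness $m$ must then be a bijection on $V$, and the OI constraint on $m_e$ forbids intermediate vertices of a path $m_e(e)$ from being in the image of $m$. But bijectivity of $m$ ensures \emph{every} vertex of $g'$ is such an image, so each $m_e(e)$ must collapse to a single edge in $E \setminus \{e\}$. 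The resulting injection on edges forces $|E(g)| \leq |E(g')| = |E(g)| - 1$, a contradiction. Combining the three properties yields ideality.
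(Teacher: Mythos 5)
Your proposal is correct and follows essentially the same route as the paper: local finiteness and completeness are carried over from Proposition \ref{prop:rho-upward-tf-finite-complete}, and properness is established by the same Object-Identity counting argument (injectivity of $m$ rules out equivalence when a vertex is removed, and for UR0 the OI condition on $m_e$ collapses every edge-path to a single edge, yielding an edge injection that contradicts the removal of an edge). Your per-rule treatment of UR0 simply spells out in more detail the lemma the paper invokes (that OI-equivalence under trans-subsumption forces equal vertex and edge counts), and your re-sketch of completeness under OI is a harmless, slightly more careful variant of the paper's direct citation of the earlier proposition.
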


Proofs to all these propositions can be found in Appendix \ref{appendix:proofs}.

\subsection{Downward Refinement of Order-labeled DLGs}\label{sec:operators-downward-po}

Assuming the set $\langle L, \preceq \rangle$ is a partial order with a top element $\top \in L$, and given a DLG $g = \langle V, E, l \rangle$, the following rewriting rules define two downward refinement operators, $\rho_{\preceq}$ and $\rho_{t\preceq}$, for order-labeled DLGs:

\begin{description}
\item[(R0PO)] Top operator (adds one vertex to an empty graph):
\begin{eqnarray*}
\left[ 
\begin{array}{l}
v_* \not\in V, \\
V = \emptyset, \\
E = \emptyset
\end{array}
\right] & \,\, & 
\cfrac{ \langle V, E, l \rangle }
	  { \left\langle V \cup \{ v_* \}, E, l'(x) = \left\{ 
	  		\begin{array}{ll} 
			\top & \text{if} \,\, x = v_* \\  
			l(x) & \text{otherwise}
			\end{array} 
			\right.
	    \right\rangle }
\end{eqnarray*}

\item[(R1PO)] Add vertex operator with outgoing edge:
\begin{eqnarray*}
\left[ 
\begin{array}{l}
v_* \not\in V, \\
v_1 \in V
\end{array}
\right] & \,\, & 
\cfrac{ \langle V, E, l \rangle }
	  { \left\langle V \cup \{ v_* \}, E \cup \{ (v_1, v_*) \}, l'(x) = \left\{ 
	  		\begin{array}{ll} 
			\top & \text{if} \,\, x = v_* \\  
			\top & \text{if} \,\, x = (v_1, v_*) \\  
			l(x) & \text{otherwise}
			\end{array} 
			\right.
	    \right\rangle }
\end{eqnarray*}

\item[(R2PO)] Add vertex operator with incoming edge:
\begin{eqnarray*}
\left[ 
\begin{array}{l}
v_* \not\in V, \\
v_1 \in V
\end{array}
\right] & \,\, & 
\cfrac{ \langle V, E, l \rangle }
	  { \left\langle V \cup \{ v_* \}, E \cup \{ (v_*, v_1) \}, l'(x) = \left\{ 
	  		\begin{array}{ll} 
			\top & \text{if} \,\, x = v_* \\  
			\top & \text{if} \,\, x = (v_*, v_1) \\  
			l(x) & \text{otherwise}
			\end{array} 
			\right.
	    \right\rangle }
\end{eqnarray*}

\item[(R3PO)] Add edge operator:
\begin{eqnarray*}
\left[ 
\begin{array}{l}
v_1 \in V, \\
v_2 \in V, \\
(v_1, v_2) \not\in E \\
\end{array}
\right] & \,\, & 
\cfrac{ \langle V, E, l \rangle }
	  { \left\langle V, E \cup \{ (v_1, v_2) \}, l'(x) = \left\{ 
	  		\begin{array}{ll} 
			\top & \text{if} \,\, x = (v_1, v_2) \\  
			l(x) & \text{otherwise}
			\end{array} 
			\right.
	    \right\rangle }
\end{eqnarray*}

\item[(R4PO)] Refine vertex label:
\begin{eqnarray*}
\left[ 
\begin{array}{l}
v_1 \in V, \\
a = l(v_1), \\
b \in L, \\
a \prec b, \\
\nexists c\in L: a \preceq c \preceq b 
\end{array}
\right] & \,\, & 
\cfrac{ \langle V, E, l \rangle }
	  { \left\langle V, E, l'(x) = \left\{ 
	  		\begin{array}{ll} 
			b & \text{if} \,\, x = v_1 \\  
			l(x) & \text{otherwise}
			\end{array} 
			\right.
	    \right\rangle }
\end{eqnarray*}

\item[(R5PO)] Refine edge label (relative to $\preceq$):
\begin{eqnarray*}
\left[ 
\begin{array}{l}
e \in E, \\
a = l(e), \\
b \in L, \\
a \prec b, \\
\nexists c\in L: a \preceq c \preceq b 
\end{array}
\right] & \,\, & 
\cfrac{ \langle V, E, l \rangle }
	  { \left\langle V, E, l'(x) = \left\{ 
	  		\begin{array}{ll} 
			b & \text{if} \,\, x = e \\  
			l(x) & \text{otherwise}
			\end{array} 
			\right.
	    \right\rangle }
\end{eqnarray*}

\item[(R6PO)] Split edge operator:
\begin{eqnarray*}
\left[ 
\begin{array}{l}
v_*\not\in V, \\
(v_1, v_2) \in E, \\
b = l((v_1, v_2))
\end{array}
\right] & \,\, & 
\cfrac{ \langle V, E, l \rangle }
	  { \left\langle V \cup \{ v_* \}, \left( E \cup \{ (v_1, v_*), (v_*, v_2) \} \right) \setminus \{(v_1, v_2)\}, l'(x) = \left\{ 
	  		\begin{array}{ll} 
			\top & \text{if} \,\, x = v_* \\  
			b & \text{if} \,\, x = (v_1, v_*) \\  
			b & \text{if} \,\, x = (v_*, v_2) \\  
			l(x) & \text{otherwise}
			\end{array} 
			\right.
	    \right\rangle }
\end{eqnarray*}

\end{description}

\begin{proposition}\label{prop:rho-prec-finite-complete}
The downward refinement operator $\rho_{\preceq}$ defined by the rewrite rules R0PO, R1PO, R2PO, R3PO, R4PO, and R5PO above is locally finite and complete for the quasi-ordered set $\langle G, \sqsubseteq_{\prec} \rangle$ (where $\sqsubseteq_{\prec}$ represents subsumption relative to the partial order $\prec$).
\end{proposition}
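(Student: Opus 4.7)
The proof has two halves: local finiteness of $\rho_{\preceq}$, and completeness with respect to $\sqsubseteq_{\preceq}$. Local finiteness follows by a direct case analysis on the six rewrite rules applied to $g = \langle V, E, l\rangle$: R0PO is applicable at most once (only on the empty graph); R1PO and R2PO each contribute at most $|V|$ refinements, since the attachment vertex $v_1$ is chosen from $V$ and all new labels are $\top$; R3PO contributes at most $|V|^2$; and R4PO and R5PO contribute at most $|V|$ and $|E|$ times the maximum number of upper covers in $\langle L,\preceq\rangle$, respectively. Because $V$, $E$, and $L$ are all finite, $\rho_{\preceq}(g)$ is finite.

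For completeness, I would fix $g_1, g_2$ with $g_1 \sqsubset_{\preceq} g_2$ together with a witnessing mapping $m : V_1 \to V_2$, and then construct an explicit finite refinement sequence from $g_1$ to a graph isomorphic to $g_2$ (hence $\sqsubseteq_{\preceq}$-equivalent to it). The sequence proceeds in three phases. Phase 1 (labels on the embedded core): for each vertex $v \in V_1$, apply R4PO along a chain of covers in $L$ to refine $l_1(v)$ to $l_2(m(v))$, and symmetrically apply R5PO on edges; this works because $L$ is finite, so any strict inequality $a \prec b$ decomposes into finitely many covering steps. Phase 2 (missing vertices): choose an ordering of $V_2 \setminus m(V_1)$ given by a breadth-first traversal of $g_2$ rooted at $m(V_1)$, and for each $w$ apply R1PO or R2PO (whichever matches the direction of the tree edge connecting $w$ to the already-placed portion of $g_2$) with label $\top$, then specialize the new vertex and edge labels using R4PO and R5PO. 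Phase 3 (missing edges): for each edge of $E_2$ whose endpoints already have preimages in the current graph but are not yet joined, apply R3PO followed by R5PO to install it with the correct label. The degenerate case $g_1 = \langle\emptyset,\emptyset,l\rangle$ is covered by prefixing the sequence with a single R0PO application.

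The main obstacle is verifying that the constructed sequence is actually realizable, that is, that each intermediate graph satisfies the preconditions of the rule being applied and that the final graph coincides with $g_2$ up to isomorphism. The most delicate piece is Phase 2: connectedness of $g_2$ guarantees the existence of the BFS ordering rooted at $m(V_1)$, but one must pick the correct rule (R1PO vs. R2PO) at each step according to the direction of the chosen spanning-tree edge, and defer every additional incoming or outgoing edge from the new vertex into the existing subgraph to Phase 3 so that the connectivity precondition and the ``no duplicate edge'' precondition of R3PO are all respected. Once these bookkeeping details are sorted, composing the three phases yields a refinement sequence from $g_1$ to the desired graph, establishing $g_2 \in \rho_{\preceq}^{*}(g_1)$ and hence completeness.
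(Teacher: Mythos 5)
Your overall strategy is the same as the paper's: local finiteness by bounding the possible bindings of each rule (your counts agree with the paper's), and completeness by letting the witnessing mapping $m$ drive a finite sequence of rule applications that grows $g_1$ toward $g_2$ --- specializing labels along covering chains of the finite poset $\langle L, \preceq\rangle$ with R4PO/R5PO, attaching missing vertices with R1PO/R2PO using connectedness of $g_2$, and installing missing edges with R3PO. The paper merely interleaves these steps in one prioritized loop with an explicit termination measure (cover sizes plus label depth) rather than your three phases, so the difference is organizational, not substantive.

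The one step that fails as stated is the claim that the sequence ends at a graph \emph{isomorphic} to $g_2$. The proposition does not impose object identity, so the witnessing mapping $m: V_1 \to V_2$ may be non-injective; since downward refinement never deletes vertices, the final graph retains all of $V_1$ and can have strictly more vertices than $g_2$, so isomorphism is unattainable in general. What you actually need is mutual subsumption: the constructed graph subsumes $g_2$ (immediate, since every added vertex and edge is introduced and then specialized to match its target), and $g_2$ subsumes the constructed graph. For the latter you must fix, once and for all, a \emph{designated} preimage for each $w \in V_2$ (either a chosen $v$ with $m(v)=w$ or the vertex created for $w$ in Phase 2), and Phase 3 must install every edge of $E_2$ that is missing \emph{between the designated preimages} --- not merely every edge of $E_2$ for which no pair of preimages is joined. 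With a non-injective $m$, an edge $(w,w') \in E_2$ can be realized only by edges incident to \emph{different} preimages of $w$, in which case no single choice of preimage yields a subsumption mapping from $g_2$ back into your final graph, and the construction halts short of equivalence. (The paper's own proof is terse at the analogous point, where it inverts $m_t$ by ``picking any'' preimage, but it at least claims only equivalence, never isomorphism.) With that bookkeeping repaired and the conclusion restated as $\sqsubseteq_{\preceq}$-equivalence, your argument goes through and coincides with the paper's.
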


\begin{proposition}\label{prop:rho-prec-ideal}
The downward refinement operator  $\rho_{\preceq}$ defined by the rewrite rules R0PO, R1PO, R2PO, R3PO, R4PO and R5PO above is ideal (locally finite, complete, and proper) for the quasi-ordered set $\langle G, \sqsubseteq_{\prec} \rangle$ (where $\sqsubseteq_{\prec}$ represents subsumption relative to the partial order $\prec$), when we impose the Object Identity constraint.
\end{proposition}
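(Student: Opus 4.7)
The plan is to build directly on Proposition \ref{prop:rho-prec-finite-complete}, which gives local finiteness and completeness of $\rho_{\preceq}$ for $\langle G, \sqsubseteq_{\preceq}\rangle$. Local finiteness is unchanged under OI since OI does not alter the rule preconditions. Completeness under OI follows from the same constructive argument: if $g_1 \sqsubseteq_{\preceq} g_2$ via an OI-compliant (hence injective) mapping $m$, the step-by-step construction of $g_2$ from $g_1$ (adding a vertex for each $v \in V_2 \setminus m(V_1)$ via R1PO/R2PO, inserting missing edges via R3PO, and specializing labels via R4PO/R5PO) never identifies two distinct vertices of $g_1$ and so remains valid. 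The genuinely new content is therefore properness.

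The crux is the following lemma: if $g_1 \sqsubseteq_{\preceq} g_2$ and $g_2 \sqsubseteq_{\preceq} g_1$ both hold under OI, then there is a bijection $m: V_1 \to V_2$ preserving edges and preserving labels \emph{exactly} (not merely $\preceq$-related). To establish this, let $m$ and $m'$ witness the two subsumptions; OI makes both injective, and finiteness of $V_1, V_2$ combined with the existence of both maps forces $|V_1|=|V_2|$ so that $m, m'$ are bijections. The composition $f = m' \circ m$ is a permutation of $V_1$, hence $f^N = \mathrm{id}$ for some $N \geq 1$. For any $v \in V_1$, combining the two $\preceq$-inequalities at each step of the orbit yields
\[ l_1(v) \preceq l_1(f(v)) \preceq l_1(f^2(v)) \preceq \cdots \preceq l_1(f^N(v)) = l_1(v), \]
which by antisymmetry of $\preceq$ collapses to equalities throughout; in particular $l_1(v) = l_2(m(v))$. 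The identical argument applied to the induced edge mapping gives equality of edge labels.

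Given the lemma, properness follows by case analysis on the rewrite rules. R0PO, R1PO, and R2PO strictly increase $|V|$, and R3PO strictly increases $|E|$, so no bijection between the vertex or edge sets of $g$ and its refinement exists; the lemma's hypothesis fails and $g$ and its refinement cannot be OI-equivalent. R4PO and R5PO preserve both $|V|$ and $|E|$ but replace a single label $a$ by $b$ with $a \prec b$ (hence $a \neq b$); the multiset of vertex (resp.\ edge) labels changes by exactly one element, which precludes any label-preserving isomorphism and again rules out OI-equivalence. I expect the main technical obstacle to be the lemma, specifically handling the interplay between the two witness mappings and applying the antisymmetry argument uniformly to both vertex and edge labels.
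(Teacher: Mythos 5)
Your proposal is correct, and for the part that actually carries the weight---properness of the label-refinement rules R4PO and R5PO---it takes a genuinely different route from the paper. The paper argues locally and by counting: after R4PO changes a vertex label from $a$ to the strictly more specific $b$, it compares the sets $V^b = \{v \mid b \preceq l(v)\}$ in the original and refined graphs, observes the refined graph has exactly one more such vertex, and notes that an OI (hence injective) subsumption mapping back into the original would have to send $V_d^b$ injectively into the smaller $V_u^b$, which is impossible; R5PO is handled analogously on edges. You instead prove a global rigidity lemma: mutual OI-subsumption forces $|V_1|=|V_2|$, both witness maps are bijections, and iterating the composed permutation around its orbits and invoking antisymmetry of $\preceq$ collapses all label inequalities to equalities, so OI-equivalent graphs are exactly-label-preserving isomorphic; properness of R4PO/R5PO then follows from the change in the label multiset, and properness of R0PO--R3PO from the cardinality changes (for R3PO the relevant consequence of your lemma's proof is $|E_1|=|E_2|$, obtained from injectivity of the induced edge maps---worth stating explicitly, since the lemma as you phrase it only mentions the vertex bijection). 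The trade-off: the paper's argument is more elementary and entirely local to each rule, while your lemma is stronger and reusable---it would yield properness under OI uniformly for all the order-labeled operators in the paper (including the upward ones $\gamma_\preceq$ and $\gamma_{t\preceq}$) without redoing the $V^b$-style counting for each label rule. Your treatment of local finiteness and completeness under OI (inherited from Proposition \ref{prop:rho-prec-finite-complete}) matches the paper's.
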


\begin{proposition}\label{prop:rho-tprec-finite-complete}
The downward refinement operator $\rho_{t\preceq}$ defined by the rewrite rules R0PO, R1PO, R2PO, R3PO, R4PO, R5PO, and R6PO above is locally finite, and complete for the quasi-ordered set $\langle G, \overrightarrow{\sqsubseteq}_{\prec} \rangle$ (where $\overrightarrow{\sqsubseteq}_{\prec}$ represents trans-subsumption relative to the partial order $\prec$).
\end{proposition}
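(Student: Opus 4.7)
The plan is to prove local finiteness and completeness separately, combining and extending the arguments previously used for $\rho_{\preceq}$ (which handles the $\preceq$-aware label refinements) and for $\rho_{tf}$ (which handles the edge-splitting rule enabling trans-subsumption); the operator $\rho_{t\preceq}$ is essentially their union, so both proof structures lift.

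For local finiteness, I would bound, for any fixed $g = \langle V, E, l \rangle$, the number of distinct refinements contributed by each rule. R0PO fires at most once (only when $V = \emptyset$); R1PO and R2PO each yield at most $|V|$ refinements, since the fresh vertex and edge labels are forced to $\top$; R3PO yields at most $|V|^2 - |E|$ refinements; R4PO and R5PO yield at most $|V|$ and $|E|$ times the maximum number of immediate $\preceq$-covers of a label (a finite quantity because $L$ is finite); and R6PO yields at most $|E|$ refinements. Summing these bounds gives finiteness of $\rho_{t\preceq}(g)$.

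For completeness, suppose $g_1 \overrightarrow{\sqsubset}_{\preceq} g_2$, witnessed by mappings $m : V_1 \to V_2$ and $m_e : E_1 \to E_2^*$. I would construct a refinement sequence from $g_1$ to a graph equivalent to $g_2$ in four phases. (i) If $V_1 = \emptyset$, apply R0PO once to obtain a single $\top$-labeled vertex. (ii) For each edge $e \in E_1$ whose image $m_e(e)$ is a path of $k$ edges in $g_2$, apply R6PO exactly $k-1$ times to split $e$ into a $k$-edge chain with $k-1$ fresh $\top$-labeled intermediate vertices; this extends $m$ to a correspondence between the current graph and a subgraph $g' \subseteq g_2$, with every current edge still carrying its original $l_1$-label. (iii) Grow the current graph to match the full structure of $g_2$ by walking outward through the connected graph $g_2$: add each missing vertex adjacent to an already-covered one via R1PO or R2PO (with $\top$ labels), then add each remaining missing edge via R3PO. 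Connectivity of $g_2$ guarantees this traversal eventually covers all of $V_2$ and $E_2$. (iv) Apply R4PO and R5PO repeatedly to walk each vertex label and edge label from its current value up through covers of $\preceq$ to the target label in $g_2$; this is possible because after phases (i)--(iii) every current label is $\preceq$-below its target, either by the original trans-subsumption witness or trivially via $\top$.

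The main obstacle will be phase (ii) together with the verification that each subsequent label refinement in phase (iv) is actually applicable. The delicate point is that R6PO propagates the parent edge's label to both new edges, so every edge produced in phase (ii) bears the label $l_1(e)$ of the edge it descends from; by the definition of $\overrightarrow{\sqsubseteq}_{\preceq}$ each such label satisfies $l_1(e) \preceq l_2(e_i)$ for every $e_i$ in the path, which is exactly the hypothesis R5PO needs to finish the job in phase (iv). A secondary concern is that every intermediate graph is a valid connected DLG; this follows because R0PO--R6PO are all connectivity-preserving on connected DLGs (R6PO in particular preserves the $v_1 \rightsquigarrow v_2$ connection via the new vertex $v_*$), but it warrants an explicit check.
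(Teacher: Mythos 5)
Your proposal is correct and follows essentially the same route as the paper's proof: local finiteness by the same rule-by-rule counting, and completeness by using R6PO to split edges whose $m_e$-images are long paths, R1PO/R2PO/R3PO to grow the cover of the target graph using its connectivity, and R4PO/R5PO to walk labels up covering chains of the finite poset $\langle L, \preceq\rangle$, ending with a graph whose mapping can be inverted to yield equivalence. The only difference is organizational: the paper interleaves these steps in one loop with an explicit invariant and termination argument, while you arrange them into sequential phases, which is an inessential variation.
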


\begin{proposition}\label{prop:rho-tprec-ideal}
The downward refinement operator  $\rho_{t\preceq}$ defined by the rewrite rules R0PO, R1PO, R2PO, R3PO, R4PO, R5PO, and R6PO above is ideal (locally finite, complete, and proper) for the quasi-ordered set $\langle G, \overrightarrow{\sqsubseteq}_{\prec} \rangle$ (where $\overrightarrow{\sqsubseteq}_{\prec}$ represents trans-subsumption relative to the partial order $\prec$), when we impose the Object Identity constraint.
\end{proposition}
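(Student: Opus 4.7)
The plan is to prove the three components of idealness in turn, inheriting the local-finiteness and completeness arguments largely from Proposition \ref{prop:rho-tprec-finite-complete} and concentrating the real work on properness, which is the only genuinely new property.

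Local finiteness is immediate from a case-by-case count: on any $g = \langle V, E, l \rangle$ the rules produce at most $1$ (R0PO), $|V|$ (R1PO and R2PO each), $|V|^2 - |E|$ (R3PO), $|V|\cdot|L|$ (R4PO), $|E|\cdot|L|$ (R5PO) and $|E|$ (R6PO) refinements, each finite because $L$ is finite. For completeness I would adapt the chain construction used in Proposition \ref{prop:rho-tprec-finite-complete} so that every intermediate mapping respects OI: given $g_1\,\overrightarrow{\sqsubset}_{\preceq}\,g_2$ with an OI witness $(m,m_e)$, first use R4PO and R5PO to climb each label of $g_1$ along a cover chain in $\preceq$ to its target label; then use R6PO on each $e \in E_1$ exactly $|m_e(e)|-1$ times to subdivide it into a path of the right length; finally use R1PO, R2PO and R3PO to attach the vertices and edges of $g_2$ not in the image of $(m,m_e)$. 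The injectivity of $m$ and the internal disjointness of the paths $m_e(e)$ guaranteed by OI ensure that no step requires collapsing vertices and that the final graph is OI-equivalent to $g_2$.

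For properness I must show, rule by rule, that if $g'\in\rho_{t\preceq}(g)$ then $g'\not\overrightarrow{\sqsubseteq}_{\preceq}\,g$ under OI; the forward direction $g\,\overrightarrow{\sqsubseteq}_{\preceq}\,g'$ is immediate from the identity-like embedding inherent in each rule. Under OI the vertex mapping $m':V'\to V$ must be injective, so in R1PO, R2PO and R6PO we have $|V'|>|V|$ and no such injection exists; and in R4PO and R5PO the refined label $b$ satisfies $a\prec b$, so no mapping can give $b\preceq l(m'(\cdot))=a$, since $l(m'(\cdot))$ is some label of $g$ and in the affected vertex/edge the corresponding label is $a$.

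The main obstacle is R3PO, where $V'=V$ and cardinality alone does not suffice. Here an OI-injection $m':V\to V$ is forced to be a bijection, and the second OI clause on $m_e$ forbids any internal vertex of a path $m'_e(e)$ from being the image of another vertex of $V'$; but since $m'$ is surjective, every vertex of $V$ is already such an image, so each $m'_e(e)$ is restricted to its two endpoints and hence reduces to a single edge of $E$. This makes $m'_e$ effectively an edge-map $E'\to E$, and the injectivity of $m'$ together with endpoint matching forces this edge map to be injective as well, contradicting $|E'|=|E|+1$. Assembling the per-rule analyses yields properness, and combined with local finiteness and OI-completeness this completes the proof that $\rho_{t\preceq}$ is ideal under object identity.
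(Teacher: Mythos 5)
Your overall structure matches the paper's: local finiteness and completeness are inherited from Proposition \ref{prop:rho-tprec-finite-complete}, and properness is argued rule by rule through cardinality constraints forced by object identity. Your handling of R1PO, R2PO, R6PO and especially R3PO (OI plus $|V'|=|V|$ forces a vertex bijection, which collapses every $m_e$-path to a single edge and makes the induced edge map injective, contradicting $|E'|=|E|+1$) is essentially the paper's argument. The genuine gap is in your R4PO/R5PO case. You argue that no subsumption mapping $m'$ can exist because ``$l(m'(\cdot))=a$'' at the affected vertex or edge, i.e., you implicitly assume the relabeled element must be mapped back to its own original copy, whose label is $a$. Nothing forces this: an OI mapping may send the vertex now labeled $b$ to \emph{any} vertex of $g$ whose label is equal to or more specific than $b$, and such vertices may exist elsewhere in $g$. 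The correct argument, which the paper imports from Proposition \ref{prop:rho-prec-ideal}, is a pigeonhole count: letting $V^b$ denote the vertices whose label is $b$ or more specific, the refinement contains one more element of $V^b$ than $g$ does, every such vertex must be mapped (injectively, by OI) to a vertex of $g$ in $V^b$, and no such injection exists; the edge-label case R5PO is handled analogously, and under trans-subsumption it additionally needs the preliminary observation (which you do prove for R3PO) that the forced vertex bijection reduces every $m_e$-image to a single edge, so the counting applies to edges at all. Your sketch skips both of these steps, so as written the R4PO/R5PO case is not established.

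Two minor further points. First, you omit R0PO from the properness analysis; it is trivially covered by your cardinality argument, but it should be mentioned. Second, your completeness sketch orders the constructive chain as label refinement, then edge splitting, then attachment of missing structure; this leaves the interior vertices (labeled $\top$) and the duplicated labels of edges produced by R6PO unrefined, even though their targets in $g_2$ may carry more specific and mutually distinct labels, so further R4PO/R5PO applications after the R6PO steps are needed. The paper sidesteps this entirely by citing the completeness already proved in Proposition \ref{prop:rho-tprec-finite-complete}, whereas your attempt to re-run the construction with OI-respecting intermediate mappings, while a reasonable instinct, would need that correction to go through.
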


Proofs to all these propositions can be found in Appendix \ref{appendix:proofs}.

\subsection{Upward Refinement of Order-labeled DLGs}\label{sec:operators-upward-po}

The following rewriting rules define two upward refinement operators, $\gamma_{\preceq}$ and $\gamma_{t\preceq}$, for order-labeled DLGs:

\begin{description}
\item[(UR0PO)] Generalize vertex label:
\begin{eqnarray*}
\left[ 
\begin{array}{l}
v_1 \in V, \\
a = l(v_1), \\
b \in L, \\
b \prec a, \\
\nexists c\in L: a \preceq c \preceq b 
\end{array}
\right] & \,\, & 
\cfrac{ \langle V, E, l \rangle }
	  { \left\langle V, E, l'(x) = \left\{ 
	  		\begin{array}{ll} 
			b & \text{if} \,\, x = v_1 \\  
			l(x) & \text{otherwise}
			\end{array} 
			\right.
	    \right\rangle }
\end{eqnarray*}

\item[(UR1PO)] Generalize edge label:
\begin{eqnarray*}
\left[ 
\begin{array}{l}
e \in E, \\
a = l(e), \\
b \in L, \\
b \prec a, \\
\nexists c\in L: a \preceq c \preceq b 
\end{array}
\right] & \,\, & 
\cfrac{ \langle V, E, l \rangle }
	  { \left\langle V, E, l'(x) = \left\{ 
	  		\begin{array}{ll} 
			b & \text{if} \,\, x = e \\  
			l(x) & \text{otherwise}
			\end{array} 
			\right.
	    \right\rangle }
\end{eqnarray*}

\item[(UR2PO)] Remove Bridge (removes a top non-bridge edge of the graph, see Definition \ref{def:bridge}):
\begin{eqnarray*}
\left[ 
\begin{array}{l}
e \in E, \\
l(e) = \top, \\
e \not\in \mathit{bridges(\langle V, E, l \rangle)}  \\
\end{array}
\right] & \,\, & 
\cfrac{ \langle V, E, l \rangle }
	  { \left\langle V, E \setminus \{e\}, l \right\rangle }
\end{eqnarray*}

\item[(UR3PO)] Remove Leaf (removes a top vertex connected to the rest of the graph by at most a single edge):
\begin{eqnarray*}
\left[ 
\begin{array}{l}
v \in V, \\
l(v) = \top, \\
E_v = \{ e = (v_1, v_2) \in E \,\, | \,\, v = v_1 \,\, \vee \,\, v = v_2 \}, \\
|E_v| \leq 1, \\
\forall e \in E_v : l(e) = \top \\
\end{array}
\right] & \,\, & 
\cfrac{ \langle V, E, l \rangle }
	  { \left\langle V \setminus \{v\}, E \setminus E_v, l \right\rangle }
\end{eqnarray*}

Note: notice that here, $E_v$ is basically the set of edges that involve $v$, and by enforcing $|E_v| \leq 1$, we are basically selecting only those vertices $v \in V$ that are either: a) connected to the rest of the graph by at most one single edge (when $|E_v| = 1$), or b) when the graph is just composed of a single vertex and no edges (when $|E_v| = 0$), in which case, this operator turns the graph into $g_\top$.

\item[(UR4PO)] Shorten Edge (inverse of the R6PO operator defined above):
\begin{eqnarray*}
\left[ 
\begin{array}{l}
e_1 = (v_1, v_2) \in E, \\
e_2 = (v_2, v_3) \in E, \\
l(v_2) = \top, \\
l(e_1) = l(e_2), \\
\nexists e = (w_1,w_2) \in E | e \neq e_1 \,\, \wedge \,\, w_2 = v_2, \\
\nexists e = (w_1,w_2) \in E | e \neq e_2 \,\, \wedge \,\, w_1 = v_2
\end{array}
\right] & \,\, & 
\cfrac{ \langle V, E, l \rangle }
	  { \left\langle V \setminus \{v_2\}, \left( E \cup \{(v_1,v_3)\} \right) \setminus \{e_1, e_2\}, 
	  l'(x) \right\rangle }
\end{eqnarray*}
\noindent where
\[ l'(x) = \left\{ 
	  		\begin{array}{ll} 
			l(e_1) & \text{if} \,\, x = (v_1,v_3) \\  
			l(x) & \text{otherwise}
			\end{array} 
			\right.
\]
\end{description}

\begin{proposition}\label{prop:gamma-prec-finite-complete}
The upward refinement operator $\gamma_{\preceq}$ defined by the rewrite rules UR0PO, UR1PO, UR2PO, and UR3PO above is locally finite and complete for the quasi-ordered set $\langle G, \sqsubseteq_{\prec} \rangle$ (where $\sqsubseteq_{\prec}$ represents subsumption relative to the partial order $\prec$).
\end{proposition}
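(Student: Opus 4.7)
The plan is to establish local finiteness and completeness in turn. Local finiteness is immediate: UR0PO and UR1PO each iterate over the finite vertex/edge set of $g$ and, for each choice, over the set of labels that are immediate generalizations of the current label, which is finite because $L$ is finite. UR2PO and UR3PO iterate over finitely many edges and vertices respectively. Thus $\gamma_{\preceq}(g)$ is a finite union of finite sets.

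For completeness, suppose $g_1 \sqsubset_{\preceq} g_2$ with witness mapping $m : V_1 \to V_2$. I would exhibit an explicit finite sequence of upward refinements that transforms $g_2$ into a graph equivalent to $g_1$, organized in two phases. In the \emph{relabel} phase, for every vertex $u = m(v)$ in the image $m(V_1)$ the labels satisfy $l_1(v) \preceq l_2(u)$, so by finiteness of $L$ there is a finite chain of immediate generalizations from $l_2(u)$ down to $l_1(v)$, each step licensed by UR0PO applied at $u$; similarly for the image edges via UR1PO. For every vertex or edge of $g_2$ that is not in the image of $m$ --- call these the \emph{extras} --- I generalize its label all the way to $\top$ by the same mechanism.

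In the \emph{prune} phase, I iteratively remove the extras. The key structural observation is that any extra edge whose endpoints both lie in $m(V_1)$ is automatically a non-bridge in the current graph, because the image under $m$ of any path between the two preimages in $g_1$ (which exists by connectedness of $g_1$) provides an alternate route; such an edge is therefore removable by UR2PO. For the remaining extras, which involve at least one extra vertex: if some incident extra edge is a non-bridge, apply UR2PO; otherwise every remaining extra edge is a bridge, which forces the extra vertices together with their incident extra edges to form a forest dangling off the image subgraph, and any such forest has a leaf of degree one, which after the relabel phase carries label $\top$ on its single $\top$-labeled incident edge, so UR3PO applies. Each step strictly decreases the total count of extras, so the process terminates at the image subgraph, which is equivalent to $g_1$ under the mapping induced by $m$.

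The main obstacle is the correctness and termination of the prune phase: one must argue rigorously that at every intermediate stage either an extra non-bridge edge or an extra top-labeled leaf exists, and must verify that each application of UR2PO or UR3PO preserves connectedness of the whole graph so that subsequent rule applications remain valid. The bridge/forest dichotomy, together with the alternate-path argument grounded in the connectedness of $g_1$, is the heart of the proof; local finiteness and the relabel phase are essentially bookkeeping inside the finite lattice $L$.
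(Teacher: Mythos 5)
Your proposal takes essentially the same route as the paper's own proof: local finiteness by counting the finitely many bindings of each rule, and completeness via a staged procedure that first generalizes labels (image labels stepwise down toward those of $g_1$ via UR0PO/UR1PO, non-image labels to $\top$) and then prunes the non-image part, relying on exactly the leaf-or-non-bridge dichotomy that the paper isolates as Proposition \ref{prop:delta-leaf-or-nonbridge} (your inline forest/alternate-path argument re-derives that lemma, and your case of an extra edge between image vertices corresponds to the paper's $\Delta_v = \emptyset$ case). The differences are only organizational (strict two phases versus the paper's prioritized interleaving, and the dichotomy argued inline rather than cited), so the argument matches the paper's, including its level of rigor on the degenerate non-injective mappings that arise without object identity.
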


\begin{proposition}\label{prop:gamma-prec-ideal}
The upward refinement operator  $\gamma_{\preceq}$ defined by the rewrite rules UR0PO, UR1PO, UR2PO, and UR3PO above is ideal (locally finite, complete, and proper) for the quasi-ordered set $\langle G, \sqsubseteq_{\prec} \rangle$ (where $\sqsubseteq_{\prec}$ represents subsumption relative to the partial order $\prec$), when we impose the Object Identity constraint.
\end{proposition}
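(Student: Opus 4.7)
The proof extends Proposition~\ref{prop:gamma-prec-finite-complete}, so I would (re)verify local finiteness and completeness under the OI constraint and additionally establish properness. A useful preliminary observation is that, under OI, the equivalence $g \equiv g'$ collapses to graph isomorphism: mutual subsumption via injective mappings on finite vertex sets forces both mappings to be bijections, and iterating their composition around each cycle of the induced permutation gives $l(v) \preceq \cdots \preceq l(v)$, which collapses to equality by antisymmetry of $\preceq$, and similarly for edge labels. Hence properness reduces to verifying that each of the four rewrite rules produces a DLG that is not isomorphic to its input.

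Local finiteness is routine: UR0PO and UR1PO produce at most one refinement per immediate $\preceq$-predecessor of the chosen label (finitely many since $L$ is finite), and UR2PO and UR3PO produce at most $|E|$ and $|V|$ refinements respectively. For completeness, given $g_1 \sqsubset_\preceq g_2$ witnessed by an injective mapping $m: V_1 \to V_2$, I build a refinement chain from $g_2$ to a graph isomorphic to $g_1$ in three phases. \textbf{(i)} Use UR0PO and UR1PO to relabel every vertex of $V_2 \setminus m(V_1)$ and every edge of $E_2 \setminus m(E_1)$ all the way down to $\top$, by following an immediate-predecessor chain in $(L,\preceq)$. \textbf{(ii)} Remove the now-$\top$-labeled extras with UR2PO and UR3PO in an order that keeps the graph connected. \textbf{(iii)} For each $v \in V_1$ and $e \in E_1$, further generalize $l_2(m(v))$ down to $l_1(v)$ and $l_2(m(e))$ down to $l_1(e)$ with UR0PO, UR1PO; this is permitted precisely because $l_1(v) \preceq l_2(m(v))$ and $l_1(e) \preceq l_2(m(e))$ by the definition of $\sqsubseteq_\preceq$. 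The image subgraph $\langle m(V_1), m(E_1)\rangle$ is connected because $g_1$ is, so the final graph is isomorphic to $g_1$ (OI making $m$ injective).

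The main obstacle is step~(ii), which requires the following graph-theoretic lemma: if $H$ is a connected DLG and $H' \subseteq H$ is a connected subgraph such that every vertex and edge in $H \setminus H'$ is labeled $\top$, then there is a sequence of UR2PO and UR3PO applications reducing $H$ to $H'$ while every intermediate graph is connected. I would prove this by induction on $|V(H)\setminus V(H')| + |E(H)\setminus E(H')|$. Fix a spanning tree $T$ of $H$ that extends a spanning tree of $H'$. If $V(H)\setminus V(H')$ is nonempty, pick a $T$-leaf $v$ outside $H'$; any non-tree edge incident to $v$ in $H$ is a non-bridge of $H$ and can be stripped by UR2PO (preserving connectedness and the spanning tree), after which $v$ has exactly one incident edge (its tree edge), so UR3PO removes $v$ and that edge. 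If instead $V(H)=V(H')$ but $E(H) \supsetneq E(H')$, every extra edge is a non-bridge of $H$ because $T \subseteq H'$ already spans $V(H)$, so UR2PO removes it.

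Properness then follows immediately from the first paragraph: UR0PO and UR1PO strictly change the vertex- (resp.\ edge-) label multiset by replacing one occurrence of $a$ with some $b \prec a$; UR2PO strictly decreases $|E|$; UR3PO strictly decreases $|V|$. In each case the refined graph cannot be isomorphic to the input, hence not equivalent to it under OI. Combined with local finiteness and completeness, this shows $\gamma_\preceq$ is ideal under OI.
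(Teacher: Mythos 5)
Your proof is correct, but it gets to properness by a different route than the paper. The paper's own proof simply cites Proposition~\ref{prop:gamma-prec-finite-complete} for local finiteness and completeness and then argues properness rule by rule: equivalence under OI forces $|V_u|=|V_d|$ and $|E_u|=|E_d|$, which disposes of UR2PO and UR3PO, and for the relabeling rules it uses a pigeonhole argument on the up-sets $V^a=\{v \mid a \preceq l(v)\}$ (and the edge analogue), showing the generalized graph cannot subsume the original because one of these sets loses an element. You instead establish the stronger structural fact that under OI mutual subsumption collapses to label-preserving isomorphism (injective maps both ways force bijections, and cycling the label inequalities around the permutation and invoking antisymmetry forces label equality), after which properness is immediate since each rule perturbs an isomorphism invariant (the label multiset, $|V|$, or $|E|$). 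Your lemma is cleaner and reusable --- it would settle properness for all the regular-subsumption operators in the paper at once, though it would need adaptation for the trans-subsumption orders, where the paper's cardinality argument has to be supplemented by reasoning about $m_e$. You also re-derive completeness with respect to the OI-restricted order directly, via the three-phase chain and a spanning-tree removal lemma in place of the paper's Proposition~\ref{prop:delta-leaf-or-nonbridge}; this is slightly more careful than the paper, which silently reuses the non-OI completeness result, and your phase ordering correctly respects the $\top$-label preconditions of UR2PO/UR3PO. Note that, like the paper, you only reach a graph equivalent (in your case isomorphic) to the target rather than the target itself, which is the standard the paper's own completeness proofs adopt.
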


\begin{proposition}\label{prop:gamma-tprec-finite-complete}
The upward refinement operator $\gamma_{t\preceq}$ defined by the rewrite rules UR0PO, UR1PO, UR2PO, UR3PO, and UR4PO above is locally finite, and complete for the quasi-ordered set $\langle G, \overrightarrow{\sqsubseteq}_{\prec} \rangle$ (where $\overrightarrow{\sqsubseteq}_{\prec}$ represents trans-subsumption relative to the partial order $\prec$).
\end{proposition}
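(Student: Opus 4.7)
My plan is to handle local finiteness by a direct enumeration argument and to handle completeness by a ``reversal of $\rho_{t\preceq}$'' argument that leverages Proposition \ref{prop:rho-tprec-finite-complete}. The idea for completeness is that the rules UR0PO--UR4PO were clearly designed as syntactic inverses of the downward rules R4PO, R5PO, R3PO, R1PO/R2PO/R0PO, and R6PO respectively; so if I can show that every single downward step $h \to h'$ produced by $\rho_{t\preceq}$ can be undone by a single upward step $h' \to h$ in $\gamma_{t\preceq}$, then reversing a completeness witness for $\rho_{t\preceq}$ gives a completeness witness for $\gamma_{t\preceq}$.

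\textbf{Local finiteness.} For a fixed $g=\langle V,E,l\rangle$, I would bound the output of each rule separately: UR0PO produces at most $|V|\cdot|L|$ refinements (one choice of vertex, one of label); UR1PO produces at most $|E|\cdot|L|$; UR2PO at most $|E|$; UR3PO at most $|V|$; UR4PO at most $|E|$ (the middle vertex determines both $e_1$ and $e_2$ uniquely). Since $L$, $V$, $E$ are all finite by assumption, $\gamma_{t\preceq}(g)$ is a finite union of finite sets.

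\textbf{Completeness.} Suppose $g_1 \sqsubset_{\overrightarrow{\preceq}} g_2$. By Proposition \ref{prop:rho-tprec-finite-complete} there is a finite chain $g_1 = h_0, h_1, \dots, h_n = g_2$ with $h_{i+1} \in \rho_{t\preceq}(h_i)$. I will verify, case by case on which downward rule was fired at step $i$, that $h_i \in \gamma_{t\preceq}(h_{i+1})$; then the reversed chain $g_2 = h_n, h_{n-1}, \dots, h_0 = g_1$ witnesses $g_1 \in \gamma_{t\preceq}^*(g_2)$. The cases are: (R4PO $\leftarrow$ UR0PO) and (R5PO $\leftarrow$ UR1PO) are immediate because the ``covering'' conditions on $\preceq$ are symmetric in $a$ and $b$; (R0PO, R1PO, R2PO $\leftarrow$ UR3PO) because the freshly introduced $v_*$ has label $\top$, is incident to at most one edge, and that edge (if any) also carries label $\top$, so UR3PO's guard holds; (R3PO $\leftarrow$ UR2PO) because R3PO adds an edge between two vertices of the (already connected) graph $h_i$, so in $h_{i+1}$ the new edge lies on a cycle and is therefore not a bridge, and its label is $\top$; (R6PO $\leftarrow$ UR4PO) because the fresh middle vertex $v_*$ has label $\top$, its only two incident edges $(v_1,v_*)$ and $(v_*,v_2)$ carry the same label $b$, and by freshness no other edge of $h_{i+1}$ touches $v_*$, so the last two ``no other edge'' guards of UR4PO are satisfied automatically.

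\textbf{Expected obstacle.} The cleanest concern is the R3PO $\to$ UR2PO reversal: UR2PO requires the edge to be a non-bridge in the \emph{current} graph $h_{i+1}$, not in the final $g_2$. I expect to dispatch this by the cycle observation above, using the standing assumption that all graphs in the chain are connected (so $v_1$ and $v_2$ are already joined by a path in $h_i$, and the new edge closes a cycle in $h_{i+1}$). A secondary subtlety is that the reversal argument only produces \emph{some} upward chain from $g_2$ to $g_1$; I need not (and do not) claim the chain corresponds step-for-step to a ``canonical'' trans-subsumption witness $(m,m_e)$, which keeps the proof short and avoids a parallel induction on the mappings.
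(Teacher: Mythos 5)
Your local-finiteness argument is fine, and your case analysis showing that each individual rewrite of $\rho_{t\preceq}$ applied to a connected graph can be undone by one rule of $\gamma_{t\preceq}$ is essentially correct (including the one delicate case you flag: an edge added by R3PO closes a cycle because its endpoints were already connected, so UR2PO applies, and it carries label $\top$). The approach, however, is genuinely different from the paper's, which proves completeness directly: starting from the more specific graph $g_d$ it generalizes labels with UR0PO/UR1PO, deletes delta leaves and non-bridge delta edges with UR2PO/UR3PO (using Proposition \ref{prop:delta-leaf-or-nonbridge}), shortens mapped paths with UR4PO, and terminates by inverting the subsumption mapping.

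The genuine gap is in the reduction itself: it needs the downward chain to end at $g_2$ \emph{exactly}, and without object identity this is not something $\rho_{t\preceq}$ can deliver. Every rule of $\rho_{t\preceq}$ weakly increases the number of vertices and edges and only specializes labels, yet a strictly more specific graph can be syntactically smaller. For instance, let $g_1$ have vertices $v_1,v_2$ labeled $a$ and one edge $(v_1,v_2)$ labeled $r$, and let $g_2$ be a single vertex $w$ labeled $a$ with the loop $(w,w)$ labeled $r$: then $g_1 \overrightarrow{\sqsubseteq}_{\preceq} g_2$ (map both vertices to $w$) while $g_2 \not\overrightarrow{\sqsubseteq}_{\preceq} g_1$ (no directed cycle in $g_1$), but no refinement chain starting from the two-vertex graph $g_1$ can ever equal the one-vertex graph $g_2$. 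So Proposition \ref{prop:rho-tprec-finite-complete} can only be true (and is only proved in the paper) in the ``up to equivalence'' sense: the chain ends at some $h_n \equiv g_2$ with possibly $h_n \neq g_2$. Reversing it then shows $g_1 \in \gamma_{t\preceq}^*(h_n)$, not $g_1 \in \gamma_{t\preceq}^*(g_2)$, and equivalence cannot be absorbed at the \emph{starting} point of a refinement chain: completeness of the upward operator requires a chain that begins at $g_2$ itself. Closing this hole would require showing that upward reachability of $g_1$ from one representative of the equivalence class of $g_2$ implies it from $g_2$, which is essentially the statement to be proved; this is why the paper argues directly from $g_2$ using the cover/delta machinery rather than by reversing the downward proof.
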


\begin{proposition}\label{prop:gamma-tprec-ideal}
The upward refinement operator  $\gamma_{t\preceq}$ defined by the rewrite rules UR0PO, UR1PO, UR2PO, UR3PO, and UR4PO above is ideal (locally finite, complete, and proper) for the quasi-ordered set $\langle G, \overrightarrow{\sqsubseteq}_{\prec} \rangle$ (where $\overrightarrow{\sqsubseteq}_{\prec}$ represents trans-subsumption relative to the partial order $\prec$), when we impose the Object Identity constraint.
\end{proposition}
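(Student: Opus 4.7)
The plan is to mirror the strategy used for the earlier ideal-operator propositions, combining the partial-order label reasoning of Proposition \ref{prop:gamma-prec-ideal} with the trans-subsumption path machinery of Proposition \ref{prop:rho-upward-tf-ideal}. Local finiteness is immediate because each precondition of UR0PO through UR4PO ranges over finitely many objects (a vertex of $V$, an edge of $E$, an immediate $\preceq$-predecessor of a label already in $g$, or an adjacent edge pair), so $\gamma_{t\preceq}(g)$ is finite. Completeness under OI is handled by adapting Proposition \ref{prop:gamma-tprec-finite-complete}: given $g_1 \overrightarrow{\sqsubset}_\preceq g_2$ witnessed by OI-compatible mappings $(m, m_e)$, I would strip $g_2$ down to $g_1$ by first applying UR4PO to each interior vertex of every path $m_e(e)$ (which by construction has top label and only the two incident edges of the path, and are non-bridges in the current graph), then UR2PO to remove top-labeled edges outside the image of $m_e$, then UR3PO to remove top-labeled vertices outside $m(V_1)$, and finally UR0PO/UR1PO to generalize the remaining labels to those of $g_1$. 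Doing the peelings in this outer-to-inner order keeps the intermediate graphs connected and preserves the OI trans-subsumption from the current graph down to $g_1$.

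The main new content is properness under OI, i.e.\ showing that for every $g' \in \gamma_{t\preceq}(g)$ we have $g \not\overrightarrow{\sqsubseteq}_\preceq g'$. I would split by rule type. For the three size-reducing rules (UR2PO, UR3PO, UR4PO) the argument rests on the fact that OI forces $m$ to be injective. UR3PO and UR4PO both decrease $|V|$, so no injection $V_g \to V_{g'}$ exists and we are done. UR2PO preserves $|V|$ but decreases $|E|$; here I would first observe that a hypothetical witness $(m, m_e)$ would make $m$ a bijection, and then use the OI constraint on $m_e$ to conclude that no interior vertex of any $m_e(e)$ can equal $m(v)$ for some $v \in V_g$; since $m$ is surjective, this forces every $m_e(e)$ to have length one, turning $m_e$ into an injection $E_g \to E_{g'}$, contradicting $|E_g| = |E_{g'}| + 1$.

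For the two label-generalizing rules (UR0PO, UR1PO) the graph has the same size, and the same length-one collapse of $m_e$ applies. The key device for UR0PO is a counting argument: let $v_0$ be the refined vertex with $l_{g'}(v_0) = b \prec l_g(v_0) = a$, and set $T = \{v \in V_g : l_g(v) \succeq a\}$, $S = \{v \in V_{g'} : l_{g'}(v) \succeq a\}$. Since $b \not\succeq a$ and all other labels are unchanged, $S = T \setminus \{v_0\}$, so $|T| = |S|+1$. Any witness of $g \overrightarrow{\sqsubseteq}_\preceq g'$ would restrict to an injection $T \to S$, a contradiction. The case of UR1PO is the edge-analogue of this argument, using the previously established fact that $m_e$ behaves as an injection $E_g \to E_{g'}$.

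The main obstacle I expect is the completeness half rather than properness: justifying that the greedy stripping of $g_2$ described above can always be carried out in a legal order, in particular that each edge removed by UR2PO is indeed a non-bridge in the current graph and that each vertex collapsed by UR4PO truly has exactly two incident edges of matching label at the moment it is processed. Processing the $m_e(e)$-paths from inside out (collapse first, then delete leftover top-labeled scaffolding) is the natural fix, but it requires a careful bookkeeping lemma showing that at each intermediate graph the partially-used witness $(m, m_e)$ still certifies OI trans-subsumption over $g_1$.
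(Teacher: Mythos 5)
Your treatment of properness---which is the only genuinely new content of this proposition, since the paper itself just cites Proposition \ref{prop:gamma-tprec-finite-complete} for local finiteness and completeness---is correct and essentially the paper's own argument: under OI the vertex mapping is injective, so mutual subsumption forces $|V|$ to be preserved; surjectivity of $m$ together with the OI condition on $m_e$ collapses every $m_e(e)$ to a single edge, so $|E|$ is preserved as well, ruling out UR2PO, UR3PO and UR4PO; and the counting sets $T$ and $S$ for UR0PO/UR1PO are exactly the $V_u^a$/$V_d^a$ device used in the proof of Proposition \ref{prop:gamma-prec-ideal}. So far, so good.

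The genuine gap is in your completeness plan, and it is not just the bookkeeping worry you flag at the end: the order of operations you propose cannot be executed. The claim that interior vertices of each path $m_e(e)$ ``by construction have top label and only the two incident edges of the path'' is false for the starting graph $g_2$: trans-subsumption relative to $\preceq$ only constrains the \emph{edge} labels along the path to be $\succeq l_1(e)$, so interior vertices may carry arbitrary labels, may have further incident edges belonging to $\Delta_e$, and the two consecutive path edges need not have equal labels---so UR4PO's preconditions ($l(v_2)=\top$, $l(e_1)=l(e_2)$, no other incident edges) generally fail at the outset. Likewise UR2PO and UR3PO only delete $\top$-labelled edges and vertices, whereas delta edges and vertices of $g_2$ need not be $\top$-labelled; hence these removals cannot precede label generalization, yet your plan puts UR0PO/UR1PO last. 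Your suggested fix (``collapse first, then delete leftover scaffolding'') fails for the same reason. The workable order is the reverse of yours, and it is exactly the one in the proof of Proposition \ref{prop:gamma-tprec-finite-complete}: first generalize labels (vertices outside the image of $m$ and edges in $\Delta_e$ up to $\top$; cover vertices and all edges along each $m_e(e)$ down to the corresponding labels of $g_1$, which in particular makes the two edges at every interior vertex equal), then remove delta leaves and non-bridge delta edges, and only once $\Delta_v=\Delta_e=\emptyset$ apply UR4PO to shorten the remaining paths. Alternatively you can do what the paper does and simply invoke that proposition, checking that each step of its procedure preserves the OI witness, rather than re-deriving the stripping procedure with a new ordering.
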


Proofs to all these propositions can be found in Appendix \ref{appendix:proofs}.

\subsection{Trees}

In many domains of interest, data of interest can be represented using trees, which are more restricted than full-fledged DLGs. If we know our graphs are actually trees, refinement operators can be defined that exploit this fact to improve computational efficiency (and the implementation of the subsumption operations defined above can be made more efficient). We will not provide any additional theoretical results for the case of trees, since, in theory, given a refinement operator for general DLGs, it is trivial to turn it into a refinement operator for trees by filtering out all the refinements that are not trees. However, for the sake of efficiency, \rhog\ implements special versions for trees of all the refinement operators, and subsumption relations, which are significantly more efficient.

\subsection{Refinement Graphs}

Refinement operators allow us to define a series of concepts relevant for similarity assessment.

\begin{definition}[Refinement Graph]
Given a downward refinement operator $\rho$, and the (infinite) set $G$ of all possible directed labeled graphs that can be constructed with a given set of labels $L$, we define a refinement graph $\langle G, \rho \rangle$, as a graph where each graph $g \in G$ is a vertex, and there is an edge between $g_1$ and $g_2$ if $g_2$ is a downward refinement of $r_1$ according to $\rho$ ($g_2 \in \rho(g_1)$).
\end{definition}

Notice that the refinement graph is naturally structured as a partial order, whose top element is the graph $g_\top = \langle \{\}, \{\}, l \rangle$ (which subsumes any other graph in $G$).

\begin{definition}[Refinement Path]
A finite sequence of graphs $[g_1, ..., g_n]$ is a {\em refinement path} $g_1 \xrightarrow{\rho} g_n$ between $g_1$ and $g_n$ when for each $1 \leq i < n$, $g_{i+1} \in \rho(g_{i})$.
\end{definition}

We will write $|g_1 \xrightarrow{\rho} g_2|$ to denote the length of the shortest refinement path between $g_1$ and $g_2$, where the length is measured as the number of times that a refinement operator needs to be applied to $g_1$ to reach $g_2$ ($|g_1 \xrightarrow{\rho} g_1| = 0$).


\section{Refinement-based Similarity Measures Between Graphs}

Graph subsumption introduces a concept of {\em information order} between graphs: if a graph $g_1$ subsumes another graph $g_2$, then all the information in $g_1$ is also in $g_2$. Thus, if we find the most specific graph $g$ that subsumes two other graphs $g_1$, and $g_2$, then $g$ captures the information that $g_1$ and $g_2$ have in common. The intuition of the refinement-based similarity functions is to first compute such $g$, and then numerically quantify the amount of information in $g$, which is a measure of how similar $g_1$ and $g_2$ are: the more information they share, the more similar they are. In order to numerically measure the amount of information in a graph, we will use the intuition that each time we apply a downward refinement operation, we introduce one new piece of information, so the length of the refinement path between $g_\top$ and a given graph $g$ gives us a measure of the amount of information contained in it.

In our previous work \cite{ontanon2009similarity,ontanon2012similarity}, we introduced the concept of refinement-based similarity measures in the context of feature-terms \cite{carpenterbook} (a representation formalism used in structured machine learning and in natural language processing), and later extend this idea to other formalisms such as Description Logics \cite{sanchez2013refinement}, and partial-order plans \cite{sanchez2014least}. Here, we extend these ideas further to directed labeled graphs (DLGs) by using the different subsumption relations and refinement operators introduced above. 

\subsection{Anti-unification-based Similarity}

\begin{definition}[Anti-unification-based Similarity]
Given two graphs $g_1$, and $g_2$, a refinement operator $\rho$ and a subsumption relation $\sqsubseteq$, the {\em anti-unification-based similarity} $S_\lambda$ is defined as:
\[S_\lambda(g_1, g_2) = \frac{|g_\top \xrightarrow{\rho} (g_1 \sqcap g_2)|}
					         {|g_\top \xrightarrow{\rho} (g_1 \sqcap g_2)| + 
					          |(g_1 \sqcap g_2) \xrightarrow{\rho} g_1| + 
					          |(g_1 \sqcap g_2) \xrightarrow{\rho} g_2|}\]
\end{definition}

Intuitively, this measures the amount of information shared between $g_1$ and $g_2$ (size of their anti-unifier), and normalizes it by the total amount of information: shared information ($|g_\top \xrightarrow{\rho} (g_1 \sqcap g_2)|$), information in $g_1$ but not in $g_2$ ($|(g_1 \sqcap g_2) \xrightarrow{\rho} g_1|$, and information in $g_2$ but not in $g_1$ ($|(g_1 \sqcap g_2) \xrightarrow{\rho} g_2|$). The reader is referred to our previous work \cite{ontanon2012similarity} for a more in-depth description and analysis of the anti-unification-based similarity.

One interesting thing about $S_\lambda$ is that given a refinement operator, and a subsumption relation, it is applicable to any representation formalism: there is nothing specific to directed labeled graphs in this formulation. In other words, the refinement operator allows us to abstract away from the underlying representation formalism.

\subsection{Properties-based Similarity}

The key idea of the {\em properties-based similarity} measure is to decompose each graph into a collection of smaller graphs (which we will call {\em properties}), and then count how many of these properties are shared between two given graphs. The key advantage of this similarity measure is that each of these properties can be seen as a {\em feature}, and thus, we can apply feature weighting methods in order to improve accuracy in the context of machine learning methods. Let us first explain how to decompose a graph into a collection of properties (operation, which we call {\em disintegration} \cite{ontanon2012similarity}).

\subsubsection{Graph Disintegration}

Consider a refinement path $g_0 = [g_\top, ..., g_n]$ between the most general graph $g_\top$ and a given graph $g_n$, generated by repeated application of either a downward refinement operator (going from $g_\top$ to $g_n$), or by repeated application of an upward refinement operator (going from $g_n$ to $g_\top$). The intuition of graph disintegration is the following: each time an upward refinement operator is applied to a graph $g_{i+1}$ to generate a more general graph $g_{i}$, a {\em piece of information} is removed, which $g_i$ does not have, and $g_{i+1}$ had: so, each step in the refinement path removes a piece of information (when moving from $g_n$ to $g_\top$). We would like the {\em disintegration} operation to decompose graph $g_n$ into exactly $n$ properties, each of them representing each of the pieces of information that were removed along the refinement path. 

In order to do this, we will introduce the concept of the {\em remainder} operation (introduced by Onta\~{n}\'{o}n and Plaza \cite{ontanon2012similarity} for feature terms):

\begin{definition}[Remainder]
Given two graphs $g_u$ and $g_d$ such that $g_u \sqsubseteq g_d$, the remainder $r(g_d,g_u)$ is a graph $g_r$ such that $g_r \sqcup g_u \equiv g_d$, and $\nexists g \in G$ such that $g \sqsubset g_r$ and $g \sqcup g_u \equiv g_d$.
\end{definition}

In other words, the remainder is the most general graph $g_r$ such that when unifying $g_r$ with the most general of the two graphs ($g_u$), recovers the most specific of the two graphs ($g_d$). Now, given two graphs $g_i$, and $g_{i+1}$, such that $g_i$ is an upward refinement of $g_{i+1}$ via the upward refinement operator $\gamma$, $r(g_{i+1},g_i)$ is precisely the graph that captures the piece of information that $\gamma$ ``removed'' from $g_{i+1}$. Moreover, it is possible to compute the remainder of a generalization operation without the need to actually perform any type of unification operation, which can be computationally expensive. 

\begin{algorithm}[tb] \caption{Remainder: $r(g_u, g_d, \gamma)$}\label{alg:remainder}
\begin{algorithmic}[1]
\medskip
\STATE $t := 0, \pi_0 := g_d$
\STATE $A = \{g \in \gamma(g_d) | g_u \sqsubseteq g\}$
\WHILE {$\mathit(true)$}
\STATE $N = \{g \in \gamma(\pi_t) \vert g \not\sqsubseteq g_u \; \wedge  \nexists g' \in A | g \sqsubseteq g'\}$
\IF {$N = \emptyset$}
\RETURN $\pi_t$
\ENDIF
\STATE $\pi_{t+1}$ selected stochastically from $N$ 
\STATE $t := t + 1$
\ENDWHILE
\end{algorithmic}
\end{algorithm}

The algorithm for computing the remainder can be shown in Algorithm \ref{alg:remainder} (which is an optimized version of the algorithm presented by Onta\~{n}\'{o}n and Plaza \cite{ontanon2015refinement}), and works as follows: the algorithm runs for a series of iterations. At each iteration $t$, the algorithm starts with a candidate graph $\pi_t$, which is ensured to satisfy $\pi_t \sqcup g_u \equiv g_d$. At each iteration $t$, the current $\pi_t$ is more general than the candidate graph in the previous iteration, $\pi_{t-1}$. Whenever, $\pi_t$ cannot be generalized any further while ensuring $\pi_t \sqcup g_u \equiv g_d$, the algorithm stops, and returns the current graph $\pi_t$ (line 5). At each iteration, the algorithm computes the set $N$ of generalizations of $\pi_t$ that still ensure recovering $g_d$ when being unified with $g_d$. This is done without having to compute any unification operation, based on the following idea:
\begin{itemize}
\item We know that any generalization $g$ of $\pi_t$ must subsume $g_d$, since it is being generated using $\gamma$, starting from $\pi_0 = g_d$.
\item We know that $g_u \sqsubseteq g_d$ (otherwise, the remainder operation is not defined).
\item To check that $g \sqcup g_u \equiv g_d$ we just thus need to check that $\nexists g' \in G$ such that $g' \sqsubset g_d$, $g \sqsubseteq g'$, and $g_u \sqsubseteq g'$. To check this, we precompute the set $A$ that contains all the generalizations of $g_d$ that are subsumed by $g_u$, and then we make sure that $g$ does not subsume any of them. If these conditions are satisfied, and assuming $\gamma$ is complete, by Definition \ref{def:unification} $g_d$ must be a unifier of $g$ and $g_u$. 
\end{itemize}

Notice that step 8 of the algorithm is stochastic, since there might be multiple graphs that satisfy the definition of remainder. For the purposes of similarity assessment, it is enough to obtain one of them. A mode detailed discussion of this, in the context of refinement graphs can be found in the previous work of Onta\~{n}\'{o}n and Plaza \cite{ontanon2012similarity}.

Given the remainder operation, {\em disintegration} is defined as follows: 

\begin{definition}[Disintegration]
Given a finite refinement path $p = [g_0 = g_\bot, ..., g_n]$, a disintegration of the graph $g_n$ is the set $D(p) = \{r(g_{i+1}, g_i)|0 \leq i < n\}$.
\end{definition}

Notice that we use the expression ``a disintegration'' instead of ``the disintegration'', since there might be more than one disintegration because the remainder between two consecutive graphs in the path is not unique. In practice, given a graph $g$ and complete upward refinement operator $\gamma$, we can generalize it step by step by successive application of $\gamma$ (selecting one of the possible generalizations $\gamma$ produces stochastically), and use Algorithm \ref{alg:remainder} with $\gamma$ to generate a property at each step. We will write $D_\gamma(g)$ to a disintegration generated in this way.

Notice, moreover, that one of the unifiers (since the unification operation might not be unique) of all the properties in a disintegration of a graph $g$ is actually $g$ itself (i.e., we can ``reintegrate'' all the properties, to recover the original graph). This means that disintegration can preserves most of the information in the original graph (it preserves it all only when there is only a single unifier of all the properties). More theoretical properties of the disintegration operation are discussed by Onta\~{n}\'{o}n and Plaza \cite{ontanon2015refinement}.

\subsubsection{Properties-based Similarity Definition}

\begin{definition}[Properties-based Similarity]
Given two graphs $g_1$ and $g_2$, a complete upward refinement operator $\gamma$, and a subsumption relation $\sqsubseteq$, the {\em properties-based similarity measure}, $S_\pi$ is defined as follows:
\[S_\pi(g_1, g_2) = \frac{|\{\pi \in P \,\,\, | \,\,\, \pi \sqsubseteq g_1 \,\,\, \wedge \,\,\, \pi \sqsubseteq g_2\}|}
						 {|P|}\]
\noindent where $P = D_\gamma(g_1) \cup D_\gamma(g_2)$.
\end{definition}

In other words, $S_\pi(g_1, g_2)$ is defined as the number of properties that are shared between both graphs divided by the number of properties that at least one of them have. Moreover, in certain situations, it can be shown that $S_\pi(g_1, g_2)$ and $S_\lambda(g_1, g_2)$ are equivalent \cite{ontanon2012similarity}. The intuition behind this is that the number of properties they share should be equivalent to the length of the refinement path from $g_\top$ to their anti-unification.

\subsubsection{Property Weighting}

The main advantage of $S_\pi$ with respect to $S_\lambda$ is that it allows for weighting the contribution of each property in the similarity computation, and thus, the similarity measure can be fitted to a given supervised learning task. A procedure to compute these weights is as follows. Given a supervised machine learning task where training examples are of the form $(g, y)$, where $g$ is a graph and $y$ is a label, and given a training set $T = \{(g_1, y_1), ..., (g_n, y_n)\}$, and a complete upward refinement operator $\gamma$, we can assign a weight to a given property $\pi$ using the Quinlan's {\em information gain} measure \cite{Quinlan86ID3}:

\[w(\pi) = \frac{H(T_\pi) \times |T_\pi| + H(T \setminus T_\pi) \times |T \setminus T_\pi|}
			    {|T|} \]

\noindent where $T_\pi$ is the set of training examples from $T$ whose graph is subsumed by $\pi$, and $H(T)$ represents the entropy of the set of training examples $T$ with respect to the partition induced by their labels $y$.

Given such set of weights, we can now define the {\em weighted properties-based similarity} as follows:

\begin{definition}[Weighted Properties-based Similarity]
Given two graphs $g_1$ and $g_2$, a training set $T = \{(g'_1, y_1), ..., (g'_n, y_n)\}$, a complete upward refinement operator $\gamma$, and a subsumption relation $\sqsubseteq$, the {\em properties-based similarity measure}, $S_\pi$ is defined as follows:
\[S_{w\pi}(g_1, g_2) = \frac{\sum_{\pi \in P \,\,\, | \,\,\, \pi \sqsubseteq g_1 \,\,\, \wedge \,\,\, \pi \subseteq g_2} w(\pi)}
						 	{\sum_{\pi \in P} w(\pi)}\]
\noindent where $P = D_\gamma(g_1) \cup D_\gamma(g_2)$.
\end{definition}

Intuitively, $S_{w\pi}$ is equivalent to $S_{\pi}$, except that $S_{w\pi}$ counts the sum of the weights of the properties, whereas $S_{\pi}$ counts the number of properties. Also, notice that in practice, we can precompute the weights for all the properties resulting from disintegrating all the graphs in the training set, and we would only need to compute weights during similarity assessment, if the disintegration of either $g_1$ or $g_2$ yields a property that no other graph in the training set had.

\section{Related Work}

Two lines of work are related to $\rho$G, namely refinement operators, and similarity measures for structured representations. We summarize existing work on both areas here.

\subsection{Refinement Operators}

Since the introduction of refinement operators to ILP by Shapiro \cite{shapiro1981inductive}, there has been work on defining operators for multiple formalisms, and on understanding their fundamental properties. Here we provide a summary of existing refinement operators and their properties:
\begin{itemize}
	\item {\bf (Horn) Clauses}: 
	\begin{itemize}
		\item Shapiro \cite{shapiro1981inductive} defined a downward operator (proved to be non-complete by Laird \cite{laird2012learning}).
		\item Laird \cite{laird2012learning} proposed a complete version of Shapiro's downward operator. 
		\item Ling and Dawes \cite{ling1990sim} proposed an upward operator (not complete nor proper).
		\item Van der Laag and Nienhuys-Cheng \cite{van1994existence} proposed a complete version of Ling and Dawes'.
	\end{itemize}
	
	\item {\bf Datalog}: Esposito et al. \cite{esposito1996refinement} proposed the only known refinement operator for Datalog, which is complete, finite and proper for clauses sorted by $\theta$-subsumption under object identity.
	
	\item {\bf Description Logics}: 
	\begin{itemize}
		\item Badea and Nienhuys-Cheng \cite{badea2000refinement} defined a complete and proper, although redundant, operator for the $\mathcal{ALER}$ logic. 
		\item Lehmann and associates have defined refinement operators for $\mathcal{ALC}$ \cite{LehmannH07} (complete but not finite) and $\mathcal{EL}$ \cite{LehmannH09} (complete, proper and finite, used in our previous work \cite{DBLP:conf/iccbr/Sanchez-Ruiz-GranadosOGP11}). They proved that ideal refinement operators for more expressive Description Logics than $\mathcal{ALC}$ do not exist. 
		\item In our recent work \cite{sanchez2013refinement} we have shown an alternative route: and propose a refinement operator for DL conjunctive queries, instead of for concept definitions.
	\end{itemize}
	
	\item {\bf Feature Logics} (a.k.a., feature terms, or feature structures): in our previous work \cite{ontanon2012similarity}, we defined the only known upwards and downwards refinement operators for different subsets of Feature Logics (the downwards operators are complete, proper and finite, whereas the upwards one are proper and finite, but not complete).
	
	\item {\bf Partial Plans}: in our recent work \cite{sanchez2014least} we defined a refinement operator for partial plans (for plan recognition purposes). Its properties have not been theoretically proven.
	
	\item {\bf SQL}: Popescul and Ungar \cite{Popesculfeature2007} defined an operator for SQL queries, which is complete.
	
	\item {\bf Labeled Graphs}: to the best of our knowledge, the operators presented in this document, are the only known refinement operators for labeled graphs.
\end{itemize}

\subsection{Similarity Measures for Structured Representations}


The most common distance measures for numerical values are the different instantiations of the Minkowski distance, which when applied to scalars corresponds to computing their difference, and when applied to vectors, it generalizes the Manhattan distance  and to the Euclidean distance.

Concerning sets, the most well known measures are Tverski's \cite{tversky77similarity}, the Jaccard index (a special case of Tverski's), or the S\"orensen Index \cite{sorenson1948method}, with Jaccard being the most common: 
\[ J(A,B)=  \frac{|A \cup B|}{|A \cap B|}\] 

Similarity measures between sequences, such as the Levenshtein or Edit distance \cite{levenshtein1966binary}, have also been adapted to structured representations such as trees \cite{bille2005survey}. The edit distance equates the distance between two sequences to the cost of the edit operations that have to be done to one in order to obtain the second. A second common way to assess similarity between sequences is by representing them using a stochastic model (e.g., a Markov chain), and then comparing probability distribution that define the models, via measures such as the Kullback-Leibler divergence \cite{kullback1951information}. Other specific sequence similarity measures (such as Dynamic Time Warping \cite{keogh2005exact}) exist but are out of the scope of this section, since they have not been used for defining similarity measures for structured data. 

Finally, for some of these four basic ideas (Minkowski, Jaccard, Levenshtein, Kullback-Leibler), work exists on approaches that can weigh the different attributes/features/dimensions of the data in order to better fit a particular application domain, such as the Mahalanobis \cite{de2000mahalanobis} or Lin \cite{lin1998information} measures (this is often called ``metric learning'' in the literature \cite{xing2003distance}). Let us now see how these basic concepts have been used to define measures for structured representations.


Work on similarity measures for Horn Clauses has been mainly carried out in Inductive Logic Programming (ILP). Hutchinson \cite{hutchinson1097metrics} presented a distance based on the least general generalization ({\em lgg}) of two clauses, (i.e., a clause that subsumes both),  The Hutchinson distance is computed as the addition of the sizes of the variable substitutions required transforming the {\em lgg} into each of the two clauses, 
which is analogous to the Jaccard index (with the {\em lgg} playing the role of the intersection, and the size of the variable substitutions as a measure of the difference in size between the intersection and the union). As pointed out in the literature \cite{ramon2002clustering,ontanon2012similarity}, this fails to take into account a lot of information. Another influential similarity measures for Horn Clauses is that in RIBL (Relational Instance-Based Learning) \cite{werner96relational}. RIBL's measure follows a ``hierarchical aggregation'' approach (also known as the ``local-global'' principle \cite{gabel2003learning}): the similarity of two objects is a function of the similarity of the values of their attributes (repeating this recursively). In addition to being only applicable to Horn Clauses, RIBL implicitly assumes that values ``further'' away from the root of an object will play a lesser role in similarity. Also, this hierarchical procedure makes RIBL not appropriate for objects that contain circularities. Finally, similarity measures have to be defined for different types of data inside of RIBL (e.g., for numerical values, categorical values, etc.). This last point is illustrated in the work of Horv\'ath et. al \cite{Horvath2001RIBL2}, an extension of RIBL able to deal with lists by incorporating an edit-distance whenever a list data type is reached. 
Other Horn Clauses similarity measures include the work of Bisson \cite{bisson1992similarity}, Nienhuys-Cheng \cite{nienhuys1997distance}, and of Ramon \cite{ramon2002clustering}.


Concerning Description Logics (DL), Gonz\'alez-Calero et al. \cite{pedro99applyingdls} present a similarity measure for DL which, like RIBL, has problems with circularities in the data, and thus they preprocess the instances to remove such circularities. More recently, Fanizzi et al. \cite{Fanizzi2007semidistances} presented a similarity measure based on the idea of a ``committee of concepts''. They consider each concept in a given ontology to be a feature for each individual (belonging or not to that concept). The ratio of concepts that two individuals share corresponds to their similarity. This idea has been further developed by d'Amato \cite{d2007similarity}. SHAUD, presented by Armengol and Plaza~\cite{eva2003shaud}, is a similarity measure also following the ``hierarchical aggregation'' approach but designed for the feature logics (a.k.a., Feature Terms or Typed Feature Structures). SHAUD also assumes that the terms do not have circularities, and in the same way as RIBL it can handle numerical values by using specialized similarity measures. 
Bergmann and Stahl~\cite{Bergmann98} present a similarity measure for object-oriented representations based on the concepts of intra-class similarity and inter-class, defined in a recursive way, also following ``hierarchical aggregation'', making it more appropriate for tree representations. These similarity measures are an attempt to generalize standard Euclidean or Manhattan distances to structured data.


Another related area is that of kernels for structured representations, which allow the application of techniques such as Support Vector Machines to structured data. Typically, kernels for graphs are based on the idea of finding common substructures between two graphs. For example, Kashima et al. \cite{Kashima2003kernel} present a kernel for graphs based on random walks. Fanizzi et al. \cite{fanizzi2008learning} also studied how to encapsulate their similarity measure for Description Logics into a kernel. For a survey on kernels for structured data the reader is referred to \cite{gartner2002kernels}. 


Propositionalization (transforming structured instances into propositional) \cite{kramer2000propositionalization} has been used to apply standard similarity measures to structured data. For example, the measure introduced by Fanizzi et al. \cite{fanizzi2008dl} can be seen as such. Another related area is that of Formal Concept Analysis, where work on similarity assessment is starting to be studied \cite{formica2006ontology,alqadah2011similarity}. 


Similarity measures for specific application domains, such as molecular structures in domains of biology or chemistry have also been studied \cite{Willett98chemical}, and they are typically grouped into three classes \cite{Raymond2003comparison}: sequence similarities (e.g., for DNA fragments), fingerprint-based (transform each molecule into a sequence of binary features representing whether a particular molecule exhibits particular properties or contains certain substructures) and graph-based (based on maximum common sub-graphs). The latter is a computationally expensive process, and thus there are a number of strategies to simplify the computations (e.g., \cite{Raymond2002rascal}). 


The work presented in this paper builds upon recent work on similarity measures based on {\em refinement operators}. The key idea of these measures is to define similarity by only assuming the existence of a refinement operator for the target representation formalism. In this way, by just defining refinement operators for different representation formalisms, the same similarity measure can be used for all of these formalisms. Similarity measures for {\em feature terms} \cite{ontanon2012similarity}, Description Logics \cite{sanchez2012ontology,sanchez2013refinement}, and partial-order plans \cite{sanchez2014least} have been defined in this framework.


In summary, there has been a significant amount of work on similarity assessment for structured representations, but the work has been carried out independently for different representation formalisms. For example, existing similarity measures defined for Horn Clauses are not applicable to labeled graphs. Second, the majority of similarity measures are based on principles, such as hierarchical aggregation, that introduce implicit biases that might not suit many application domains. The main goal behind the work that led to $\rho$G was to extend the general framework of similarity measures based on refinement operators to {\em directed labeled graphs} by introducing appropriate refinement operators, showing that the ideas generalize to a wide variety of representation formalisms.


\section{Conclusions}

This document has presented the foundations behind the $\rho$G (RHOG) library for directed labeled graphs\footnote{\url{https://github.com/santiontanon/RHOG}}. Specifically, the library offers the following functionalities: graph subsumption, unification, anti-unification, refinement and similarity assessment. Most of those functionalities are supported by a collection of refinement operators, of which we have shown the theoretical properties in this document.

$\rho$G builds upon our previous work on defining similarity measures for structured representations, with the goal of providing a foundation for structured machine learning algorithms that are independent of the representation formalism being used. Although $\rho$G focuses on directed labeled graphs, all the algorithms implemented in $\rho$G are applicable to any other representation formalism for which refinement operators and subsumption relations can be defined.

All the operations currently provided by $\rho$G are calculated systematically (i.e., $\rho$G does not yet provide numerical approximations to any of the operations, such as similarity assessment). As such, even if most of the operations can be performed in fairly large graphs, computational cost is expected to be high for very large graphs. As part of our future work, we plan to provide such numerical approximations, which will allow the application of refinement-operator-based algorithms to large-scale applications.

\bibliographystyle{plain}
\bibliography{references}

\begin{thebibliography}{10}

\bibitem{alqadah2011similarity}
Faris Alqadah and Raj Bhatnagar.
\newblock Similarity measures in formal concept analysis.
\newblock {\em Annals of Mathematics and Artificial Intelligence},
  61(3):245--256, 2011.

\bibitem{eva2003shaud}
Eva Armengol and Enric Plaza.
\newblock Relational case-based reasoning for carcinogenic activity prediction.
\newblock {\em Artif. Intell. Rev.}, 20(1-2):121--141, 2003.

\bibitem{badea2000refinement}
Liviu Badea and Shan-Hwei Nienhuys-Cheng.
\newblock A refinement operator for description logics.
\newblock In {\em Inductive logic programming}, pages 40--59. Springer, 2000.

\bibitem{Bergmann98}
R.~Bergmann and A.~Stahl.
\newblock Similarity measures for object-oriented case representations.
\newblock In {\em Proc. European Workshop on Case-Based Reasoning, EWCBR-98},
  Lecture Notes in Artificial Intelligence, pages 8--13. Springer Verlag, 1998.

\bibitem{bille2005survey}
Philip Bille.
\newblock A survey on tree edit distance and related problems.
\newblock {\em Theoretical computer science}, 337(1):217--239, 2005.

\bibitem{bisson1992similarity}
Gilles Bisson.
\newblock Learing in {FOL} with a similarity measure.
\newblock In {\em Proceedings of AAAI 1992}, pages 82--87, 1992.

\bibitem{carpenterbook}
Bob Carpenter.
\newblock {\em The Logic of Typed Feature Structures}, volume~32 of {\em
  Cambridge Tracts in Theoretical Computer Science}.
\newblock Cambridge University Press, 1992.

\bibitem{de2000mahalanobis}
Roy De~Maesschalck, Delphine Jouan-Rimbaud, and D{\'e}sir{\'e}~L Massart.
\newblock The mahalanobis distance.
\newblock {\em Chemometrics and intelligent laboratory systems}, 50(1):1--18,
  2000.

\bibitem{d2007similarity}
Claudia d’Amato.
\newblock {\em Similarity-based learning methods for the semantic web}.
\newblock PhD thesis, Universit\`a degli Studi di Bari, 2007.

\bibitem{werner96relational}
W.~Emde and D.~Wettschereck.
\newblock Relational instance based learning.
\newblock In Lorenza Saitta, editor, {\em Machine Learning - Proceedings 13th
  International Conference on Machine Learning}, pages 122 -- 130. Morgan
  Kaufmann Publishers, 1996.

\bibitem{esposito1996refinement}
Floriana Esposito, Angela Laterza, Donato Malerba, and Giovanni Semeraro.
\newblock Refinement of datalog programs.
\newblock In {\em Proceedings of the MLnet familiarization workshop on data
  mining with inductive logic programming}, pages 73--94, 1996.

\bibitem{Fanizzi2007semidistances}
Nicola Fanizzi, Claudia d'Amato, and Floriana Esposito.
\newblock Induction of optimal semi-distances for individuals based on feature
  sets.
\newblock In {\em Proc. 2007 International Workshop on Description Logics}.
  CEUR-WS, 2007.

\bibitem{fanizzi2008dl}
Nicola Fanizzi, Claudia d’Amato, and Floriana Esposito.
\newblock Dl-foil concept learning in description logics.
\newblock In {\em Inductive Logic Programming}, pages 107--121. Springer, 2008.

\bibitem{fanizzi2008learning}
Nicola Fanizzi, Claudia d’Amato, and Floriana Esposito.
\newblock Learning with kernels in description logics.
\newblock In {\em Inductive logic programming}, pages 210--225. Springer, 2008.

\bibitem{ferilli02aiiaa}
Stefano Ferilli, Nicola Fanizzi, Nicola {Di~Mauro}, and Teresa~M.A. Basile.
\newblock Efficient theta-subsumption under object identity.
\newblock In {\em Workshop AI*IA 2002}, pages 59--68, 2002.

\bibitem{formica2006ontology}
Anna Formica.
\newblock Ontology-based concept similarity in formal concept analysis.
\newblock {\em Information Sciences}, 176(18):2624--2641, 2006.

\bibitem{gabel2003learning}
Thomas Gabel, Michael~M Richter, and Armin Stahl.
\newblock {\em Learning Similarity Measures: Strategies to Enhance the
  Optimisation Process}.
\newblock PhD thesis, Master thesis, Kaiserslautern University of Technology,
  2003.

\bibitem{gartner2002kernels}
Thomas G{\"a}rtner, John~W Lloyd, and Peter~A Flach.
\newblock {\em Kernels for structured data}.
\newblock Springer, 2002.

\bibitem{pedro99applyingdls}
Pedro~A. Gonz\'{a}lez-Calero, Bel\'{e}n D\'{i}az-Agudo, and Mercedes
  G\'{o}mez-Albarr\'{a}n.
\newblock Applying {DLs} for retrieval in case-based reasoning.
\newblock In {\em In Proceedings of the 1999 Description Logics Workshop
  (DL'99)}, 1999.

\bibitem{Horvath2001RIBL2}
T.~Horv\'{a}th, S.~Wrobel, and U.~Bohnebeck.
\newblock Relational instance-based learning with lists and terms.
\newblock {\em Machine Learning}, 43(1-2):53--80, 2001.

\bibitem{hutchinson1097metrics}
Alan Hutchinson.
\newblock Metrics on terms and clauses.
\newblock In {\em ECML '97: Proceedings of the 9th European Conference on
  Machine Learning}, volume 1224 of {\em Lecture Notes in Computer Science},
  pages 138--145. Springer, 1997.

\bibitem{Kashima2003kernel}
Hisashi Kashima, Koji Tsuda, and Akihiro Inokuchi.
\newblock Marginalized kernels between labeled graphs.
\newblock In {\em Proceedings of the Twentieth International Conference (ICML
  2003)}, pages 321--328. AAAI Press, 2003.

\bibitem{keogh2005exact}
Eamonn Keogh and Chotirat~Ann Ratanamahatana.
\newblock Exact indexing of dynamic time warping.
\newblock {\em Knowledge and information systems}, 7(3):358--386, 2005.

\bibitem{kramer2000propositionalization}
Stefan Kramer, Nada Lavra\v{c}, and Peter Flach.
\newblock Propositionalization approaches to relational data mining.
\newblock In {\em Relational Data Mining}, pages 262--286. Springer New York,
  Inc., New York, NY, USA, 2000.

\bibitem{kullback1951information}
Solomon Kullback and Richard~A Leibler.
\newblock On information and sufficiency.
\newblock {\em The annals of mathematical statistics}, 22(1):79--86, 1951.

\bibitem{laird2012learning}
Philip~D Laird.
\newblock {\em Learning from good and bad data}.
\newblock Kluwer Academic Publishers, 1988.

\bibitem{LehmannH09}
Jens Lehmann and Christoph Haase.
\newblock Ideal downward refinement in the {EL} description logic.
\newblock In Luc~De Raedt, editor, {\em ILP}, volume 5989 of {\em Lecture Notes
  in Computer Science}, pages 73--87. Springer, 2009.

\bibitem{LehmannH07}
Jens Lehmann and Pascal Hitzler.
\newblock A refinement operator based learning algorithm for the {ALC}
  description logic.
\newblock In Hendrik Blockeel, Jan Ramon, Jude~W. Shavlik, and Prasad
  Tadepalli, editors, {\em ILP}, volume 4894 of {\em Lecture Notes in Computer
  Science}, pages 147--160. Springer, 2007.

\bibitem{levenshtein1966binary}
Vladimir~I Levenshtein.
\newblock Binary codes capable of correcting deletions, insertions, and
  reversals.
\newblock In {\em Soviet physics doklady}, volume~10, pages 707--710, 1966.

\bibitem{lin1998information}
Dekang Lin.
\newblock An information-theoretic definition of similarity.
\newblock In {\em ICML}, volume~98, pages 296--304, 1998.

\bibitem{ling1990sim}
Xiaofeng Ling and Mike Dawes.
\newblock {\em SIM: The Inverse of Shapiro's MIS}.
\newblock Department of Computer Science, University of Western Ontario, 1990.

\bibitem{nienhuys1997distance}
Shan-Hwei Nienhuys-Cheng.
\newblock Distance between herbrand interpretations: A measure for
  approximations to a target concept.
\newblock In {\em Inductive Logic Programming}, pages 213--226. Springer, 1997.

\bibitem{ontanon2009similarity}
Santiago Onta{\~n}\'{o}n and Enric Plaza.
\newblock On similarity measures based on a refinement lattice.
\newblock In D.~Wilson and L.~McGinty, editors, {\em Proceedings of
  ICCBR-2009}, number 5650 in Lecture Notes in Artificial Intelligence, pages
  240 -- 255. Springer-Verlag, 2009.

\bibitem{ontanon2012similarity}
Santiago Ontan{\'o}n and Enric Plaza.
\newblock {Similarity Measures over Refinement Graphs}.
\newblock {\em Machine Learning}, 87:57--92, 2012.

\bibitem{ontanon2015refinement}
Santiago Onta{\~n}{\'o}n and Enric Plaza.
\newblock Refinement-based disintegration: An approach to re-representation in
  relational learning.
\newblock {\em AI Communications}, 28(1):35--46, 2015.

\bibitem{Popesculfeature2007}
Alexandrin Popescul and Lyle~H. Ungar.
\newblock Feature generation and selection in multi-relational statistical
  learning.
\newblock In Getoor and Taskar, editors, {\em Introduction to Statistical
  Relational Learning}, pages 453–--475. MIT Press, 2007.

\bibitem{Quinlan86ID3}
J.~R. Quinlan.
\newblock Induction of decision trees.
\newblock {\em Machine Learning}, 1(1):81--106, 1986.

\bibitem{ramon2002clustering}
Jan Ramon.
\newblock Clustering and instance based learning in first order logic.
\newblock {\em AI Communications}, 15(4):217--218, 2002.

\bibitem{Raymond2003comparison}
John~W. Raymond, C.~John Blankley, and Peter Willett.
\newblock Comparison of chemical clustering methods using graph- and
  fingerprint-based similarity measures.
\newblock {\em Journal of Molecular Graphics and Modelling}, 21(5):421--433,
  2003.

\bibitem{Raymond2002rascal}
John~W. Raymond, Eleanor~J. Gardiner, and Peter Willett.
\newblock Rascal: Calculation of graph similarity using maximum common edge
  subgraphs.
\newblock {\em Computer J.}, 45(6):631--644, 2002.

\bibitem{sanchez2012ontology}
David S{\'a}nchez, Montserrat Batet, David Isern, and Aida Valls.
\newblock Ontology-based semantic similarity: A new feature-based approach.
\newblock {\em Expert Systems with Applications}, 39(9):7718--7728, 2012.

\bibitem{sanchez2014least}
Antonio~A S{\'a}nchez-Ruiz and Santiago Ontan{\'o}n.
\newblock Least common subsumer trees for plan retrieval.
\newblock In {\em Case-Based Reasoning Research and Development}, pages
  405--419. Springer, 2014.

\bibitem{sanchez2013refinement}
Antonio~A S{\'a}nchez-Ruiz, Santiago Ontan{\'o}n, Pedro~Antonio
  Gonz{\'a}lez-Calero, and Enric Plaza.
\newblock Refinement-based similarity measure over dl conjunctive queries.
\newblock In {\em Case-Based Reasoning Research and Development}, pages
  270--284. Springer, 2013.

\bibitem{DBLP:conf/iccbr/Sanchez-Ruiz-GranadosOGP11}
Antonio~A. S{\'a}nchez-Ruiz-Granados, Santiago Onta{\~n}{\'o}n, Pedro~A.
  Gonz{\'a}lez-Calero, and Enric Plaza.
\newblock Measuring similarity in description logics using refinement
  operators.
\newblock In {\em ICCBR}, pages 289--303, 2011.

\bibitem{shapiro1981inductive}
Ehud~Y Shapiro.
\newblock {\em Inductive inference of theories from facts}.
\newblock Yale University, Department of Computer Science, 1981.

\bibitem{sorenson1948method}
Th~Sorenson.
\newblock A method of establishing groups of equal amplitude in plant sociology
  based on similarity of species content.
\newblock {\em Kongelige Danske Videnskabernes Selskab}, 5(1-34):4--7, 1948.

\bibitem{tversky77similarity}
Amos Tversky.
\newblock Features of similarity.
\newblock In {\em Psychological Review}, volume~84, pages 327--352, 1977.

\bibitem{van1994existence}
Patrick~RJ van~der Laag and Shan-Hwei Nienhuys-Cheng.
\newblock Existence and nonexistence of complete refinement operators.
\newblock In {\em Machine Learning: ECML-94}, pages 307--322. Springer, 1994.

\bibitem{Willett98chemical}
Peter Willett, John~M. Barnard, and Geoffrey~M. Downs.
\newblock Chemical similarity searching.
\newblock {\em Journal of Chemical Information and Computer Sciences},
  38(6):983--996, 1998.

\bibitem{xing2003distance}
Eric~P Xing, Andrew~Y Ng, Michael~I Jordan, and Stuart Russell.
\newblock Distance metric learning with application to clustering with
  side-information.
\newblock {\em Advances in neural information processing systems}, 15:505--512,
  2003.

\end{thebibliography}







\newpage

\appendix

\section{Appendix: Proofs}\label{appendix:proofs}

\subsection{Preliminary Definitions and Results}

Let us start by introducing some necessary definitions and preliminary proofs that will facilitate the proofs of the main propositions of this document.

\begin{definition}[Cover]
Given two graphs $g_1 = \langle V_1, E_1, l_1 \angle$ and $g_2 = \langle V_2, E_2, l_2 \angle$, such that $g_1 \sqsubseteq g_2$. We define the {\em cover} of $g_1$ over $g_2$, as the subgraph of $g_2$ containing only those vertices and edges that are referred to by the subsumption mapping $m$. More formally, $\mathcal{C}^{g_1 \sqsubseteq g_2} = \langle \mathcal{C}_v, \mathcal{C}_e, l_2 \rangle$, where:
\begin{itemize}
\item In the case of regular subsumption ($\sqsubseteq$) or subsumption relative to $\preceq$ ($\sqsubseteq_{\preceq}$):
\[\mathcal{C}_v = \{w \in V_2 | \exists v \in V_1 : m(v) = w \}\]
\[\mathcal{C}_w = \{e_2 \in E_2 | \exists e_1 \in E_1 : m(e_1) = e_2 \} \]

\item In the case of trans-subsumption ($\overrightarrow{\sqsubseteq}$) or trans-subsumption relative to $\preceq$ ($\overrightarrow{\sqsubseteq}_{\preceq}$):
\[\mathcal{C}_v = \{w \in V_2 | \exists v \in V_1 : m(v) = w \} \cup \{w \in V_2 | \exists e \in E_1 : \exists (w_1,w_2) \in m_e(e) : w = w_1 \,\, \vee \,\, w = w_2\}\]
\[\mathcal{C}_e = \bigcup_{e \in E_1} m_e(e)\]


\end{itemize}
Moreover, when there cannot be any confusion, we will drop the superindexes, and just note the cover as $\mathcal{C}$. Also, some times we will only be interested in $\mathcal{C}_v$, which we will call the {\em vertex cover}, or in $\mathcal{C}_e$, which we will call the {\em edge cover}.
\end{definition}

\begin{definition}[Delta]
Given two graphs $g_1$ and $g_2$, such that $g_1 \sqsubseteq g_2$. We define the {\em delta} of $g_1$ over $g_2$ as the set of vertices and edges that do not belong to the cover:
\begin{itemize}
\item The vertex delta is defined as: $\Delta_v = V_2 \setminus \mathcal{C}_v$.
\item The edge delta is defined as: $\Delta_e = E_2 \setminus \mathcal{C}_e$.
\item The delta is defined as the graph $\Delta^{g_1 \sqsubseteq g_2} = \langle \Delta_v, \Delta_e, l_2 \rangle$ (notice that $\Delta$ is not necessarily connected).
\end{itemize} 
\end{definition}

Moreover, in the same way as with the cover, when there cannot be any confusion, we will drop the superindexes, and just note the delta as $\Delta$. Figure \ref{fig:cover-delta} shows an illustration of the intuition behind the {\em cover} and {\em delta} concepts.

\begin{figure}[tb]
    \centering
    \includegraphics[width=0.6\columnwidth]{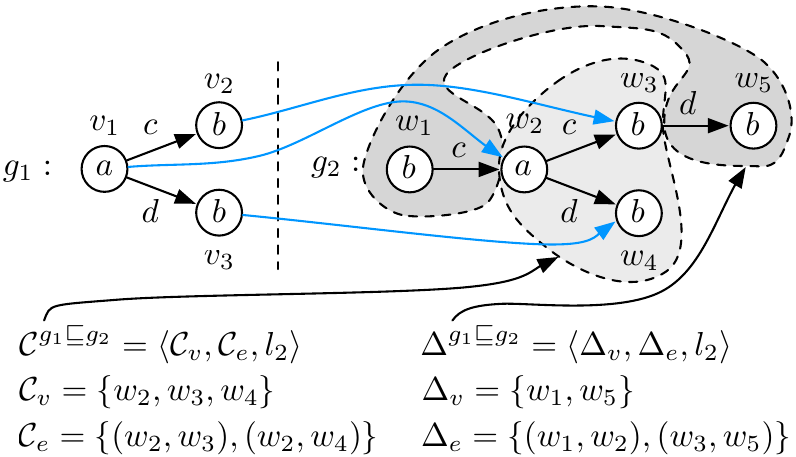}
    \caption{{\bf Illustration of the concept of {\em cover} and {\em delta}}: two graphs $g_1 = \langle V_1, E_1, l_1 \rangle$, and $g_2 = \langle V_2, E_2, l_2 \rangle$, such that $g_1 \sqsubseteq g_2$ via regular subsumption. The blue arrows represent the subsumption mapping $m$. As the figure shows, the cover basically contains all the vertices and edges referred to by the subsumption mapping, and the delta contains the rest of vertices and edges.}
    \label{fig:cover-delta}
\end{figure}

\begin{proposition}\label{prop:cover-connected}
Given two graphs $g_1 = \langle V_1, E_1, l_1 \rangle$ and $g_2 = \langle V_2, E_2, l_2 \rangle$, such that $g_1 \sqsubseteq g_2$, if $g_1$ is connected, then $\mathcal{C}^{g_1 \sqsubseteq g_2}$ is also connected.
\end{proposition}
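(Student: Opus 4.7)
The plan is to show that any two vertices of $\mathcal{C}^{g_1 \sqsubseteq g_2}$ are joined by an undirected path lying entirely inside the cover, by lifting paths in $g_1$ through the subsumption mapping.

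I would first handle the case of regular subsumption ($\sqsubseteq$ or $\sqsubseteq_{\preceq}$). Pick any two vertices $w, w' \in \mathcal{C}_v$. By definition of $\mathcal{C}_v$, there exist $v, v' \in V_1$ with $m(v)=w$ and $m(v')=w'$. Since $g_1$ is connected, there is an undirected path $[u_1 = v, u_2, \ldots, u_k = v']$ in $g_1$, so for each consecutive pair either $(u_i,u_{i+1}) \in E_1$ or $(u_{i+1},u_i) \in E_1$. Subsumption guarantees that the corresponding edge in $g_2$, namely $(m(u_i),m(u_{i+1}))$ or $(m(u_{i+1}),m(u_i))$, belongs to $E_2$, and by the definition of $\mathcal{C}_e$ it is in the edge cover; similarly each $m(u_i)$ belongs to $\mathcal{C}_v$. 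Hence $[m(u_1),\ldots,m(u_k)]$ is an undirected path inside $\mathcal{C}$ from $w$ to $w'$.

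For the trans-subsumption cases ($\overrightarrow{\sqsubseteq}$ or $\overrightarrow{\sqsubseteq}_{\preceq}$) I would first establish an auxiliary claim: every vertex in $\mathcal{C}_v$ is connected inside $\mathcal{C}$ to some vertex of the form $m(v)$, $v \in V_1$. Indeed, if $w \in \mathcal{C}_v$ is not itself an image of $m$, then by definition of $\mathcal{C}_v$ it lies inside $m_e(e)$ for some $e=(v_a,v_b)\in E_1$; walking along the edge sequence $m_e(e) = [(w_1,w_2),\ldots,(w_k,w_{k+1})]$, which by definition is entirely contained in $\mathcal{C}_e$ and whose vertices are all in $\mathcal{C}_v$, connects $w$ to $m(v_a)=w_1$ (or equivalently $m(v_b)=w_{k+1}$). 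Next, given two vertices $w,w'\in\mathcal{C}_v$, reduce by the auxiliary claim to the case where both are images $m(v),m(v')$. Then repeat the connectedness argument from the first case, but now each edge $(u_i,u_{i+1})\in E_1$ of a path in $g_1$ is lifted not to a single edge but to the edge sequence $m_e((u_i,u_{i+1}))$ (or, if the edge is reversed, $m_e((u_{i+1},u_i))$), which by construction lies inside $\mathcal{C}_e$ and whose internal vertices lie inside $\mathcal{C}_v$. Concatenating these sequences produces the desired undirected path in $\mathcal{C}$.

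The main obstacle is purely bookkeeping in the trans-subsumption case, namely making precise that the intermediate vertices inserted along $m_e(e)$ really do belong to $\mathcal{C}_v$ by the second clause of its definition, so that the concatenated lifted walk never leaves the cover. Once that is articulated, both cases reduce to the same lifting argument, and connectedness of $g_1$ transfers directly to $\mathcal{C}$.
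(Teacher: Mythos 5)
Your proof is correct and rests on the same core idea as the paper's: lift a connecting path of $g_1$ through the subsumption mapping and observe that every vertex and edge it visits lands in the cover. The paper phrases this as a proof by contradiction (assume two cover vertices are disconnected, lift a path between their preimages, contradiction), but that framing is cosmetic; your direct version is the same argument. The one substantive difference is that the paper's proof only spells out the case where each edge of $g_1$ maps to a single edge of $g_2$, i.e.\ regular subsumption (or subsumption relative to $\preceq$), and is silent about trans-subsumption, even though the cover is defined for that case too and the proposition is later invoked for it. Your auxiliary claim --- that every vertex of $\mathcal{C}_v$ arising as an interior vertex of some $m_e(e)$ is connected inside $\mathcal{C}$ to an endpoint $m(v)$ --- together with lifting each path edge to the whole sequence $m_e(e)$, fills exactly that gap, so your write-up is in fact more complete than the paper's own proof.
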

\begin{proof}
Imagine that $\mathcal{C}^{g_1 \sqsubseteq g_2}$ is not connected. This happens if and only if there are two vertices $w_1, w_2 \in \mathcal{C}_v$ that are disconnected (i.e., we cannot find a path between them). For the sake of a graph being connected or not, we do not care about the direction of the edges here. If two such vertices exist, then let $v_1, v_2 \in V_1$ be two vertices from $g_1$ such that $m(v_1) = w_1$, and $m(v_2) = w_2$ (notice that two such vertices always exist by the definition of {\em cover}). Since $g_1$ is connected, there is a path consisting of the following vertices $[u_1, ..., u_k]$, such that $u_1 = v_1$ and $u_k = v_2$. We can now construct a sequence $[m(u_1), ..., m(u_k)]$, and show that since in $g_1$ there is an edge $e_i$ connecting each $u_i$ to $u_{i+1}$, then the corresponding edge must also exist in $g_2$ by the definition of subsumption. This is a contradiction, since we assumed such sequence of vertices didn't exist, and therefore, we can conclude that $\mathcal{C}^{g_1 \sqsubseteq g_2}$ must be connected.
\end{proof}

\begin{proposition}\label{prop:delta-leaf-or-nonbridge}
Given two connected graphs $g_1 = \langle V_1, E_1, l_1 \rangle$ and $g_2 = \langle V_2, E_2, l_2 \rangle$, such that $g_1 \sqsubseteq g_2$, if $\Delta_v \neq \emptyset$, then one of the two following conditions must be satisfied:
\begin{itemize}
\item $\exists v \in \Delta_v$ such that $v$ is a leaf.
\item $\exists e \in \Delta_e$ such that $e$ is not a bridge.
\end{itemize}
This results will help us proof the completeness of the upward refinement operators.
\end{proposition}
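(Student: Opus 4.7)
The plan is to argue by contradiction, assuming both (A) that no $v \in \Delta_v$ is a leaf (so each such $v$ has $|E_v| \geq 2$ in $g_2$) and (B) that every edge of $\Delta_e$ is a bridge of $g_2$. The first step is a structural consequence of the definition of cover: every edge in $\mathcal{C}_e$ has both endpoints in $\mathcal{C}_v$ (immediate for regular subsumption, and still true for trans-subsumption because the interior vertices of each path $m_e(e)$ are thrown into $\mathcal{C}_v$). Hence, for any $v \in \Delta_v$, all $g_2$-edges incident to $v$ must lie in $\Delta_e$, and by (A) there are at least two of them.

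Next I would count the connected components of $g_2 \setminus \Delta_e$ in two different ways. On one hand, (B) says $\Delta_e$ is a set of bridges of $g_2$, and removing a set of $k$ bridges from a connected graph increases the component count by exactly $k$; this is a short induction on $k$, using the fact that after removing one bridge $e$ into components $C_1, C_2$, any other bridge $e'$ of $g_2$ lies entirely in some $C_i$ and remains a bridge there (its endpoints were already separated in $g_2 \setminus e'$). This gives $|\Delta_e|+1$ components. On the other hand, $g_2 \setminus \Delta_e = \langle V_2, \mathcal{C}_e, l_2\rangle$; by the first step, every $v \in \Delta_v$ is isolated in this subgraph, while the vertices of $\mathcal{C}_v$ together with $\mathcal{C}_e$ form the cover, which is connected by Proposition~\ref{prop:cover-connected}. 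So the component count equals $|\Delta_v|+1$ if $\mathcal{C}_v \neq \emptyset$, and $|\Delta_v|$ if $\mathcal{C}_v = \emptyset$.

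The final step will combine these with a handshake estimate on $\Delta_e$: the first step and (A) give $\sum_{v \in \Delta_v}\deg_{\Delta_e}(v) \geq 2|\Delta_v|$, while trivially this sum is at most $\sum_{v \in V_2}\deg_{\Delta_e}(v) = 2|\Delta_e|$, yielding $|\Delta_v| \leq |\Delta_e|$. If $\mathcal{C}_v = \emptyset$, the component equation reads $|\Delta_v| = |\Delta_e|+1$, directly contradicting $|\Delta_v| \leq |\Delta_e|$. If $\mathcal{C}_v \neq \emptyset$, it reads $|\Delta_v| = |\Delta_e|$, which forces equality throughout the handshake and therefore forces $\sum_{v \in \mathcal{C}_v}\deg_{\Delta_e}(v) = 0$; i.e., every edge of $\Delta_e$ has both endpoints inside $\Delta_v$. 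Together with the first step (every edge of $\mathcal{C}_e$ stays inside $\mathcal{C}_v$), this means no edge of $g_2$ crosses between $\Delta_v$ and $\mathcal{C}_v$, contradicting the connectedness of $g_2$ since both sides are non-empty. Either case yields the desired contradiction.

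I expect the main obstacle to be the careful justification of the bridge-removal counting lemma (making sure a bridge in $g_2$ remains a bridge after other bridges are removed); after that the argument reduces to a handshake inequality and simple cover/delta bookkeeping. Everything else is routine once the structural observation about $\mathcal{C}_e$-endpoints has been leveraged to see that $\Delta_v$-vertices only touch $\Delta_e$-edges.
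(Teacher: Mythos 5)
Your argument is correct, but it follows a genuinely different route from the paper's. You assume (A) no leaf in $\Delta_v$ and (B) all of $\Delta_e$ are bridges, and then derive a contradiction by a global double count: the number of connected components of $\langle V_2, \mathcal{C}_e, l_2\rangle$ is $|\Delta_e|+1$ by the bridge-removal induction (which is sound --- endpoints of a bridge of $g_2$ stay separated in any subgraph of $g_2$ minus that bridge), and is $|\Delta_v|+1$ (or $|\Delta_v|$ when $\mathcal{C}_v=\emptyset$) because every vertex of $\Delta_v$ is isolated there while the cover is connected by Proposition~\ref{prop:cover-connected}; the handshake bound $|\Delta_v|\le|\Delta_e|$ then kills the case $\mathcal{C}_v=\emptyset$ outright and, in the other case, forces every edge of $\Delta_e$ to lie inside $\Delta_v$, so no edge of $g_2$ crosses between $\mathcal{C}_v$ and $\Delta_v$, contradicting connectedness of $g_2$. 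The paper instead argues locally: it picks a single crossing edge from $\mathcal{C}_v$ to $\Delta_v$ (or takes $\Delta=g_2$ when $V_1=\emptyset$), uses the bridge assumption to split off the component on the delta side, and observes that this component, lying entirely in $\Delta_v$, is either acyclic (hence has a leaf) or contains a cycle (hence a non-bridge edge). Your version buys a uniform arithmetic argument with no tree-versus-cycle case analysis inside $\Delta$, at the price of proving the ``removing $k$ bridges yields exactly $k+1$ components'' lemma; the paper's version needs only that a finite tree has a leaf and that cycle edges are never bridges. Both proofs hinge on the same two structural facts --- the cover is connected, and every edge incident to a vertex of $\Delta_v$ belongs to $\Delta_e$ (your step 1, which the paper uses only implicitly and which you rightly verify for both regular and trans-subsumption). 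If you write this up, state that endpoint observation as a displayed lemma and spell out the induction for the bridge-removal count; everything else is as routine as you indicate.
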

\begin{proof}
Let us proof this by contradiction. Imagine that none of the two conditions is satisfied. Since $\Delta_v \neq \emptyset$, then only two situations might arise:
\begin{itemize}
\item If $V_1 = \emptyset$, then $\Delta = g_2$. Then, if we ignore the directionality of the edges in $g_2$: (a) if $g_2$ has no loops, then, by definition, it is a tree, and thus, must have leaves (contradiction), (b) else, if $g_2$ has loops, then none of the edges that constitute any of the loops is a bridge (contradiction).
\item If $V_1 \neq \emptyset$, then $\mathcal{C}_v \neq \emptyset$. Since $g_2$ is connected, there must be an edge $e = (w_1, w_2) \in E_2$ such that $w_1 \in \mathcal{C}_v$ and $w_2 \in \Delta_v$. Since $w_2 \in \Delta_v$, we know that $e \in \Delta_e$, and thus, by assumption, $e$ is not a bridge. By the definition of bridge, $e$ splits the vertices of $g_2$ into two disjoint subsets $V_2 = V_2^a \cup V_2^b$, where $w_1 \in V_2^a$ and $w_2 \in V_2^b$. Since $w_1 \in \mathcal{C}_v$ and we know (by Proposition \ref{prop:cover-connected}) that $\mathcal{C}$ is connected, then $\mathcal{C}_v \subseteq V_2^a$. Thus, since $V_2^a$ and $V_2^b$ are disjoint, we know that $V_2^b \subseteq \Delta_v$. Now, if we just look at the subgraph $g$ of $g_2$ formed by the vertices in $V_2^b$ (which is a subgraph of $\Delta$), there can only be two situations: 
	\begin{itemize}
	\item If $g$ has no loops (recall we do not care about the directionality of the edges), then, by definition, $g$ is a tree. If $g$ only has one vertex ($w_2$), then this vertex is a leaf (contradiction). If it has more than one vertex, then, given that any vertex in an undirected tree can be considered the root, if we consider $w_2$ to be the root, then there must be at least one leaf (contradiction).
	\item If $g$ has loops, then, none of the edges that constitute any of the loops is a bridge, contradiction.
	\end{itemize}
\end{itemize}
\end{proof}


\subsection{Proofs for Downward Refinement of FDLG ($\rho_f$)}

\noindent {\bf Proposition \ref{prop:rho-f-finite-complete}}. {\em
The downward refinement operator $\rho_f$ defined by the rewrite rules R0, R1, R2, and R3 above is locally finite, and complete for the quasi-ordered set $\langle G, \sqsubseteq \rangle$ (where $\sqsubseteq$ represents regular subsumption).}

\begin{proof}
Let us proof each of the properties separately:
\begin{itemize}
\item {\em $\rho_f$ is locally finite}: the number of refinements generated by each rewrite rule corresponds to the number of possible values that the variables in the {\em applicability conditions} (left-hand side of the rule) can take. Thus, let us consider the different rewrite rules:
	\begin{itemize}
	\item R0: $v_*$ represents a ``new'' vertex, and thus can only take one value, and $a$ can take $|L|$ values. Since $|L|$ is a finite number, R0 can only generate a finite number of refinements.
	\item R1: $v_*$ represents a ``new'' vertex, and thus can only take one value. $v_1$ can take $n = |V|$ values, $a$ can take $|L|$ values and $b$ can take $|L|$ values. Thus, the total number of possible value bindings for R1 is $1 \times n \times |L|^2$. Since $n$ and $|L|$ are finite numbers, R1 can only generate a finite number of refinements.
	\item R2: This is analogous to R1.
	\item R3: $v_1$ and $v_2$ can take at most $|V|$ values each, and $a$ can take $|L|$ values. Thus, R2 can generate at most $n^2|L|$ refinements, which is a finite number.
	\end{itemize}
\item {\em $\rho_f$ is complete}: consider any two DLGs $g_u = \langle V_u, E_u, l_u \rangle , g_d = \langle V_d, E_d, l_d \rangle \in G$, such that $g_u \sqsubseteq g_d$. We need to proof that we can get to $g_d$ from $g_u$ by repeated application of the refinement operator. Since $g_u \sqsubseteq g_d$, we know there is a mapping $m$ that satisfies Definition \ref{def:subsumption}. We will distinguish two cases:
	\begin{itemize}
	\item If $g_d \sqsubseteq g_u$: then there is no need to apply the refinement operator, and we are done.
	\item If $g_d \not\sqsubseteq g_u$: in this case, we can get to $g_d$ using the following procedure. At each step $t$ of the procedure, we will construct a new graph $g_t = \langle V_t, E_t, l_t \rangle$, via downward refinement of $g_{t-1}$ getting one step closer to $g_d$. Each graph $g_t$ subsumes $g_d$ via a mapping $m_t$. In the first step $t = 0$, $g_0 = g_u$, then:
		\begin{itemize}
    		\item If $|V_t| = 0$: then let $v_2 \in V_d$ be any of the vertices in $g_d$, then R0 (with $a = l_2(v_2)$) can be used to generate a refinement $g_{t+1}$ that clearly subsumes $g_d$.		
    		\item If the vertex cover of $g_t$ over $g_d$ does not include all the vertices in $g_d$, i.e.,  $|\mathcal{C}_v| < |V_d|$: then let $v_2, w_2 \in V_d$ be any two vertices in $g_d$ such that $v_d \not\in \mathcal{C}_v$, $w_2 \in \mathcal{C}_v$, and either $(v_2, w_2) \in E_d$ or $(w_2, v_2) \in E_d$ (notice that two such vertices must exist, since $g_d$ is a connected graph). Then:
				\begin{itemize}
				\item if $(v_2, w_2) \in E_d$ then R1 (with $v_1 \in V_t$ s.t. $m_t(v_1) = w_2$, $a = l_d(v_2)$, and $b = l_d((v_2, w_2))$) can be used to generate a refinement $g_{t+1}$ that subsumes $g_d$, by extending the mapping $m_{t}$, with $m_{t+1}(v_1) = v_2$. 
				\item Alternatively, if $(w_2, v_2) \in E_d$ then R2 (with $v_1 \in V_t$ s.t. $m_t(v_1) = w_2$, $a = l_d(v_2)$, and $b = l_d((w_2, v_2))$) can be used to generate $g_{t+1}$, which also subsumes $g_d$, by extending the mapping $m_t$, with $m_{t+1}(v_1) = v_2$.
				\end{itemize}
    		\item Otherwise, if the edge cover does not include all the edges in $g_d$, i.e., $|\mathcal{C}_e| < |E_d|$: in this case, given there must be some $(w_1, w_2) \in E_d$ such that $(w_1, w_2) \not\in \mathcal{C}_e$, and R3 (with $v_1 \in V_t$ s.t. $m_t(v_1) = v_2$, $v_2 \in V_t$ s.t. $m_t(v_2) = w_2$) can be used to generate a refinement $g_{t+1}$ that subsumes $g_d$ via the same mapping $m_{t+1} = m_t$ (since we are only adding an edge, and we are adding it so that by using $m_t$, the subsumption conditions are still satisfied). 
    		\item Otherwise, we can show that $g_d \sqsubseteq g_t$ via a mapping $m_{t}^{-1}$ constructed in the following way: $\forall_{v_2 \in V_d} m_t^{-1}(v_2) = v_1 : m_t(v_1) = v_2$. In other words, we just have to invert the mapping. Notice that if we are not enforcing object identity, more than one vertex in $V_t$ might map to the same vertex $v_2 \in V_d$, when constructing $m_t^{-1}$ we just need to pick any of those vertices in $V_t$ as the mapping of $v_2$. It is trivial to see that this mapping satisfies the subsumption conditions in Definition \ref{def:subsumption}.
		\end{itemize}		
	Notice that the procedure always terminates in a finite number of steps because at each step we are always increasing either $|\mathcal{C}_v|$  (with R0, R1 or R2), or $|\mathcal{C}_e|$ (with R3), and never decreasing either of them. Since $|\mathcal{C}_v|$ and $|\mathcal{C}_e|$ are upper-bounded by $|V_d|$ and $|E_d|$ respectively, the previous process terminates in a finite number of steps.
	\end{itemize}
\end{itemize}
\end{proof}


\noindent {\bf Proposition \ref{prop:rho-f-ideal}}. {\em
The downward refinement operator $\rho_f$ defined by the rewrite rules R0, R1, R2, and R3 above is ideal (locally finite, complete, and proper) for the quasi-ordered set $\langle G, \sqsubseteq \rangle$ (where $\sqsubseteq$ represents regular subsumption), when we impose the Object Identity constraint.
}
\begin{proof}
By Proposition \ref{prop:rho-f-finite-complete} $\rho_f$ is already locally finite and complete, so, we just need to prove that under object identity, $\rho_f$ is also proper. Notice that given two graphs $g_u = \langle V_u, E_u, l_u \rangle$ and $g_d = \langle V_d, E_d, l_d \rangle$, such that $g_d \sqsubseteq g_u$ object identity in regular subsumption, implies the following: since $v_1 \neq v_2 \implies m(v_1) \neq m(v_2)$, we know that $|V_u| = |\{m(v)|v \in V_u\}|$. Thus, this implies that for two $g_u$ and $g_d$ to subsume each other (i.e., for being equivalents), we must have that $|V_u| = |V_d|$, which then implies that $|E_u| = |E_d|$. Since R0, R1, R2 and R3, all increase either the number of vertices or the number of edges of a graph, under object identity, a graph $g$ can never be equivalent to any refinement of $g$ generated by R0, R1, R2 or R3. Thus, $\rho_f$ is proper. 
\end{proof}


\subsection{Proofs for Downward Refinement of FDLG using Trans-Subsumption ($\rho_{tf}$)}

\noindent {\bf Proposition \ref{prop:rho-tf-finite-complete}}. {\em
The downward refinement operator $\rho_{tf}$ defined by the rewrite rules R0, R1, R2, R3, and R4 above is locally finite, and complete for the quasi-ordered set $\langle G, \overrightarrow{\sqsubseteq} \rangle$ (where $\overrightarrow{\sqsubseteq}$ represents trans-subsumption).
}
\begin{proof}
Let us proof each of the properties separately:
\begin{itemize}
\item {\em $\rho_{tf}$ is locally finite}: the number of refinements generated by each rewrite rule corresponds to the number of possible values that the variables in the {\em applicability conditions} (left-hand side of the rule) can take. Thus, let us consider the different rewrite rules:
	\begin{itemize}
	\item by Proposition \ref{prop:rho-f-finite-complete}, R0, R1, R2 and R3 produce only a finite number of refinements.
	\item R4: $v_*$ represents a ``new'' vertex, and thus can only take one value. $(v_1, v_2)$ can only be bound in $|E|$ different ways, $a$ can take $|L|$ values, and $b$ is determined by the binding of $(v_1, v_2)$. Thus, the maximum number of possible value bindings for R4 is $1 \times |E| \times |L| \times 1$, and thus can only generate a finite number of refinements.
	\end{itemize}

\item {\em $\rho_{tf}$ is complete}: consider any two DLGs $g_u = \langle V_u, E_u, l_u \rangle , g_d = \langle V_d, E_d, l_d \rangle \in G$, such that $g_u \overrightarrow{\sqsubseteq} g_d$. We need to proof that we can get to $g_d$ from $g_u$ by repeated application of the refinement operator. Since $g_u \overrightarrow{\sqsubseteq} g_d$, we know there are two mappings $m$, $m_e$ that satisfy Definition \ref{def:trans-subsumption}. We will distinguish two cases:
	\begin{itemize}
	\item If $g_d \overrightarrow{\sqsubseteq} g_u$: then there is no need to apply the refinement operator, and we are done.
	\item If $g_d \overrightarrow{\not\sqsubseteq} g_u$: in this case, analogously to what we did in Proposition \ref{prop:rho-f-finite-complete}, we will define a procedure that ensures getting to $g_d$ by repeated application of $\rho_{tf}$ starting from $g_u$ in a finite number of steps. At each step $t$ of the procedure, we will construct a new graph $g_t = \langle V_t, E_t, l_t \rangle$ via downward refinement of $g_{t-1}$ which subsumes $g_d$ via the vertex mapping $m_t$ and the edge mapping $m_{e,t}$, and gets one step closer to $g_d$. In the first step, $t = 0$, $g_0 = g_u$, then:
		\begin{itemize}
    		\item If $|V_t| = 0$: then let $v_2 \in V_d$ be any of the vertices in $g_d$, then R0 (with $a = l_d(v_2)$) can be used to generate a refinement $g_{t+1}$ that clearly subsumes $g_d$.
			\item Otherwise, if $\exists e \in E_t: m_{e,t}(v) = [w_1, ..., w_k]$ such that $k>2$, then this means that there is some edge in $g_t$ that has been mapped to more than one edge in $g_d$. In this case, we can use R4 to split $e$ by using the following bindings: $(v_1, v_2) = e$, $a = l_d(w_2)$. The resulting graph still subsumes $g_d$ by expanding the mapping $m_t$ with $m_{t+1}(v_*) = w_2$, and by updating $m_{e,t}$ as follows: $m_{e,t+1}((v_1,v_*)) = [w_1,w_2]$, and $m_{e,t+1}(v_*,v_2) = [w_2, ..., w_k]$.
    		\item If the vertex cover does not include all the vertices in $g_d$, i.e.,  $|\mathcal{C}_v| < |V_d|$: then let $v_2, w_2 \in V_d$ be any two vertices in $g_d$ such that $v_2 \not\in \mathcal{C}_v$, $w_2 \in \mathcal{C}_v$, and either $(v_2, w_2) \in E_d$ or $(w_2, v_2) \in E_d$ (notice that two such vertices must exist, since $g_d$ is a connected graph, and also, that $v_2$ cannot be part of any of the paths to which edges in $g_t$ are mapped via $m_{e,t}$ since if we have reached this point in the procedure, it means that every single edge in $g_t$ is mapped to a single edge in $g_d$). Then:
				\begin{itemize}
				\item if $(v_2, w_2) \in E_d$ then R1 (with $v_1 \in V_t$ s.t. $m_{t}(v_1) = w_2$, $a = l_d(v_2)$, and $b = l_d((v_2, w_2))$) can be used to generate a refinement $g_{t+1}$ that subsumes $g_d$, by extending the mappings, with $m_{t+1}(v_1) = v_2$, and $m_{e,t+1}((v_1, w_1)) = [v_2,w_2]$ (where $w_1 \in V_t$ s.t. $m_t(w_1) = w_2$). 
				\item Alternatively, if $(w_2, v_2) \in E_d$ then R2 (with $v_1 \in V_t$ s.t. $m_t(v_1) = w_2$, $a = l_d(v_2)$, and $b = l_d((w_2, v_2))$) can be used to generate $g_{t+1}$, which also subsumes $g_d$, by extending the mapping $m_t$, with $m_{t+1}(v_1) = v_2$, and $m_{e,t+1}((w_1,v_1)) = [w_2,v_2]$ (where $w_1 \in V_t$ s.t. $m_t(w_1) = w_2$).
				\end{itemize}
    		\item Otherwise, if the edge cover does not include all the edges in $g_d$, i.e., $|\mathcal{C}_e| < |E_d|$: in this case, there must be some $(w_1, w_2) \in E_d$ such that $(w_1, w_2) \not\in \mathcal{C}_e$, and R3 (with $v_1 \in V_t$ s.t. $m_t(v_1) = v_2$, $v_2 \in V_1$ s.t. $m_t(v_2) = w_2$) can be used to generate a refinement $g_{t+1}$ that subsumes $g_d$ via the same mapping $m_t$ (since we are only adding an edge, and we are adding it so that by using $m_{t+1} = m_t$, the subsumption conditions are still satisfied by expanding $m_{e,t+1}((v_1,v_2)) = [w_1,w_2]$). 
    		\item Otherwise, we can show that $g_d \sqsubseteq g_t$ via a mapping $m^{-1}_t$ constructed in the following way: $\forall_{v_2 \in V_d} m^{-1}_t(v_2) = v_1 : m_t(v_1) = v_2$. In other words, we just have to invert the mapping. $m_{e,t}$ can also be inverted directly, since at this point in the procedure each edge $e = (v_1,v_2) \in V_t$ is mapped to a sequence of just two vertices $m_{e,t}(e) = [w_1,w_2]$, and thus, we can invert the mapping as $m_{e,t}^{-1}((w_1,w_2)) = [v_1,v_2]$. Moreover, notice that if we are not enforcing object identity, more than one vertex in $V_t$ might map to the same vertex in $v_2 \in V_d$, when constructing $m^{-1}_t$ we just need to pick any of those vertices in $V_t$ as the mapping of $v_2$. The same thing might happen with $m_{e,t}$. It is trivial to see that this mapping satisfies the subsumption conditions in Definition \ref{def:trans-subsumption}.
		\end{itemize}	
	Notice that the previous procedure always terminates in a finite number of steps because at each step we are always increasing either $|\mathcal{C}_v|$  (with R0, R1 or R2), or $|\mathcal{C}_e|$ (with R3), and never decreasing either of them, or decreasing the size of one of the paths that $m_{e,t}$ maps to by splitting these paths in half (with R4) until we reach the minimum size of two, and never increasing the length of any of these paths. Since $|\mathcal{C}_v|$ and $|\mathcal{C}_e|$ are upper-bounded by $|V_d|$ and $|E_d|$ respectively, and the length of the paths that $m_{e,t}$ maps to is lower-bounded by 2, the previous process terminates in a finite number of steps.
	\end{itemize}
\end{itemize}
\end{proof}


\noindent {\bf Proposition \ref{prop:rho-tf-ideal}}. {\em
The downward refinement operator $\rho_{tf}$ defined by the rewrite rules R0, R1, R2, R3, and R4 above is ideal (locally finite, complete, and proper) for the quasi-ordered set $\langle G, \overrightarrow{\sqsubseteq} \rangle$ (where $\overrightarrow{\sqsubseteq}$ represents trans-subsumption), when we impose the Object Identity constraint.
}
\begin{proof}
By Proposition \ref{prop:rho-tf-finite-complete} $\rho_{tf}$ is already locally finite and complete, so, we just need to prove that under object identity, $\rho_{tf}$ is also proper. 

Notice that given two graphs $g_u = \langle V_u, E_u, l_u \rangle$ and $g_d = \langle V_d, E_d, l_d \rangle$, such that $g_d \sqsubseteq g_u$ object identity in trans-subsumption implies that $v_1 \neq v_2 \implies m(v_1) \neq m(v_2)$. Thus $|V_u| = |\{m(v)|v \in V_u\}| \leq |V_d|$. Thus, if $g_u$ is to subsume $g_d$, then $|V_u| \leq |V_d|$. Analogously, if $g_d$ is to subsume $g_u$, then $|V_u| \geq |V_d|$. Thus, for $g_u$ and $g_d$ to be equivalent, we must have that $|V_u| = |V_d|$. This further implies that both the $m_e$ mapping through which $g_u$ subsumes $g_d$ can map each edge to just a sequence of two vertices (since each vertex of $g_u$ is already mapped to a different vertex in $g_d$, and there is no vertex in $g_d$ such that no vertex of $g_u$ is mapped to it). Thus, since each edge of $g_u$ is mapped to exactly just one edge in $g_d$ (and vice versa) we know also that  $|E_u| = |E_d|$. Since R0, R1, R2, R3, and R4 all increase either the number of vertices or the number of edges of a graph, under object identity, a graph $g$ can never be equivalent to any refinement of $g$ generated by R0, R1, R2, R3, or R4. Thus, $\rho_{tf}$ is proper
\end{proof}


\subsection{Proofs for Upward Refinement of FDLG ($\gamma_f$)}

\noindent {\bf Proposition \ref{prop:rho-upward-f-finite-complete}}. {\em
The upward refinement operator $\gamma_f$ defined by the rewrite rules UR0 and UR1 above is locally finite, and complete for the quasi-ordered set $\langle G, \sqsubseteq \rangle$ (where $\sqsubseteq$ represents regular subsumption).
}
\begin{proof}
Let us proof each of the properties separately:
\begin{itemize}
\item {\em $\gamma_f$ is locally finite}: the number of refinements generated by each rewrite rule corresponds to the number of possible values that the variables in the {\em applicability conditions} (left-hand side of the rule) can take. Thus, let us consider the different rewrite rules:
	\begin{itemize}
	\item UR0: $e$ can only be instantiated to each of the $|E|$ different edges of the graph, which is a finite number, and thus UR0 can only generate a finite number of refinements.	
	\item UR1: $v$ can only be instantiated to each of the $|V|$ vertexes of the graph, and $v$ determines $E_v$. Thus, UR1 can generate at most $|V|$ refinements, which is a finite number.
	\end{itemize}
\item {\em $\gamma_f$ is complete}: consider any two DLGs $g_u = \langle V_u, E_u, l_u \rangle , g_d = \langle V_d, E_d, l_d \rangle \in G$, such that $g_u \sqsubseteq g_d$. We need to proof that we can get to $g_u$ from $g_g$ by repeated application of the refinement operator. Since $g_u \sqsubseteq g_d$, we know there is a mapping $m$, that satisfies Definition \ref{def:subsumption}. We will distinguish two cases:
	\begin{itemize}
	\item If $g_d \sqsubseteq g_u$: then there is no need to apply the refinement operator, and we are done.
	\item If $g_d \not\sqsubseteq g_u$: in this case, analogously to what we did in Proposition \ref{prop:rho-f-finite-complete}, we will define a procedure that ensures getting to $g_u$ by repeated application of $\gamma_{t}$ starting from $g_d$ in a finite number of steps. At each step $t$ of the procedure, we will construct a new graph $g_t = \langle V_t, E_t, l_t \rangle$ which is subsumed by $g_u$ via mapping $m_t$ using upward refinement of $g_{t-1}$, and getting one step closer to $g_u$. In the first step, $t=0$, $g_0 = g_d$, then:
		\begin{itemize}
    		\item If the vertex cover of $g_u$ over $g_t$ is not the complete $E_t$, i.e., if $|\mathcal{C}_v| < |V_t|$ and thus $\Delta_v \neq \emptyset$, then by Proposition \ref{prop:delta-leaf-or-nonbridge}, one of these two situations must arise:
			\begin{itemize}
				\item If $\exists w \in \Delta_v$ such that $w$ is a {\em leaf} (i.e., it is only connected to the rest of $g_t$ via a single edge) then, we can use UR0 to generate a new graph $g_{t+1}$ still subsumed by $g_u$ by removing $w$.
				\item Otherwise, there must be a non-bridge edge $e = (w_1, w_2) \in E_t$, such that either $w_1 \in \Delta_v$ or $w_2 \in \Delta_v$. UR1 can be used to generate a new graph $g_{t+1}$ still subsumed by $g_u$ by removing $e$.
			\end{itemize}
			\item Otherwise, it must be the case that $|\mathcal{C}_v| = |V_t|$. In this case, one of the two conditions must be satisfied:
			\begin{itemize}
				\item If there is a pair of vertices $v_1, v_2 \in V_u$ such that $(v_1, v_2) \not\in E_u$, but $(m(v_1), m(v_2)) \in E_t$, then, we know for sure that $(m(v_1), m(v_2))$ is not a bridge (otherwise, $g_u$ would not be connected). Thus, we can use UR1 to generate a new graph $g_{t+1}$ still subsumed by $g_u$ by removing such edge.
				\item Otherwise, we can show that $g_t \sqsubseteq g_u$ via a mapping $m^{-1}_t$ constructed in the following way: $\forall_{v_2 \in V_t} m^{-1}_t(v_2) = v_1 : m_t(v_1) = v_2$. In other words, we just have to invert the mapping. Moreover, notice that if we are not enforcing object identity, more than one vertex in $V_u$ might map to the same vertex in $v_2 \in V_t$, when constructing $m^{-1}_t$ we just need to pick any of those vertices in $V_u$ as the mapping of $v_2$.
			\end{itemize}	
			\item Notice that the previous procedure always terminates in a finite number of steps because at each step we are always decreasing either the number of edges or the number of vertices of the graph, and never increasing any of them. Moreover, since the number of edges and vertices are trivially lower-bounded by 0, the previous process must terminate in a finite number of steps.			
		\end{itemize}
	\end{itemize}
\end{itemize}
\end{proof}

------------------------------------------------------------

\noindent {\bf Proposition \ref{prop:rho-upward-f-ideal}}. {\em
The upward refinement operator $\gamma_f$ defined by the rewrite rules UR0 and UR1 above is ideal (locally finite, complete and proper) for the quasi-ordered set $\langle G, \sqsubseteq \rangle$ (where $\sqsubseteq$ represents regular subsumption), when we impose the Object Identity constraint.
}
\begin{proof}
By Proposition \ref{prop:rho-upward-f-finite-complete} $\gamma_f$ is already locally finite and complete, so, we just need to prove that under object identity, $\gamma_f$ is also proper. As noted above in the proof of Proposition \ref{prop:rho-f-ideal}, given two graphs $g_u = \langle V_u, E_u, l_u \rangle$ and $g_d = \langle V_d, E_d, l_d \rangle$, such that $g_d \sqsubseteq g_u$ object identity in regular subsumption implies the following: since $v_1 \neq v_2 \implies m(v_1) \neq m(v_2)$, this means that $|V_u| = |\{m(v)|v \in V_u\}|$. Thus, this implies that for $g_u$ and $g_d$ to subsume each other (i.e., for being equivalents), we must have that $|V_u| = |V_d|$, which then implies that $|E_u| = |E_d|$. Since UR0 and UR1 both decrease the number of edges or vertices of a graph, under object identity, a graph $g$ can never be equivalent to any upward refinement of $g$ generated by UR0 or UR1. Thus, $\gamma_f$ is proper. 
\end{proof}


\subsection{Proofs for Upward Refinement of FDLG using Trans-Subsumption ($\gamma_{tf}$)}

\noindent {\bf Proposition \ref{prop:rho-upward-tf-finite-complete}}. {\em
The upward refinement operator $\gamma_{tf}$ defined by the rewrite rules UR0, UR1 and UR2 above is locally finite, and complete for the quasi-ordered set $\langle G, \overrightarrow{\sqsubseteq} \rangle$ (where $\overrightarrow{\sqsubseteq}$ represents trans-subsumption).
}

\begin{proof}
Let us proof each of the properties separately:
\begin{itemize}
\item {\em $\gamma_{tf}$ is locally finite}: the number of refinements generated by each rewrite rule corresponds to the number of possible values that the variables in the {\em applicability conditions} (left-hand side of the rule) can take. Thus, let us consider the different rewrite rules:
	\begin{itemize}
	\item by Proposition \ref{prop:rho-upward-f-finite-complete}, UR0 and UR1 produce only a finite number of refinements.
	\item UR2: the only two free variables are $e_1$ and $e_2$, which can take at most $|E|$ possible values. Thus, UR2 can at most generate $|E|^2$ refinements, which is a finite number.
	\end{itemize}

\item {\em $\gamma_{tf}$ is complete}: consider any two DLGs $g_u = \langle V_u, E_u, l_u \rangle , g_d = \langle V_d, E_d, l_d \rangle \in G$, such that $g_u \overrightarrow{\sqsubseteq} g_d$. We need to proof that we can get to $g_u$ from $g_d$ by repeated application of the refinement operator. Since $g_u \overrightarrow{\sqsubseteq} g_d$, we know there are two mappings $m$, $m_e$ that satisfy Definition \ref{def:trans-subsumption}. We will distinguish two cases:
	\begin{itemize}
	\item If $g_d \overrightarrow{\sqsubseteq} g_u$: then there is no need to apply the refinement operator, and we are done.
	\item If $g_d \overrightarrow{\not\sqsubseteq} g_u$: in this case, analogously to what we did to proof completeness in previous propositions, we will define a procedure that ensures getting to $g_u$ by repeated application of $\gamma_{tf}$ starting from $g_d$ in a finite number of steps. At each step $t$ of the procedure, we will construct a new graph $g_t = \langle V_t, E_t, l_t \rangle$ using upward refinement of $g_{t-1}$, and getting one step closer to $g_u$. In the first step $t=0$ and $g_0 = g_d$.
		\begin{itemize}
		\item If $\Delta_v \neq \emptyset$, then that means that are vertices in $V_t$ to which no vertex or  edge in $g_u$ is mapped. By Proposition \ref{prop:delta-leaf-or-nonbridge}, one of these two situations must arise:
			\begin{itemize}
			\item If $\exists w \in \Delta_v$ such that $w$ is a {\em leaf} (i.e., it is only connected to the rest of $g_t$ via a single edge), then, we can use UR0 to generate a new graph by removing $w$.
			\item Otherwise, there must be a non-bridge edge $e = (w_1, w_2) \in E_t$, such that either $w_1 \in \Delta_v$ or $w_2 \in \Delta_v$. UR1 can be used to generate a new graph by removing $e$.
			\end{itemize}
		\item Otherwise, we know that $\Delta_v = \emptyset$. Now, if $\Delta_e \neq \emptyset$, then we know that none of the edges in $\Delta_e$ can be a bridge (since $g_u$ is a connected graph), and thus, we can use UR0 to generate refinements that remove those edges one at a time.
		\item Otherwise, we know that both $\Delta_v = \emptyset$ and $\Delta_e = \emptyset$. Now only two situations can arise:
			\begin{itemize}
			\item If there is any edge $e \in E_u$ such that $m_e(e) = [w_1, ..., w_k]$, where $k>2$, then we know that no vertices in $g_u$ are mapped to any of the vertices $w_2, ...,w_{k-1}$. Since $\Delta_e = \emptyset$, the only edges that $w_i$ ($2 \leq i \leq k-1$) can have thus are those linking with $w_{i-1}$ and $w_{i+1}$. Under this circumstances, we can use UR2 to generate a refinement removing a vertex $w_i$, where $2 \leq i \leq k-1$.
			\item Otherwise, we can show that $g_t \overrightarrow{\sqsubseteq} g_u$ via a mapping $m^{-1}$ constructed in the following way: $\forall_{v_2 \in V_t} m^{-1}(v_2) = v_1 : m(v_1) = v_2$. In other words, we just have to invert the mapping. $m_e$ can also be inverted directly, since at this point in the procedure each edge $e = (v_1,v_2) \in V_u$ is mapped to a sequence of just two vertices $m_e(e) = [w_1,w_2]$, and thus, we can invert the mapping as $m_e^{-1}((w_1,w_2)) = [v_1,v_2]$. Moreover, notice that if we are not enforcing object identity, more than one vertex in $V_u$ might map to the same vertex in $v_2 \in V_t$, when constructing $m^{-1}$ we just need to pick any of those vertices in $V_u$ as the mapping of $v_2$. The same thing might happen with $m_e$. It is trivial to see that this mapping satisfies the subsumption conditions in Definition \ref{def:trans-subsumption}.
			\end{itemize}
		\end{itemize} 
	Notice that the previous procedure always terminates in a finite number of steps because at each step we are always decreasing either the number of edges or the number of vertices of the graph, and never increasing any of them. Moreover, since the number of edges and vertices are trivially lower-bounded by 0, the previous process must terminate in a finite number of steps.
	\end{itemize}
\end{itemize}
\end{proof}


\noindent {\bf Proposition \ref{prop:rho-upward-tf-ideal}}. {\em
The upward refinement operator $\gamma_{tf}$ defined by the rewrite rules UR0, UR1 and UR2 above is ideal (locally finite, complete and proper) for the quasi-ordered set $\langle G, \overrightarrow{\sqsubseteq} \rangle$ (where $\overrightarrow{\sqsubseteq}$ represents trans-subsumption), when we impose the Object Identity constraint.
}
\begin{proof}
By Proposition \ref{prop:rho-upward-tf-finite-complete} $\gamma_{tf}$ is already locally finite and complete, so, we just need to prove that under object identity, $\gamma_{tf}$ is also proper. As shown in the proof of Proposition \ref{prop:rho-upward-f-ideal}, for two graphs $g_1$ and $g_2$ to be equivalent under trans-subsumption, they must have the same exact number of vertices and edges. Since UR0, UR1, and UR2 all reduce either the number of vertices or edges, upward refinements of a graph $g$ generated by UR0, UR1, and UR2 can never be equivalent to $g$, and thus $\gamma_{tf}$ is proper.
\end{proof}


\subsection{Proofs for Downward Refinement of ODLG ($\rho_\preceq$)}

\noindent {\bf Proposition \ref{prop:rho-prec-finite-complete}}. {\em
The downward refinement operator $\rho_{\preceq}$ defined by the rewrite rules R0PO, R1PO, R2PO, R3PO, R4PO, and R5PO above is locally finite and complete for the quasi-ordered set $\langle G, \sqsubseteq_{\prec} \rangle$ (where $\sqsubseteq_{\prec}$ represents subsumption relative to the partial order $\prec$).
}
\begin{proof}
Let us proof each of the properties separately:
\begin{itemize}
\item {\em $\rho_{\preceq}$ is locally finite}: the number of refinements generated by each rewrite rule corresponds to the number of possible values that the variables in the {\em applicability conditions} (left-hand side of the rule) can take. Thus, let us consider the different rewrite rules:
	\begin{itemize}
	\item R0PO: $v_*$ represents a ``new'' vertex, and thus can only take one value, so this operator can only generate a single refinement.
	\item R1PO: $v_*$ represents a ``new'' vertex, and thus can only take one value, $v_1$ can take $|V|$ values, and thus this operator can only generate $|V|$ refinements, which is a finite number.
	\item R2PO: this is analogous to R1PO.
	\item R3PO: $v_1$ and $v_2$ can only take $|V|$ values each, and thus, this operator can only generate $|V|^2$ refinements, which is a finite number.
	\item R4PO: $v_1$ can only take $|V|$ values, and $b$ can at most take $|L|$ values. Thus this operator can only generate $|V|\times|L|$ refinements, which is a finite number.
	\item R5PO: $e$ can only take $|E|$ values, and $b$ can at most take $|L|$ values. Thus this operator can only generate $|E|\times|L|$ refinements, which is a finite number.
	\end{itemize}
	
\item {\em $\rho_{\preceq}$ is complete}: 
consider any two DLGs $g_u = \langle V_u, E_u, l_u \rangle , g_d = \langle V_d, E_d, l_d \rangle \in G$, such that $g_u \sqsubseteq_{\preceq} g_d$. We need to proof that we can get to $g_d$ from $g_u$ by repeated application of the refinement operator. Since $g_u \sqsubseteq_{\preceq} g_d$, we know there is a mapping $m$ that satisfies Definition \ref{def:subsumption-po}. We will distinguish two cases:
	\begin{itemize}
	\item If $g_d \sqsubseteq_{\preceq} g_u$: then there is no need to apply the refinement operator, and we are done.
	\item If $g_d \not\sqsubseteq_{\preceq} g_u$: in this case, we can get to $g_d$ using the following procedure. At each step $t$ of the procedure, we will construct a new graph $g_t = \langle V_t, E_t, l_t \rangle$, which is a downward refinement of $g_{t-1}$ and gets one step closer to $g_d$. In the first step $t=0$ and $g_0 = g_u$:
		\begin{itemize}
    		\item If $|V_t| = 0$: then then R0PO can be used to generate a refinement $g_t$ that clearly still subsumes $g_d$ (since $g_d$ must have at least one vertex, otherwise it would subsume $g_t$).
			\item If there is a vertex $v \in V_t$ such that $l_t(v_1) \neq l_d(m(v_1))$, then we can use R4PO to specialize the label of $v_1$, by selecting a label $b$ that satisfies the applicability conditions of R4PO, while at the same time satisfying $b \preceq l_d(m(v_1))$ (notice that such label $b$ must exist, since otherwise, $g_t$ would not subsume $g_d$).			
    		\item If the vertex cover does not include all the vertices in $g_d$, i.e.,  $|\mathcal{C}_v| < |V_d|$: then let $v_2, w_2 \in V_d$ be any two vertices in $g_d$ such that $v_2 \not\in \mathcal{C}_v$, $w_2 \in \mathcal{C}_v$, and either $(v_2, w_2) \in E_d$ or $(w_2, v_2) \in E_d$ (notice that two such vertices must exist, since $g_d$ is a connected graph). Then:
				\begin{itemize}
				\item If $(v_2, w_2) \in E_d$ then R1PO (with $v_1 \in V_t$ s.t. $m(v_1) = w_2$) can be used to generate a refinement $g_t$ that subsumes $g_d$, by extending the mapping $m$, with $m(v_*) = v_2$. 
				\item Alternatively, if $(w_2, v_2) \in E_d$ then R2PO (with $v_1 \in V_t$ s.t. $m(v_1) = w_2$) can be used to generate $g_t$, which also subsumes $g_d$, by extending the mapping $m$, with $m(v_*) = v_2$.
				\end{itemize}
			\item If there is an edge $e \in E_t$ such that $l_t(e) \neq l_d(m(e))$, then we can use R5PO to specialize the label of $e$, by selecting a label $b$ that satisfies the applicability conditions of R5PO, while at the same time satisfying $b \preceq l_d(m(e))$ (notice that such label $b$ must exist, since otherwise, $g_t$ would not subsume $g_d$).
    		\item Otherwise, if the edge cover does not include all the edges in $g_d$, i.e., $|\mathcal{C}_e| < |E_d|$: in this case, there must be some $(w_1, w_2) \in E_d$ such that $(w_1, w_2) \not\in \mathcal{C}_e$, and R3PO (with $v_1 \in V_t$ s.t. $m(v_1) = v_2$, $v_2 \in V_t$ s.t. $m(v_2) = w_2$) can be used to generate a refinement $g_t$ that subsumes $g_d$ via the same mapping $m$ (since we are only adding an edge, and we are adding it so that by using $m$, the subsumption conditions are still satisfied). Notice that $v_1$ and $v_2$ must exist, since reached this point, we have that $|\mathcal{C}_v| = |V_d|$.
    		\item Otherwise, we can show that $g_d \sqsubseteq_{\preceq} g_t$ via a mapping $m^{-1}$ constructed in the following way: $\forall_{v_2 \in V_d} m^{-1}(v_2) = v_1 : m(v_1) = v_2$. In other words, we just have to invert the mapping. Notice that if we are not enforcing object identity, more than one vertex in $V_t$ might map to the same vertex in $v_2 \in V_d$, when constructing $m^{-1}$ we just need to pick any of those vertices in $V_t$ as the mapping of $v_2$. It is trivial to see that this mapping satisfies the subsumption conditions in Definition \ref{def:subsumption-po}.
		\end{itemize}		
	Notice that the procedure always terminates in a finite number of steps because at each step we are always increasing either $|\mathcal{C}_v|$  (with R0PO, R1PO or R2PO), or $|\mathcal{C}_e|$ (with R3PO), and never decreasing either of them, or making a label more specific (with R4PO or R5PO), and never making it more general. Since $|\mathcal{C}_v|$ and $|\mathcal{C}_e|$ are upper-bounded by $|V_d|$ and $|E_d|$ respectively, and the number of times we can make a label more specific is bounded by the size of $|L|$, the previous process terminates in a finite number of steps.
	\end{itemize}
\end{itemize}
\end{proof}


\noindent {\bf Proposition \ref{prop:rho-prec-ideal}}. {\em
The downward refinement operator  $\rho_{\preceq}$ defined by the rewrite rules R0PO, R1PO, R2PO, R3PO, R4PO and R5PO above is ideal (locally finite, complete, and proper) for the quasi-ordered set $\langle G, \sqsubseteq_{\prec} \rangle$ (where $\sqsubseteq_{\prec}$ represents subsumption relative to the partial order $\prec$), when we impose the Object Identity constraint.
}
\begin{proof}
By Proposition \ref{prop:rho-prec-finite-complete} $\rho_{\preceq}$ is already locally finite and complete, so, we just need to prove that under object identity, $\rho_{\preceq}$ is also proper. 
Notice that object identity in subsumption relative to the partial order $\prec$ implies $|V_u| = |\{m(v)|v \in V_u\}|$ (since $v_1 \neq v_2 \implies m(v_1) \neq m(v_2)$). Thus, this implies that for two graphs $g_u = \langle V_u, E_u, l_u \rangle , g_d = \langle V_d, E_d, l_d \rangle \in G$ to subsume each other (i.e., for being equivalents), we must have that $|V_u| = |V_d|$, which then implies that $|E_u| = |E_d|$. 
Since R0PO, R1PO, R2PO and R3PO all increase either the number of vertices or the number of edges of a graph, under object identity, a graph $g$ can never be equivalent to any refinement generated by R0PO, R1PO, R2PO and R3PO. 

Let us now consider R4PO. R4PO takes a graph $g_u$ and generates another one, $g_d$, that is identical except for the label of a vertex $v$, which is changed from a label $a$ to a label $b$, such that $b$ is more specific than $a$: $a \preceq b$ and $a \neq b$. We can split the set of vertices $V_u$ into two subsets: $V_u^b = \{v \in V_u | b \preceq l_u(v) \}$ (those vertices with a label equal or more specific than $b$) and $V_u^{\overline{b}} = V_u \setminus V_u^b$. We can do the same with the vertices of $V_d$: $V_d^b = \{v \in V_d | b \preceq l_d(v) \}$ and $V_d^{\overline{b}} = V_d \setminus V_d^b$. By construction, we also know that $V_d^b = V_u^b \cup \{v\}$ (since the only change to the graph is the label of vertex $v$), and thus $|V_d^{b}| = |V_u^b| + 1$. Now, if $g_d$ is to subsume $g_u$, a mapping $m^{-1}$ from $V_d$ to $V_u$ must exist that satisfies the subsumption conditions. Given Definition \ref{def:subsumption-po}, vertices in $V_d^{b}$ must be mapped via $m^{-1}$ to different vertices in $V_u^{b}$ (due to the Object Identity constraint), however, since $|V_u^{b}| < |V_d^b|$, this mapping cannot exist, and thus $g_d$ cannot subsume $g_u$ if $g_d$ is generated with R4PO.

Finally, let us consider R5PO. Notice that object identity also implies that when $g_u$ and $g_d$ subsume each other, we have that $|E_u| = |E_d|$. This is because, the subsumption mapping $m$ between edges also defines a mapping between edges, where $m((v,w)) = (m(v), m(w))$, given that $v \neq w$ implies $m(v) \neq m(w)$, we also know that if $e_1 \neq e_2$ (where $e_1, e_2 \in E_u$), then $m(e_1) \neq m(e_2)$. Thanks to this fact, R5PO cannot generate a refinement that subsumes $g_u$. The proof is analogous to the one for R4PO in the previous paragraph, but this time splitting the set of edges $E_u$ into two subsets, instead of the set of vertices.

Since none of the refinements generated by R0PO, R1PO, R2PO, R3PO, R4PO or R5PO of a graph $g_u$ can subsume $g_u$, we have that $\rho_\preceq$ is proper.
\end{proof}


\subsection{Proofs for Downward Refinement of ODLG using Trans-Subsumption ($\rho_{t\preceq}$)}

\noindent {\bf Proposition \ref{prop:rho-tprec-finite-complete}}. {\em
The downward refinement operator $\rho_{t\preceq}$ defined by the rewrite rules R0PO, R1PO, R2PO, R3PO, R4PO, R5PO, and R6PO above is locally finite, and complete for the quasi-ordered set $\langle G, \overrightarrow{\sqsubseteq}_{\prec} \rangle$ (where $\overrightarrow{\sqsubseteq}_{\prec}$ represents trans-subsumption relative to the partial order $\prec$).
}
\begin{proof}
Let us proof each of the properties separately:
\begin{itemize}
\item {\em $\rho_{t\preceq}$ is locally finite}: the number of refinements generated by each rewrite rule corresponds to the number of possible values that the variables in the {\em applicability conditions} (left-hand side of the rule) can take. Thus, let us consider the different rewrite rules:
	\begin{itemize}
	\item by Proposition \ref{prop:rho-prec-finite-complete}, R0PO, RPO, R2PO, R3PO, R4PO and R5PO produce only a finite number of refinements.
	\item R6PO: $v_*$ represents a ``new'' vertex, and thus can only take one value, $(v_1, v_2)$ can only be bound in $|E|$ different ways, and $b$ is determined by the binding of $(v_1, v_2)$. Thus, the maximum number of possible value bindings for R6PO is $1 \times |E| \times 1$, and thus can only generate a finite number of refinements.
	\end{itemize}

\item {\em $\rho_{t\preceq}$ is complete}: consider any two DLGs $g_u = \langle V_u, E_u, l_u \rangle , g_d = \langle V_d, E_d, l_d \rangle \in G$, such that $g_u \overrightarrow{\sqsubseteq}_{\prec} g_d$. We need to proof that we can get to $g_d$ from $g_u$ by repeated application of the refinement operator. Since $g_u \overrightarrow{\sqsubseteq}_{\prec} g_d$, we know there are two mappings $m$, $m_e$ that satisfy Definition \ref{def:trans-subsumption-po}. We will distinguish two cases:
	\begin{itemize}
	\item If $g_d \overrightarrow{\sqsubseteq}_{\prec} g_u$: then there is no need to apply the refinement operator, and we are done.
	\item If $g_d \overrightarrow{\not\sqsubseteq}_{\prec} g_u$: in this case, analogously to what we did in previous propositions, we will define a procedure that ensures getting to $g_d$ by repeated application of $\rho_{t\prec}$ starting from $g_u$ in a finite number of steps. At each step $t$ of the procedure, we will construct a new graph $g_t = \langle V_t, E_t, l_t \rangle$, which is a downward refinement of $g_{t-1}$ and gets one step closer to $g_d$, in the first step $t=0$ and $g_0 = g_u$:
		\begin{itemize}
    		\item If $|V_t| = 0$: then then R0PO can be used to generate a refinement $g_t$ that clearly still subsumes $g_d$ (since $g_d$ must have at least one vertex, otherwise it would subsume $g_t$).
			\item Otherwise, if $\exists e \in E_t: m_e(v) = [w_1, ..., w_k]$ such that $k>2$, then this means that there is some edge in $g_t$ that has been mapped to more than one edge in $g_d$. In this case, we can use R6PO to split $e$ by using the following bindings: $(v_1, v_2) = e$. The resulting graph still subsumes $g_d$ by expanding the mapping $m$ with $m(v_*) = w_2$, and by updating $m_e$ as follows: $m_e((v_1,v_*)) = [w_1,w_2]$, and $m_e(v_*,v_2) = [w_2, ..., w_k]$.
			\item If there is a vertex $v \in V_t$ such that $l_t(v_1) \neq l_d(m(v_1))$, then we can use R4PO to specialize the label of $v_1$, by selecting a label $b$ that satisfies the applicability conditions of R4PO, while at the same time satisfying $b \preceq l_d(m(v_1))$ (notice that such label $b$ must exist, since otherwise, $g_t$ would not subsume $g_d$).			
    		\item If the vertex cover does not include all the vertices in $g_d$, i.e.,  $|\mathcal{C}_v| < |V_d|$: then let $v_2, w_2 \in V_d$ be any two vertices in $g_d$ such that $v_2 \not\in \mathcal{C}_v$, $w_2 \in \mathcal{C}_v$, and either $(v_2, w_2) \in E_d$ or $(w_2, v_2) \in E_d$ (notice that two such vertices must exist, since $g_d$ is a connected graph, and also, that $v_2$ cannot be part of any of the paths to which edges in $g_t$ are mapped via $m_e$ since if we have reached this point in the procedure, it means that every single edge in $g_t$ is mapped to a single edge in $g_d$). Then:
				\begin{itemize}
				\item If $(v_2, w_2) \in E_d$ then R1PO (with $v_1 \in V_t$ s.t. $m(v_1) = w_2$) can be used to generate a refinement $g_t$ that subsumes $g_d$, by extending the mapping $m$, with $m(v_*) = v_2$. 
				\item Alternatively, if $(w_2, v_2) \in E_d$ then R2PO (with $v_1 \in V_t$ s.t. $m(v_1) = w_2$) can be used to generate $g_t$, which also subsumes $g_d$, by extending the mapping $m$, with $m(v_*) = v_2$.
				\end{itemize}
			\item If there is an edge $e \in E_t$ such that $l_t(e) \neq l_d(m(e))$, then we can use R5PO to specialize the label of $e$, by selecting a label $b$ that satisfies the applicability conditions of R5PO, while at the same time satisfying $b \preceq l_d(m(e))$ (notice that such label $b$ must exist, since otherwise, $g_t$ would not subsume $g_d$).
    		\item Otherwise, if the edge cover does not include all the edges in $g_d$, i.e., $|\mathcal{C}_e| < |E_d|$: in this case, there must be some $(w_1, w_2) \in E_d$ such that $(w_1, w_2) \not\in \mathcal{C}_e$, and R3PO (with $v_1 \in V_t$ s.t. $m(v_1) = v_2$, $v_2 \in V_t$ s.t. $m(v_2) = w_2$) can be used to generate a refinement $g_t$ that subsumes $g_d$ via the same mapping $m$ (since we are only adding an edge, and we are adding it so that by using $m$, the subsumption conditions are still satisfied by expanding $m_e((v_1,v_2)) = [w_1,w_2]$). Notice that $v_1$ and $v_2$ must exist, since reached this point, we have that $|\mathcal{C}_v| = |V_d|$.
 
    		\item Otherwise, we can show that $g_d \overrightarrow{\not\sqsubseteq}_{\prec} g_t$ via a mapping $m^{-1}$ constructed in the following way: $\forall_{v_2 \in V_d} m^{-1}(v_2) = v_1 : m(v_1) = v_2$. In other words, we just have to invert the mapping. Notice that if we are not enforcing object identity, more than one vertex in $V_t$ might map to the same vertex in $v_2 \in V_d$, when constructing $m^{-1}$ we just need to pick any of those vertices in $V_t$ as the mapping of $v_2$. It is trivial to see that this mapping satisfies the subsumption conditions in Definition \ref{def:subsumption-po}.
    		\item Otherwise, we can show that $g_d \overrightarrow{\sqsubseteq}_{\prec} g_t$ via a mapping $m^{-1}$ constructed in the following way: $\forall_{v_2 \in V_d} m^{-1}(v_2) = v_1 : m(v_1) = v_2$. In other words, we just have to invert the mapping. $m_e$ can also be inverted directly, since at this point in the procedure each edge $e = (v_1,v_2) \in V_t$ is mapped to a sequence of just two vertices $m_e(e) = [w_1,w_2]$, and thus, we can invert the mapping as $m_e^{-1}((w_1,w_2)) = [v_1,v_2]$. Moreover, notice that if we are not enforcing object identity, more than one vertex in $V_t$ might map to the same vertex in $v_2 \in V_d$, when constructing $m^{-1}$ we just need to pick any of those vertices in $V_t$ as the mapping of $v_2$. The same thing might happen with $m_e$. It is trivial to see that this mapping satisfies the subsumption conditions in Definition \ref{def:trans-subsumption-po}.
		\end{itemize}	
	Notice that the procedure always terminates in a finite number of steps because at each step we are always increasing either $|\mathcal{C}_v|$  (with R0PO, R1PO or R2PO), or $|\mathcal{C}_e|$ (with R3PO), and never decreasing either of them, making a label more specific (with R4PO or R5PO), and never making it more general,
or decreasing the size of one of the paths that $m_e$ maps to by splitting these paths in half (with R6PO) until we reach the minimum size of two, and never increasing the length of any of these paths. Since $|\mathcal{C}_v|$ and $|\mathcal{C}_e|$ are upper-bounded by $|V_d|$ and $|E_d|$ respectively, and the number of times we can make a label more specific is bounded by the size of $|L|$, the previous process terminates in a finite number of steps.
	\end{itemize}
\end{itemize}
\end{proof}


\noindent {\bf Proposition \ref{prop:rho-tprec-ideal}}. {\em
The downward refinement operator  $\rho_{t\preceq}$ defined by the rewrite rules R0PO, R1PO, R2PO, R3PO, R4PO, R5PO, and R6PO above is ideal (locally finite, complete, and proper) for the quasi-ordered set $\langle G, \overrightarrow{\sqsubseteq}_{\prec} \rangle$ (where $\overrightarrow{\sqsubseteq}_{\prec}$ represents trans-subsumption relative to the partial order $\prec$), when we impose the Object Identity constraint.
}
\begin{proof}
By Proposition \ref{prop:rho-tprec-finite-complete} $\rho_{t\preceq}$ is already locally finite and complete, so, we just need to prove that under object identity, $\rho_{t\preceq}$ is also proper. Notice that object identity in trans-subsmption relative to a partial order $\prec$, implies that $v_1 \neq v_2 \implies m(v_1) \neq m(v_2)$. Thus $|V_u| = |\{m(v)|v \in V_u\}| \leq |V_d|$. Thus, if $g_u$ is to subsume $g_d$, then $|V_u| \leq |V_d|$. Analogously, if $g_d$ is to subsume $g_u$, then $|V_u| \geq |V_d|$. Thus, for $g_u$ and $g_d$ to be equivalent, we must have that $|V_u| = |V_d|$. This further implies that both the $m_e$ mapping through which $g_u$ subsumes $g_d$ can map each edge to just a sequence of two vertices (since each vertex of $g_u$ is already mapped to a different vertex in $g_d$, and there is no vertex in $g_d$ such that no vertex of $g_u$ is mapped to it). Thus, since each edge of $g_u$ is mapped to exactly just one edge in $g_d$ (and vice versa) we know also that  $|E_u| = |E_d|$. Since R0PO, R1PO, R2PO, R3PO, and R6PO all increase either the number of vertices or the number of edges of a graph, under object identity, a graph $g$ can never be equivalent to any refinement generated by R0PO, R1PO, R2PO, R3PO, and R6PO. 

Considering R4PO and R5PO, the proof of Proposition \ref{prop:rho-prec-ideal} for R4PO and R5PO applies here, and thus $\rho_{t\preceq}$ is proper
\end{proof}


\subsection{Proofs for Upward Refinement of ODLG ($\gamma_\preceq$)}

\noindent {\bf Proposition \ref{prop:gamma-prec-finite-complete}}. {\em
The upward refinement operator $\gamma_{\preceq}$ defined by the rewrite rules UR0PO, UR1PO, UR2PO, and UR3PO above is locally finite and complete for the quasi-ordered set $\langle G, \sqsubseteq_{\prec} \rangle$ (where $\sqsubseteq_{\prec}$ represents subsumption relative to the partial order $\prec$).
}
\begin{proof}
Let us proof each of the properties separately:
\begin{itemize}
\item {\em $\gamma_\preceq$ is locally finite}: the number of refinements generated by each rewrite rule corresponds to the number of possible values that the variables in the {\em applicability conditions} (left-hand side of the rule) can take. Thus, let us consider the different rewrite rules:
	\begin{itemize}
	\item UR0PO: $v_1$ can take $|V|$ values, $a$ is determined by $v_1$, and $b$ can take at most $|L|$ values. Thus, the upper bound on the number of refinements this rule can generate is $|V| \times |L|$, which is a finite number.
	\item UR1PO: $e$ can take $|E|$ values, $a$ is determined by $e$, and $b$ can take at most $|L|$ values. Thus, the upper bound on the number of refinements this rule can generate is $|E| \times |L|$, which is a finite number.
	\item UR2PO: the only free variable is $e$, which can take at most $|E|$ values. Thus this rule can also only generate a finite number of refinements.
	\item UR3PO: $v$ can take at most $|V|$ values (a finite amount), and $E_v$ is determined by $v$. Thus this rule can also only generate a finite number of refinements.
	\end{itemize}

\item {\em $\gamma_\preceq$ is complete}: consider any two DLGs $g_u = \langle V_u, E_u, l_u \rangle , g_d = \langle V_d, E_d, l_d \rangle \in G$, such that $g_u \sqsubseteq_\preceq g_d$. We need to proof that we can get to $g_d$ from $g_u$ by repeated application of the refinement operator. Since $g_u \sqsubseteq_\preceq g_d$, we know there is a mapping $m$, that satisfy Definition \ref{def:subsumption-po}. We will distinguish two cases:
	\begin{itemize}
	\item If $g_d \sqsubseteq_\preceq g_u$: then there is no need to apply the refinement operator, and we are done.
	\item If $g_d \not\sqsubseteq_\preceq g_u$: in this case, analogously to what we did in previous propositions, we will define a procedure that ensures getting to $g_u$ by repeated application of $\gamma_\preceq$ starting from $g_d$ in a finite number of steps. At each step $t$ of the procedure, we will construct a new graph $g_t = \langle V_t, E_t, l_t \rangle$ using upward refinement of $g_{t-1}$, getting one step closer to $g_u$. In the first step, $t=0$ and $g_0 = g_d$:
		\begin{itemize}
			\item if $\exists w \in \Delta_v$ such that $l_d(w) \neq \top$, then UR0PO can be used to generate a graph with the label of $w$ generalized one step closer to $\top$. Since $w$ does not belong to the cover, changing its label does not affect any of the subsumption conditions, and the resulting graph would still be subsumed by $g_u$.
			\item Otherwise, if $\exists v \in V_u$ such that $l_u(v) \neq l_t(m(v))$, then since $g_u \sqsubseteq_\preceq g_t$, we know that $l_u(v) \preceq l_t(m(v))$, and UR0PO can be used to generalize $l_t(m(v))$ on step closer to $l_u(v)$, by selecting $b$ such that $l_u(v) \preceq b$ in order to ensure that the subsumption relation between $g_u$ and the new refinement is still satisfied (notice that such $b$ must exist since $l_u(v) \preceq l_t(m(v))$).
			\item Otherwise, if $\exists e \in \Delta_e$ such that $l_t(e) \neq \top$, then UR1PO can be used to generate a graph with the label of $e$ generalized one step closer to $\top$. Since $e$ does not belong to the cover, changing its label does not affect any of the subsumption conditions, and the resulting graph would still be subsumed by $g_u$.
			\item Otherwise, if $\exists e \in E_u$ such that $l_u(e) \neq l_t(m(e))$, then since $g_u \sqsubseteq_\preceq g_t$, we know that $l_u(e) \preceq l_t(m(e))$, and UR1PO can be used to generalize $l_t(m(e))$ on step closer to $l_u(e)$, by selecting $b$ such that $l_u(e) \preceq b$ in order to ensure that the subsumption relation between $g_u$ and the new refinement is still satisfied (notice that such $b$ must exist since $l_u(e) \preceq l_t(m(e))$).		
    		\item At this point, the steps above ensure that: $\forall_{v \in V_u} l_u(v) = l_t(m(v))$, $\forall_{e \in E_u} l_u(e) = l_t(m(e))$, $\forall_{w \in \Delta_v} l_t(w) = \top$, and $\forall_{e \in \Delta_e} l_t(e) = \top$. Now, if $\Delta_v \neq \emptyset$, then by Proposition \ref{prop:delta-leaf-or-nonbridge}, one of these two situations must arise:
			\begin{itemize}
				\item If $\exists w \in \Delta_v$ such that $w$ is a {\em leaf} (i.e., it is only connected to the rest of $g_t$ via a single edge) then, we can use UR3PO to generate a new graph by removing $w$ (since we know that $l_t(w) = \top$).
				\item Otherwise, there must be a non-bridge edge $e = (w_1, w_2) \in E_t$, such that either $w_1 \in \Delta_v$ or $w_2 \in \Delta_v$. UR2PO can be used to generate a new graph by removing $e$ (since we know that $l_t(e) = \top$).
			\end{itemize}
			\item Otherwise, $\Delta_v = \emptyset$ and thus $|\mathcal{C}_v| = |V_t|$. In this case, one of the two conditions must be satisfied:
			\begin{itemize}
				\item If there is a pair of vertices $v_1, v_2 \in V_u$ such that $(v_1, v_2) \not\in E_u$, but $(m(v_1), m(v_2)) \in E_t$, then, we know for sure that $(m(v_1), m(v_2))$ is not a bridge (otherwise, $g_u$ would not be connected). Thus, we can use UR2PO to generate a graph by removing such edge (which we know has label $\top$).
				\item Otherwise, we can show that $g_t \sqsubseteq g_u$ via a mapping $m^{-1}$ constructed in the following way: $\forall_{v_2 \in V_t} m^{-1}(v_2) = v_1 : m(v_1) = v_2$. In other words, we just have to invert the mapping. Moreover, notice that if we are not enforcing object identity, more than one vertex in $V_u$ might map to the same vertex in $v_2 \in V_t$, when constructing $m^{-1}$ we just need to pick any of those vertices in $V_u$ as the mapping of $v_2$.
			\end{itemize}	
			\item Notice that the previous procedure always terminates in a finite number of steps because at each step we are always decreasing either the number of edges or the number of vertices of the graph (with UR2PO or UR3PO), and never increasing any of them, or making a label more general (with UR0PO or UR1PO) and never making it more specific. Moreover, since the number of edges and vertices are trivially lower-bounded by 0, and the number of times we can make a label more general is bounded by the size of $|L|$, the previous process must terminate in a finite number of steps.		
		\end{itemize}	
	\end{itemize}
\end{itemize}
\end{proof}


\noindent {\bf Proposition \ref{prop:gamma-prec-ideal}}. {\em
The upward refinement operator  $\gamma_{\preceq}$ defined by the rewrite rules UR0PO, UR1PO, UR2PO, and UR3PO above is ideal (locally finite, complete, and proper) for the quasi-ordered set $\langle G, \sqsubseteq_{\prec} \rangle$ (where $\sqsubseteq_{\prec}$ represents subsumption relative to the partial order $\prec$), when we impose the Object Identity constraint.
}
\begin{proof}
By Proposition \ref{prop:gamma-prec-finite-complete} $\gamma_\preceq$ is already locally finite and complete, so, we just need to prove that under object identity, $\gamma_\preceq$ is also proper. 
Object identity in subsumption relative to the partial order $\prec$ implies that $|V_u| = |\{m(v)|v \in V_u\}|$ (since $v_1 \neq v_2 \implies m(v_1) \neq m(v_2)$). Thus, this implies that for two graphs $g_u = \langle V_u, E_u, l_u \rangle , g_d = \langle V_d, E_d, l_d \rangle \in G$ to subsume each other (i.e., for being equivalents), we must have that $|V_u| = |V_d|$, which then implies that $|E_u| = |E_d|$. 
Since UR2PO and UR3PO all decrease either the number of vertices or the number of edges of a graph, under object identity, a graph $g$ can never be equivalent to any of its refinements generated by UR2PO and UR3PO. 

Let us now consider UR0PO. UR0PO takes a graph $g_u$ and generates a graph $g_d$ that is identical except for the label of a vertex $v$, which is changed from a label $a$ to a label $b$, such that $b$ is more general than $a$: $b \preceq a$ and $a \neq b$. We can split the set of vertices $V_u$ into two subsets: $V_u^a = \{v \in V_u | a \preceq l_u(v) \}$ (those vertices with a label equal or more specific than $a$) and $V_u^{\overline{a}} = V_u \setminus V_u^a$. We can do the same with the vertices of $V_d$: $V_d^a = \{v \in V_d | a \preceq l_d(v) \}$ and $V_d^{\overline{a}} = V_d \setminus V_d^a$. By construction, we also know that $V_u^a = V_d^a \cup \{v\}$ (since the only change to the graph is the label of vertex $v$), and thus $|V_u^{a}| = |V_d^a| + 1$. Now, if $g_u$ is to subsume $g_d$, a mapping $m^{-1}$ from $V_u$ to $V_d$ must exist that satisfies the subsumption conditions. Given Definition \ref{def:subsumption-po}, vertices in $V_u^{a}$ must be mapped via $m^{-1}$ to different vertices (due to object identity) in $V_d^{a}$, however, since $|V_d^{a}| < |V_u^a|$, this mapping cannot exist, and thus $g_u$ cannot subsume $g_d$ if $g_d$ is generated with UR0PO.

Finally, let us consider UR1PO. Notice that object identity also implies that when $g_u$ and $g_d$ subsume each other, we have that $|E_u| = |E_d|$. This is because, the subsumption mapping $m$ between edges also defines a mapping between edges, where $m((v,w)) = (m(v), m(w))$, given that $v \neq w$ implies $m(v) \neq m(w)$, we also know that if $e_1 \neq e_2$ (where $e_1, e_2 \in E_u$), then $m(e_1) \neq m(e_2)$. Thanks to this fact, R5PO cannot generate a refinement that subsumes $g_u$. The proof is analogous to the one for UR0PO in the previous paragraph, but this time splitting the set of edges $E_u$ into two subsets, instead of the set of vertices.

Since none of the refinements generated by UR0PO, UR1PO, UR2PO, and UR3PO can be equivalent to $g_u$, we have that $\rho_\preceq$ is proper.
\end{proof}


\subsection{Proofs for Upward Refinement of ODLG using Trans-Subsumption ($\gamma_{t\preceq}$)}

\noindent {\bf Proposition \ref{prop:gamma-tprec-finite-complete}}. {\em
The upward refinement operator $\gamma_{t\preceq}$ defined by the rewrite rules UR0PO, UR1PO, UR2PO, UR3PO, and UR4PO above is locally finite, and complete for the quasi-ordered set $\langle G, \overrightarrow{\sqsubseteq}_{\prec} \rangle$ (where $\overrightarrow{\sqsubseteq}_{\prec}$ represents trans-subsumption relative to the partial order $\prec$).
}

\begin{proof}
Let us proof each of the properties separately:
\begin{itemize}
\item {\em $\gamma_{t\preceq}$ is locally finite}: the number of refinements generated by each rewrite rule corresponds to the number of possible values that the variables in the {\em applicability conditions} (left-hand side of the rule) can take. Thus, let us consider the different rewrite rules:
	\begin{itemize}
	\item by Proposition \ref{prop:gamma-prec-finite-complete}, UR0PO, UR1PO, UR2PO, and UR3PO produce only a finite number of refinements.
	\item UR4PO: the only two free variables are $e_1$ and $e_2$, can take at most $|E|$ possible values each. Thus, UR4PO can at most generate $|E|^2$ refinements, which is a finite number.
	\end{itemize}
\item {\em $\gamma_{t\preceq}$ is complete}: consider any two DLGs $g_u = \langle V_u, E_u, l_u \rangle , g_d = \langle V_d, E_d, l_d \rangle \in G$, such that $g_u \overrightarrow{\sqsubseteq}_{\prec} g_d$. We need to proof that we can get to $g_d$ from $g_u$ by repeated application of the refinement operator. Since $g_u \overrightarrow{\sqsubseteq}_{\prec} g_d$, we know there are two mappings $m$, $m_e$ that satisfy Definition \ref{def:trans-subsumption-po}. We will distinguish two cases:
	\begin{itemize}
	\item If $g_d \overrightarrow{\sqsubseteq}_{\prec} g_u$: then there is no need to apply the refinement operator, and we are done.
	\item If $g_d \overrightarrow{\not\sqsubseteq}_{\prec} g_u$: in this case, analogously to what we did to proof completeness in previous propositions, we will define a procedure that ensures getting to $g_u$ by repeated application of $\gamma_{t\preceq}$ starting from $g_d$ in a finite number of steps. At each step $t$ of the procedure, we will construct a new graph $g_t = \langle V_t, E_t, l_t \rangle$ using upward refinement of $g_{t-1}$, and getting one step closer to $g_u$. In the first step, $t=0$, $g_0 = g_d$:
		\begin{itemize}
			\item if $\exists w \in V_t$ such that $\nexists v \in V_u : m(v) = w$, then UR0PO can be used to generate a graph with the label of $w$ generalized one step closer to $\top$. Since no vertex of $g_u$ is mapped to $w$, changing its label does not affect any of the subsumption conditions, and the resulting graph will still be subsumed by $g_u$.
			\item Otherwise, if $\exists v \in V_u$ such that $l_u(v) \neq l_t(m(v))$, then since $g_u \sqsubseteq_\preceq g_t$, we know that $l_u(v) \preceq l_t(m(v))$, and UR0PO can be used to generalize $l_t(m(v))$ on step closer to $l_u(v)$, by selecting $b$ such that $l_u(v) \preceq b$ in order to ensure that the subsumption relation between $g_u$ and the new refinement is still satisfied (notice that such $b$ must exist since $l_u(v) \preceq l_t(m(v))$).
			\item Otherwise, if $\exists e \in \Delta_e$ such that $l_t(e) \neq \top$, then UR1PO can be used to generate a graph with the label of $e$ generalized one step closer to $\top$. Since $e$ does not belong to the cover, changing its label does not affect any of the subsumption conditions, and the resulting graph would still be subsumed by $g_u$.
			\item Otherwise, if $\exists e_2 = (w_1, w_2) \in \mathcal{C}_e$ such that $\exists e_1 \in E_u: w_1, w_2 \in m_e(e_1)$, and $l_u(e_1) \neq l_t(e_2)$, then since $g_u \sqsubseteq_\preceq g_t$, we know that $l_u(e_1) \preceq l_t(e_1)$, and UR1PO can be used to generalize $l_t(e_2)$ on step closer to $l_u(e_1)$, by selecting $b$ such that $l_u(e_1) \preceq b$ in order to ensure that the subsumption relation between $g_u$ and the new refinement is still satisfied (notice that such $b$ must exist since $l_u(e_1) \preceq l_t(e_2)$).
    		\item At this point, the steps above ensure that: 
			\begin{itemize}
				\item $\forall_{v \in V_u} l_u(v) = l_t(m(v))$, 
				\item $\forall_{w \in \Delta_v}$ $l_t(w) = \top$, 
				\item $\forall_{e \in \Delta_e} l_t(e) = \top$, and
				\item $\forall e_2 = (w_1, w_2) \in \mathcal{C}_e$ such that $\exists e_1 \in E_u: w_1, w_2 \in m_e(e_1)$, we have that $l_u(e_1) = l_t(e_2)$.
			\end{itemize}
				Now, if $\Delta_v \neq \emptyset$, then that means that there are vertices in $V_t$ to which no vertex or edge in $g_u$ is mapped. By Proposition \ref{prop:delta-leaf-or-nonbridge}, one of these two situations must arise:
			\begin{itemize}
			\item If $\exists w \in \Delta_v$ such that $w$ is a {\em leaf} (i.e., it is only connected to the rest of $g_t$ via a single edge, then, we can use UR3PO to generate a new graph by removing $w$.
			\item Otherwise, there must be a non-bridge edge $e = (w_1, w_2) \in E_t$, such that either $w_1 \in \Delta_v$ or $w_2 \in \Delta_v$. UR2PO can be used to generate a new graph by removing $e$.
			\end{itemize}
		\item Otherwise, we know that $\Delta_v = \emptyset$. Now, if $\Delta_e \neq \emptyset$, then we know that none of the edges in $\Delta_e$ can be a bridge (since $g_u$ is a connected graph), and thus, we can use UR0 to generate refinements that remove those edges one at a time.
		\item Otherwise, we know that both $\Delta_v = \emptyset$ and $\Delta_e = \emptyset$. Now only two situations might arise:
			\begin{itemize}
			\item If there is any edge $e \in E_u$ such that $m_e(e) = [w_1, ..., w_k]$, where $k>2$, then we know that no vertices in $g_u$ are mapped to any of the vertices $w_2, ..., w_{k-1}$. Since $\Delta_e = \emptyset$, the only edges that $w_i$ ($2 \leq i \leq k-1$) can have thus are those linking with $w_{i-1}$ and $w_{i+1}$. Under this circumstances, we can use UR3PO to generate a refinement removing a vertex $w_i$, where $2 \leq i \leq k-1$ (at this point, we know that vertex $w_i$ must have label $\top$).
			\item Otherwise, we can show that $g_t \overrightarrow{\sqsubseteq}_{\prec} g_u$ via a mapping $m^{-1}$ constructed in the following way: $\forall_{v_2 \in V_t} m^{-1}(v_2) = v_1 : m(v_1) = v_2$. In other words, we just have to invert the mapping. $m_e$ can also be inverted directly, since at this point in the procedure each edge $e = (v_1,v_2) \in V_u$ is mapped to a sequence of just two vertices $m_e(e) = [w_1,w_2]$, and thus, we can invert the mapping as $m_e^{-1}((w_1,w_2)) = [v_1,v_2]$. Moreover, notice that if we are not enforcing object identity, more than one vertex in $V_u$ might map to the same vertex $v_2 \in V_t$, when constructing $m^{-1}$ we just need to pick any of those vertices in $V_u$ as the mapping of $v_2$. The same thing might happen with $m_e$. It is trivial to see that this mapping satisfies the subsumption conditions in Definition \ref{def:trans-subsumption-po}.
			\end{itemize}
		\end{itemize} 
	Notice that the previous procedure always terminates in a finite number of steps because at each step we are always decreasing either the number of edges or the number of vertices of the graph (with UR2PO, UR3PO or UR4PO), and never increasing any of them, or making a label more general (with UR0PO or UR1PO) and never making it more specific. Moreover, since the number of edges and vertices are trivially lower-bounded by 0, and the number of times we can make a label more general is bounded by the size of $|L|$, the previous process must terminate in a finite number of steps.
	\end{itemize}
\end{itemize}
\end{proof}


\noindent {\bf Proposition \ref{prop:gamma-tprec-ideal}} {\em
The upward refinement operator  $\gamma_{t\preceq}$ defined by the rewrite rules UR0PO, UR1PO, UR2PO, UR3PO, and UR4PO above is ideal (locally finite, complete, and proper) for the quasi-ordered set $\langle G, \overrightarrow{\sqsubseteq}_{\prec} \rangle$ (where $\overrightarrow{\sqsubseteq}_{\prec}$ represents trans-subsumption relative to the partial order $\prec$), when we impose the Object Identity constraint.
}
\begin{proof}
By Proposition \ref{prop:gamma-tprec-finite-complete} $\gamma_{t\preceq}$ is already locally finite and complete, so, we just need to prove that under object identity, $\gamma_{t\preceq}$ is also proper. As shown in the proof of Proposition \ref{prop:gamma-prec-ideal}, for two graphs $g_u$ and $g_d$ to be equivalent under trans-subsumption, they must have the same exact number of vertices and edges. Since UR2PO, UR3PO, and UR4PO all reduce either the number of vertices or edges, upward refinements of a graph $g$ generated by UR2PO, UR3PO, and UR4PO can never be equivalent to $g$.

By Proposition \ref{prop:gamma-prec-ideal}, upward refinements of a graph $g$ generated by UR0PO, and UR1PO can never be equivalent to $g$ either, and thus we have that $\rho_t\preceq$ is proper.
\end{proof}

\end{document}